\def\R{{\mathbb R}}
\def\N{{\mathbb N}}
\def\Diff{{\operatorname{Diff}}}
\def\Imm{\operatorname{Imm}}
\def\Shape{\mathcal{S}}
\def\D{\mathcal{D}}
\def\I{\mathcal{I}}
\def\argmin{{\operatorname{argmin}}}
\def\dist{{\operatorname{dist}}}
\def\Imm{{\mathcal{I}}}
\def\Sol{{\mathfrak{M}}}
\newcommand{\vol}{\operatorname{vol}}
\journalname{International Journal of Computer Vision}
\begin{document}

\title{Elastic shape analysis of surfaces with second-order Sobolev metrics: a comprehensive numerical framework
\thanks{M. Bauer and E. Hartman  were supported by NSF grants DMS-1912037 and DMS-1953244. M.~Bauer was in addition supported by FWF grant FWF-P 35813-N. Y. Sukurdeep and N. Charon were supported by NSF grants DMS-1945224 and DMS-1953267.}
}

\author{Emmanuel Hartman \and
        Yashil Sukurdeep \and
        Eric Klassen \and
        Nicolas Charon  \and
        Martin Bauer
}

\institute{E. Hartman \at
              Florida State University \\
              Department of Mathematics\\
              \email{ehartman@math.fsu.edu}    
           \and
           Y. Sukurdeep \at
              Johns Hopkins University \\
              Department of Applied Mathematics and Statistics\\
              \email{yashil.sukurdeep@jhu.edu} 
           \and
           E. Klassen \at
              Florida State University \\
              Department of Mathematics\\
              \email{klassen@math.fsu.edu}   
           \and
           N. Charon \at
              Johns Hopkins University \\
              Department of Applied Mathematics and Statistics\\
              \email{ncharon1@jhu.edu}
           \and
           M. Bauer \at
              Florida State University \& University of Vienna\\
              Department of Mathematics\\
              \email{bauer@math.fsu.edu}  
}

\date{}

\maketitle

% ------------------------ ABSTRACT ---------------------------------------
\begin{abstract}
This paper introduces a set of numerical me-thods for Riemannian shape analysis of 3D surfaces within the setting of invariant (elastic) second-order Sobolev metrics. More specifically, we address the computation of geodesics and geodesic distances between parametrized or unparametrized immersed surfaces represented as 3D meshes. Building on this, we develop tools for the statistical shape analysis of sets of surfaces, including methods for estimating Karcher means and performing tangent PCA on shape populations, and for computing parallel transport along paths of surfaces. Our proposed approach fundamentally relies on a relaxed variational formulation for the geodesic matching problem via the use of varifold fidelity terms, which enable us to enforce reparametrization independence when computing geodesics between unparametrized surfaces, while also yielding versatile algorithms that allow us to compare surfaces with varying sampling or mesh structures. Importantly, we demonstrate how our relaxed variational framework can be extended to tackle partially observed data. The different benefits of our numerical pipeline are illustrated over various examples, synthetic and real. 
\keywords{elastic shape analysis, invariant Sobolev metrics, varifold, Karcher mean, parallel transport, partial matching.} % surfaces
\end{abstract}

% ------------------------ INTRODUCTION ---------------------------------------
\section{Introduction}
\label{sec:introduction}

\subsection{Motivation}
\label{ssec:motivation}
Over the past decades, advances in imaging techniques and devices have led to significant growth in the quantity and quality of ``shape data" in several fields, such as biomedical imaging, neuroscience, and medicine. By ``shape data", we mean objects whose predominantly interesting features are of geometric and topological nature; examples of which include functions, curves, surfaces or probability densities. Naturally, this prompted the emergence of new mathematical and algorithmic approaches for the analysis of such objects, which led to the development of the growing fields of geometric shape analysis and topological data analysis, see e.g~\cite{younes2010shapes, srivastava2016functional,Kendall1999,edelsbrunner2022computational,carlsson2014topological,bronstein2021geometric,bronstein2008numerical}. 

In this paper, we will focus on $3$D surface data, which is becoming increasingly prominent in several areas due to the emergence of high accuracy $3$D scanning devices. The domain of geometric shape analysis has produced several mathematical frameworks and numerical algorithms for the comparison and statistical analysis of $3$D surfaces that have proven to be useful in numerous applications~\cite{younes2010shapes,srivastava2016functional,jermyn2017elastic}. In the context of shape analysis of surfaces, we distinguish between two fundamentally different scenarios: the analysis of surfaces with known point correspondences (parametrized surfaces), and that of surfaces where the point correspondences are unknown (unparametrized surfaces). In the discrete case, i.e. for simplicial meshes, working with parametrized surfaces thus involves having known one-to-one correspondences between the surfaces' vertices, which implies in particular that their mesh structures are required to be consistent. There exists a plethora of different numerical frameworks for shape analysis of parametrized surfaces, see e.g.~\cite{jermyn2017elastic,kilian2007geometric,rumpf2015bvariational,iglesias2018shape,pierson2022riemannian} and the references therein. Nevertheless, one rarely ever encounters $3$D surface data with consistent mesh structures in practical applications, and thus methods designed for shape analysis of parametrized surfaces are severely limited when used for applications with real data. This motivates the need for registering unparametrized surfaces, i.e., finding the unknown point correspondences between them, as well as the development of tools to compare unparametrized surfaces, i.e. to quantify similarity between them, and for the statistical analysis of unparametrized surfaces.

While algorithms that are designed for the comparison and statistical analysis of unparametrized surfaces usually lead to a counterpart for dealing with parametrized surfaces, the converse is unfortunately far from being true. In the past, this difficulty has been approached by first registering the data in a pre-processing step under a certain metric or objective function, before subsequently comparing and perform statistical analysis of the data independently of this registration metric~\cite{audette2000algorithmic}.
This practice is, however, being increasingly questioned as it is easy to construct examples where it leads to a severe loss in the data structure, see e.g.~\cite{srivastava2016functional} and the references therein. This motivates the need for a more comprehensive solution, where the registration, comparison and statistical analysis are performed jointly. In the field of geometric shape analysis, this is achieved by viewing the space of parametrized surfaces as an infinite-dimensional manifold and equipping it with a Riemannian metric that is invariant under the action of the reparametrization (registration) group. This invariance implies that the Riemannian metric descends to a metric on the quotient space of unparametrized surfaces, which consequently allows us to perform the registration, surface comparison and ensuing statistical analysis in a unified framework.

\subsection{Related work in Riemannian shape analysis}
\label{ssec:related_work}
The Riemannian approach to shape analysis has several benefits. First, a Riemannian metric models a very natural notion of similarity: a Riemannian metric measures the cost of deformations, and can thus be used to define the distance (similarity) between two surfaces as the cost of the cheapest deformation that transforms one surface (the source) onto the other (the target). Furthermore, a Riemannian framework not only leads to a notion of similarity between pairs of surfaces, but also allows one to compute optimal point correspondences and optimal deformations (called geodesics) between the (aligned) surfaces, cf. Figures~\ref{fig:main} and \ref{fig:point_correspondences} for various examples that are computed with the framework of the present paper. Finally, the Riemannian approach directly allows one to apply the methods of geometric statistics~\cite{pennec2006intrinsic,pennec2019riemannian} to develop a comprehensive statistical framework for shape analysis, cf. the  algorithms developed in Section~\ref{sec:statistical_shape_analysis_surfaces}.

Riemannian metrics on spaces of surfaces come in two flavors: intrinsic metrics, which are defined directly on the surface, and extrinsic metrics, which are inherited from right invariant metrics on the diffeomorphism group of $\mathbb R^3$. Intuitively the first approach corresponds to deforming the surface only, while the second approach applies a deformation of the whole ambient space in which the surface is embedded. The latter, which is inherited from the principles of Grenander's pattern theory~\cite{grenander1996elements}, has notably led to the celebrated LDDMM framework, for which powerful numerical toolboxes have been developed~\cite{beg2005computing,charlier2021kernel}. 

The present paper operates in the intrinsic setup. More specifically, we deal with the class of reparametrization invariant Sobolev metrics on spaces of surfaces. This class of metrics was first introduced in the context of spaces of curves by Michor and Mumford~\cite{michor2007overview}, and Mennucci and Yezzi~\cite{mennucci2008properties}. While these two articles focused on the theoretical properties of these Riemannian metrics, several ensuing numerical frameworks have been developed, see e.g.~\cite{srivastava2011shape,bauer2018relaxed,nardi2016geodesics} and the references therein. Subsequently, this framework has been generalized to the space of surfaces by the last author and collaborators~\cite{bauer2011sobolev,bauer2020fractional}. While several of the theoretical results for these metrics on the space of curves have been generalized for the space of surfaces (e.g., local well-posedness of the geodesic equation, non-vanishing geodesic distance), a comprehensive numerical framework is largely missing. 

The most popular numerical approach for shape analysis of surfaces is based on the Square Root Normal Field (SRNF) framework proposed in~\cite{jermyn2012elastic}. This framework defines the SRNF transformation $\phi$, which is a mapping from the space of surfaces $\Imm$ that takes values in $L^2(S^2,\R^3)$, where $S^2$ is the unit sphere. This mapping can then be used to define a (pseudo) distance function on $\Imm$ via the pullback of the $L^2$ distance. This framework is related to intrinsic Riemannian metrics on surfaces as the resulting (pseudo) distance function is a first-order approximation of the geodesic distance of a particular (degenerate) Sobolev metric of order one~\cite{jermyn2012elastic}. The simplicity of the computation of this pseudo-distance has led to several implementations~\cite{laga2017numerical,bauer2021numerical}, which have been shown to be effective in applications,  see e.g.~\cite{kurtek2014statistical,joshi2016surface,matuk2020biomedical,laga20214d}. 

However, the SRNF framework has several theoretical shortcomings: first, the non-injectivity of $\phi$ implies that the pullback of the $L^2$ metric by $\phi$ is degenerate. Consequently, there arises the phenomenon that distinct shapes are indistinguishable by the SRNF shape distance. This behavior was originally studied in \cite{klassen2019closed} and was further discussed in \cite{bauer2021square}, where it was shown that for each closed surface there exists a convex surface which is indistinguishable by the SRNF distance. Moreover, the image of $\Imm$ via $\phi$ is not convex, which implies that the SRNF distance is indeed only a first-order approximation of a geodesic distance function rather than a true geodesic distance on $\Imm$, i.e., the SRNF distance does not come from geodesics (optimal deformations) in $\Imm$. Furthermore, the problem of inverting the SRNF transformation $\phi$ to recover an optimal deformation in $\Imm$ from a geodesic in $L^2(S^2, \R^3)$ is highly ill-posed. 

Consequently, to overcome the theoretical challenges of the SRNF pseudo-distance, it is natural, instead, to consider the reparametrization invariant Sobolev metrics mentioned previously. In~\cite{su2020shape}, a first step towards obtaining a more general numerical framework was ach-ieved: the authors proposed a numerical framework for a family of first-order Sobolev metrics. The main drawback of this framework is the requirement for the data to be given by a spherical parametrization, which severely limits its applicability in practical contexts (see the comments below). In addition, numerical experiments suggested that it would be beneficial to consider metrics that involve higher-order terms to prevent the occurrence of certain numerical instabilities. Prior to the present paper, and to the best of the authors' knowledge, no implementation of more general (higher) order metrics was available.

The major difficulty in the implementation of these Riemannian frameworks is the discretization of the rep-arametrization group. In
\cite{laga2017numerical,su2020shape}, the authors used a discretization via spherical harmonics. These methods provide a relatively fast and stable approach for solving the registration of two spherical surfaces but requires that the surfaces are of genus-zero and are given by their spherical parametrizations. However, in most applications, data is typically given as triangular meshes that are not a priori homeomorphic to $S^2$. As the reparametrization problem is highly non-trivial and computationally expensive, a better approach for working with real data consists of developing methods that deal directly with triangular meshes.  

\begin{figure}
    \centering
    \includegraphics[width=.49\textwidth]{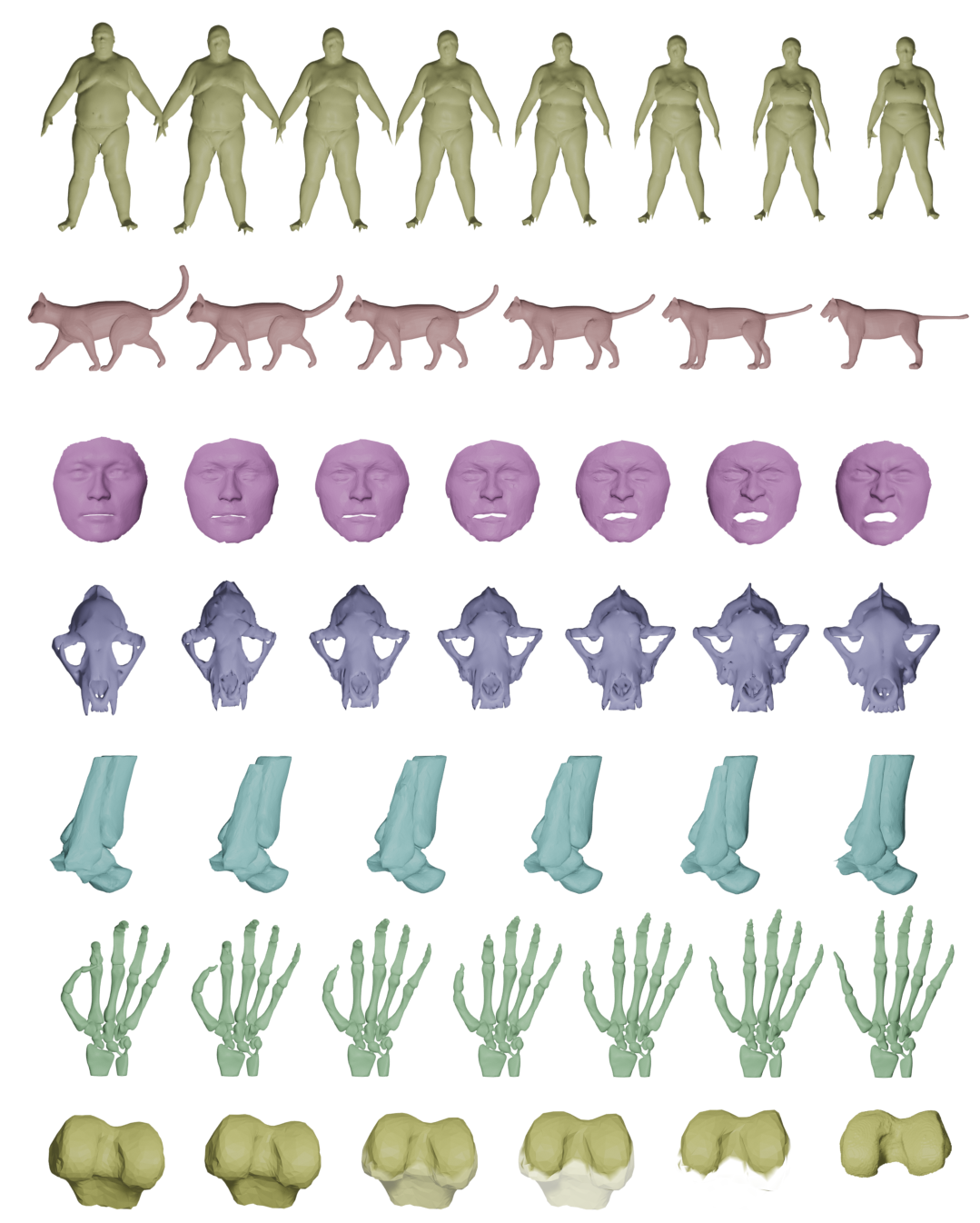}
   
\begin{tabular}{ m{1cm}m{1cm}m{1cm}m{1cm}m{1cm}m{1cm}}
\phantom{hh}source&\phantom{hh}$q(0)$&\phantom{h}$q(1/3)$&$\; q(2/3)$&\phantom{h} $q(1)$&target
\end{tabular}
    \caption{Examples of optimal deformations (geodesics) between different types of data with unknown point correspondences: genus zero surfaces (line 1 and 2), higher genus surfaces with boundaries  and inconsistent topologies (line 3 and 4), shape complexes (line 5 and 6), partial matching (line 7). Animations of the obtained surface deformations can be found in the supplementary material and on the github repository.}
        \label{fig:main}
\end{figure}

\begin{figure}
    \centering
    \includegraphics[width=.40\textwidth]{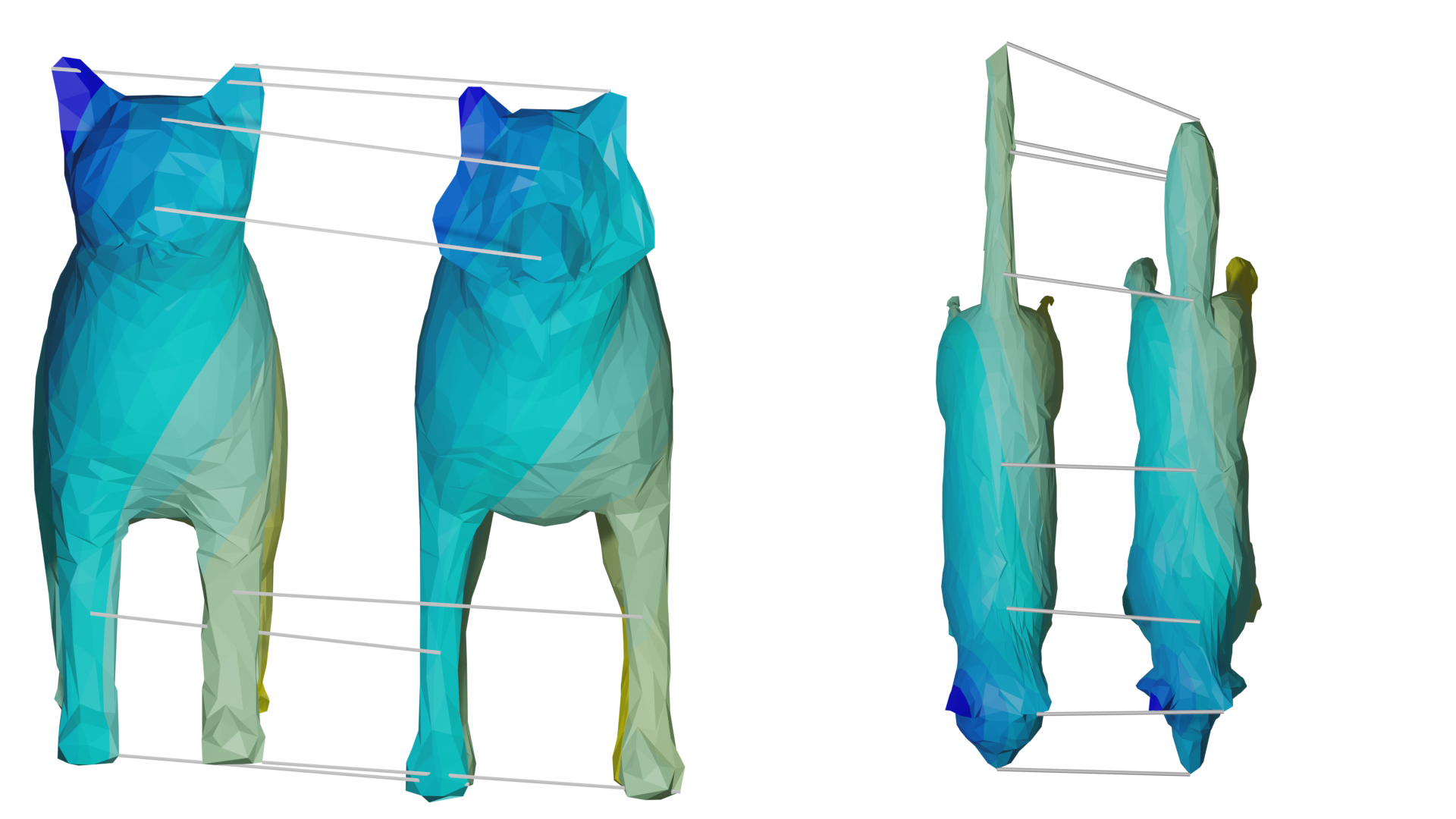}
    \caption{Point correspondences obtained after matching two unparametrized surfaces: the coloring of the two surfaces encode the obtained point correspondences. In addition, we highlight the obtained matching for selected points by displaying connecting lines.}
       \label{fig:point_correspondences}
\end{figure}

Inspired by the use of tools from geometric measure theory and in particular by varifold norms with the LDDMM model, \cite{bauer2021numerical} proposed a varifold matching framework to register surfaces with respect to the SRNF pseudo-distance. This approach provides several benefits: notably, the reparametrization group does not need to be discretized and its action on $\Imm$ does not need to be implemented. This allows one to work with simplicial meshes without having to first produce spherical parametrizations. Moreover, it extends to the analysis of surfaces with more general topologies with or without boundaries. Yet, this framework still suffers from the theoretical disadvantages of the SRNF pseudo-distance discussed above and it has been observed that the degeneracy of the distance can also lead to important numerical artefacts (c.f. Figure \ref{fig:spikes} below). Consequently, it seems natural to combine this framework with more general Riemannian metrics on $\Imm$, which is one of the main contributions of the present article, as we explain in the following section. 
%This primarily manifests as regions of the the source surface being mapped to thin spikes on the target which do not have significant influence on either the varifold distance or the SRNF due to the low area of the spikes c.f. Fig.~\ref{fig:spikes}. 

\subsection{Contributions}
\label{ssec:contributions}
The central contribution of the present paper is the development of an open-source numerical framework for the statistical shape analysis of surfaces (triangular meshes) under second-order reparametrization invariant Sobolev metrics. In addition, our framework allows one to deal with topologically inconsistent and/or partially observed data.
The code is available on github:
\begin{center}
\small \url{https://github.com/emmanuel-hartman/H2_SurfaceMatch}
\end{center}
Towards this end, we extended the relaxed varifold mat-ching framework of \cite{bauer2021numerical} to compute the geodesic distance with respect to a reparametrization invariant second-order Sobolev metric on $\Imm$ and introduce a natural discretization of this metric for triangular meshes. This framework is the first implementation of higher-order Sobolev metrics on parametrized and unparametrized surfaces. %Moreover, our framework offers the first implementation of a reparametrization invariant Riemannian metric that operates directly on triangular meshes. While \cite{bauer2021numerical} operates directly on triangular meshes, the SRNF shape pseudo-distance does not correspond to a Riemannian metric. 
In contrast with \cite{bauer2021numerical}, our framework directly produces geodesics (i.e. the optimal deformation path), and the addition of higher-order terms prevents the formation of numerical artefacts as mentioned above. By splitting the metric into separate terms, we are also able to control the geometric changes penalized by the metric. This allows us to model different deformations as well as control the regularizing effects of the higher-order terms, thus making our framework versatile for a variety of applications.

In addition to providing a framework for surface matching, we develop a comprehensive statistical pipeline for the computation of Karcher means, tangent principal component analysis, and parallel transport. As an application of the latter, we demonstrate how it can be used for motion transfer between surfaces. Thus, our framework is well adapted to the statistical analysis of populations of shapes such as the ones appearing in biomedical applications. To further improve the robustness of our proposed methods, we also implement a weighted varifold matching framework by extending the idea proposed in the context of curves and shape graphs by \cite{sukurdeep2021new}. The joint estimation of weights on the source surface enables this augmented model to deal more naturally with partial matching constraints or missing parts in the target shape, or differences in topology between the two shapes.

%Our computational framework consists of finding discrete paths in the space of triangular meshes with a fixed combinatorial structure and minimizing the path energy and the varifold distance of the endpoints producing discretized geodesics that are approximately orthogonal to the fibers of the Riemannian submersion. 

\subsection{Outline}
Our paper is structured as follows: In Section~\ref{sec:sobolev_metrics_surfaces}, we introduce the family of $H^2$-metrics (second-order Sobolev metrics) on the space of parametrized and unparametrized surfaces. In Section~\ref{sec:relaxed_matching_problem}, we formulate a varifold-based relaxed matching problem that allows us to estimate geodesics and distances induced by $H^2$-metrics on the shape space of unparametrized surfaces. We then describe a set of numerical approaches for the computation of these geodesic and distance estimates in Section~\ref{sec:numerical_optimization_approach}, before leveraging these algorithms for the development of more general tools for the statistical shape analysis of sets of surfaces in Section~\ref{sec:statistical_shape_analysis_surfaces}. Finally, we extend our second-order elastic surface analysis framework to the setting of surfaces which may have incompatible topological properties or exhibit partially missing data in Section~\ref{sec:partial_matching}.

% ----------------- SOBOLEV METRICS ON SURFACES -----------------------------
\section{Sobolev metrics on surfaces}
\label{sec:sobolev_metrics_surfaces}
In this section, we introduce the theoretical background on second-order elastic Sobolev Riemannian metrics for spaces of parametrized and unparametrized surfaces, which will provide the key ingredient of our statistical framework for shape analysis of surfaces.

\subsection{Metrics on spaces of parametrized surfaces}
\label{ssec:parametrized_surfaces}
We begin by introducing the main definitions and known theoretical results on second-order Sobolev metrics for spaces of parametrized surfaces in $\R^3$, which we shall rely on for the remainder of this paper. Let $M$ denote a $2$-dimensional compact manifold, possibly with boundary, whose local coordinates are denoted by $(u,v) \in \R^2$. A \textit{parametrized immersed surface} in $\R^3$ is an oriented smooth mapping $q \in C^{\infty}(M, \R^3)$, which in addition, we assume to be regular, i.e., we require its differential $dq$ to be injective at every point of $M$. The set of all such parametrized surfaces, which we denote by $\I$, is itself an infinite-dimensional manifold, where the tangent space at any $q \in \I$, denoted $T_q \I$, is given by $C^{\infty}(M,\R^3)$. Any such tangent vector $h\in T_q\I$  can be thought of as a vector field along the surface $q$.

Next we introduce the \textit{reparametrization group} $\D$, which is the group of orientation-preserving diffeomorphisms of $M$, i.e., the space of all $\varphi \in C^{\infty}(M)$ such that $\operatorname{det}(D \varphi (u,v)) > 0$ for all $(u,v)$ and $\varphi^{-1} \in C^{\infty}(M)$, where $D \varphi$ denotes the differential (or Jacobian) of the diffeomorphism $\varphi$. For any immersed surface $q \in \I$ and $\varphi \in \D$, we say that $q \circ \varphi \in \I$ is a \textit{reparametrization} of $q$ by $\varphi$.

Our goal is to equip the manifold $\I$ with a Riemannian metric that will subsequently enable us to develop a framework for the comparison and statistical shape analysis of surfaces. Recall that any Riemannian metric $G$ on $\I$ induces a (pseudo) distance on this space, which is given for any two parametrized surfaces $q_0, q_1 \in \I$ by
\begin{equation}
\label{eq:geodesic_dist_parametrized}
\operatorname{dist}^G(q_0,q_1)=\underset{q(\cdot) \in \mathcal P_{q_0}^{q_1}} {\operatorname{inf}} \int_0^1\sqrt{G_{q(t)}(\partial_t q(t),\partial_t q(t))} dt,
\end{equation}
with the infimum being taken over the space of all paths of immersed surfaces connecting $q_0$ and $q_1$, which we write as:
\begin{equation}
\label{eq:pathspace}
\mathcal P_{q_0}^{q_1}:= \left\{q(\cdot) \in C^{\infty}([0, 1], \I): q(0) = q_0, q(1) = q_1\right\},  
\end{equation} 
with $\partial_t q(t)$ denoting the derivative with respect to $t$ of this path. In finite dimensions this distance, which is called the geodesic distance, is always non-degenerate, i.e., a true distance. In our infinite-dimensional setting it can, however, be degenerate~\cite{michor2005vanishing}.  

As our main goal will be the analysis of unparametrized surfaces, we will require our Riemannian metric to be invariant under the action of the aforementioned reparametrization group $\mathcal D$, i.e., we require $G$ to satisfy
\begin{equation}\label{eq:metric_inv}
G_q(h,k)=G_{q\circ\varphi}(h\circ\varphi,k\circ\varphi)    
\end{equation}
for all $q\in \mathcal I$, $h,k\in T_q\mathcal I$ and $\varphi \in \mathcal D$, which will imply that the induced geodesic distance as defined in~\eqref{eq:geodesic_dist_parametrized} satisfies
\begin{equation}
\operatorname{dist}^{G}(q_0,q_1)=\operatorname{dist}^{G}(q_0\circ\varphi,q_1\circ\varphi)
\end{equation}
for all $q_0, q_1 \in \I$ and $\varphi \in \D$. This will later allow us to consider the induced Riemannian metric (and distance function) on the quotient space of unparametrized surfaces, cf. Section~\ref{ssec:unparametrized_surfaces}.  

The simplest and potentially most natural such metric is the reparametrization invariant $L^2$-metric, which is given by
\begin{equation}
\label{eq:L2metric}
G_q(h,k)=\int_M \langle h,k \rangle \vol_q,
\end{equation}
where $\vol_q$ is the surface area measure of the immersion $q$, which in local coordinates $(u,v)$ is given by 
\begin{equation*}
\vol_q=|q_u\times q_v|dudv,
\end{equation*}
where the subscripts denote partial derivatives, $\times$ denotes the cross product on $\R^3$, and $|\cdot|$ denotes the norm on $\R^3$. This Riemannian metric is, however, not useful for any application in shape analysis, as it results in vanishing geodesic distance on both the spaces of parametrized and unparametrized surfaces~\cite{michor2005vanishing,bauer2012vanishing}. Vanishing geodesic distance refers to the phenomenon where the geodesic distance induced by the $L^2$-metric between any two surfaces is zero. 

Consequently, we are interested in stronger Riemannian metrics that induce meaningful distances. A natural approach to strengthen the metric consists of incorporating derivatives of the tangent vector, leading to the class of first-order Sobolev metrics.
Therefore we let $g_q=q^*\langle \cdot , \cdot \rangle$ be the pullback metric of the Euclidean metric on $\mathbb R^3$, see Fig.~\ref{fig:pullback} for an explanation of this construction.
\begin{figure}
    \centering
    \includegraphics[width=.5\textwidth]{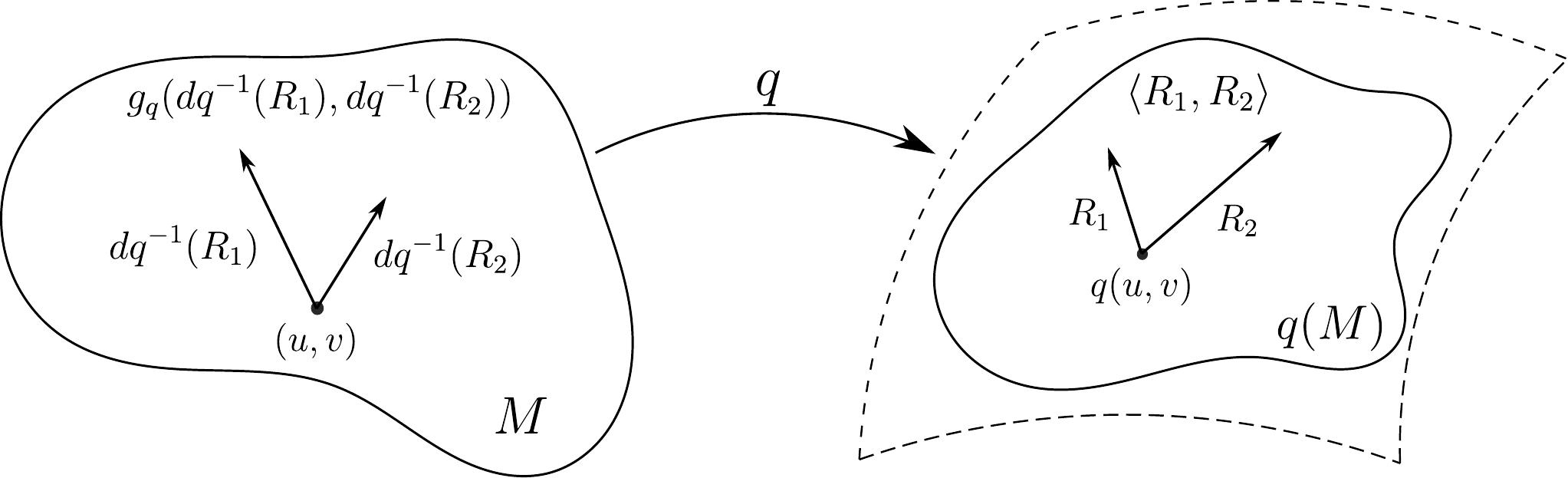}
    \caption{The induced pullback metric on $M$ of an immersion $q: M\to \mathbb R^3$.}
    \label{fig:pullback}
\end{figure}

A first-order Sobolev metric is then given by
\begin{equation}
\label{eq:H1metric}
G_q(h,h)=\int_M \langle h,h \rangle +g_q^{-1}(dh,dh)\vol_q.
\end{equation}
To interpret the first-order term $g_q^{-1}(dh,dh)$, we view the differential $dh$ as a vector valued one form, i.e., as a map from $TM$ to $\mathbb R^3$. Then the inverse of the pullback metric $g_q^{-1}$  can be used to pair such mappings. To understand this pairing better, we can fix a set of coordinates and view all the involved objects as matrix fields. Then we have 
\begin{equation}
g_q^{-1}(dh,dh)=\operatorname{tr}(dh.g_q^{-1}.dh^T), 
\end{equation}
where $dh^T$ denotes the point wise transpose of the matrix field $dh$. 
By the results of~\cite{bauer2011sobolev}, we know that this metric indeed overcomes the degeneracy of the $L^2$-metric, i.e., the corresponding geodesic distance function is non-degenerate.

Next we further decompose the first-order term into four different terms which each have a geometric interpretation. Therefore, we write
\begin{equation}
dh= dh_m+dh_++dh_\perp+dh_0,
\end{equation}
where
\begin{align*}
dh_m & =  \frac12dq g_q^{-1}(dq^Tdh+dh^Tdq) - \frac12\operatorname{tr}(g_q^{-1}dq^Tdh)dq\\
dh_+&=\frac{1}{2}\operatorname{tr}(g_q^{-1}dq^Tdh)dq\\
dh_{\perp} &= dh - dq g_q^{-1}dq^Tdh\\
dh_0 &= \frac12dq g_q^{-1}(dq^Tdh-dh^Tdq).
\end{align*}
A straight-forward calculation shows that these terms are orthogonal with respect to the inner product \begin{equation*}
\int_M g_q^{-1}(\cdot,\cdot)\vol_q,
\end{equation*} 
see \cite{su2020shape}. Consequently we have:
\begin{align*}
 &\int_M g_q^{-1}(dh,dh)\vol_q\\
 &\qquad=
\int_M g_q^{-1}(dh_m,dh_m)\vol_q +\int_M g_q^{-1}(dh_+,dh_+)\vol_q\\&\qquad+\int_M g_q^{-1}(dh_\bot,dh_\bot)\vol_q+
\int_M g_q^{-1}(dh_0,dh_0)\vol_q.
\end{align*}
The geometric meaning of the first three terms becomes clear in the following result:
\begin{remark}[Su et. al.~\cite{su2020shape}]\label{rem:firstorderterms}
\emph{Let $q\in \mathcal I$ and $h\in T_q \mathcal I$. 
The term 
\begin{equation*}
\int_M g_q^{-1}(dh_m,dh_m)\vol_q
\end{equation*}
measures the change of the pull-back metric $g_q$ while keeping the volume form constant (shearing). The second term 
\begin{equation*}
\int_M g_q^{-1}(dh_+,dh_+)\vol_q
\end{equation*}
measures the change of the volume density $\vol_q$ (scaling), while the third term 
\begin{equation*}
\int_M g_q^{-1}(dh_\bot,dh_\bot)\vol_q
\end{equation*}
measures the change in the normal vector $n_q$ (bending).}
\end{remark}
The interpretation of the last summand is less clear: it can be thought of as measuring the deformation of the local parametrization by a rotation in the parameter space $M$. 
\begin{remark}
The class of first-order Sobolev (pseudo-) metrics, i.e. metrics obtained as weighted combinations of the four first-order terms discussed above, have often been referred to as \textit{elastic metrics} in the shape analysis literature \cite{jermyn2012elastic,jermyn2017elastic}. Beyond just the mere high level analogy of these metrics measuring some form of bending or stretching energies, it turns out that these can in fact be more precisely connected to classical linear elasticity, specifically as the thin shell limit of the elastic energy of a layered isotropic material. For the purpose of concision, we will not elaborate on this particular point in this paper, but this connection will be highlighted in more details in an upcoming preprint.   
\end{remark}

The above considerations suggest that metrics of this form provide a meaningful class of metrics for shape analysis of surfaces: they overcome the degeneracy of the $L^2$-metric and admit a physical interpretation of the different terms involved. There is, however, numerical evidence that these first order metrics are still too weak for our targeted applications, see the experiments in Figure~\ref{fig:spikes}. Thus we will augment the metric with a further higher-order term involving the Laplacian $\Delta_q$ induced by the immersion $q$, which using Einstein summation is given in local coordinates $(u,v)$  by
\begin{equation*}\Delta_qh= \frac{1}{\sqrt{|g_q|}}\partial_u\left(\sqrt{|g_q|}g_q^{uv}\partial_v h \right),
\end{equation*}
where $|g_q|$ denotes the determinant of the pullback metric in the local coordinate frame. 

This allows us to define a second-order term via
\begin{equation}
\label{eq:H2term}
\int_M \langle\Delta_q h,\Delta_q h\rangle\vol_q.
\end{equation}
By adding up all the zero, first and second-order terms, we arrive at the main object of the present article: the family of $H^2$-metrics (second-order Sobolev Riemannian metrics) for surfaces, which is given by:

\begin{framed}
\begin{equation}
\label{eq:H2metric}
\begin{aligned}
&G_q(h,k)=\int_M\bigg( a_0 \langle h,k \rangle +
a_1 g_q^{-1}(dh_m,dk_m)\\&\qquad\qquad +b_1g_q^{-1}(dh_+,dk_+)+ c_1g_q^{-1}(dh_\bot,dk_\bot)\\&\qquad\qquad+ d_1 g_q^{-1}(dh_0,dk_0)
+a_2 \langle\Delta_q h,\Delta_q k\rangle\bigg)\vol_q.
\end{aligned}
\end{equation}
\end{framed}

Here $a_0, a_1, b_1, c_1, d_1, a_2$ are non-negative weighting coefficients for the different terms in the metric. Note that this family incorporates the Riemannian metric corresponding to the SRNF (pseudo) distance~\cite{jermyn2017elastic} and the families of elastic Riemannian metrics as proposed by Jermyn et. al.~\cite{jermyn2012elastic} and Su et. al. \cite{su2020shape}. For a general treatment of properties of Sobolev metrics we refer to the article~\cite{bauer2011sobolev}, and for a detailed explanation of the influence of these coefficients on our numerical experiments, see the discussion in Section~\ref{ssec:influence_constants}.

The following result, which summarizes the invariances of our family of metrics, ensures that the metric descends to quotient spaces with respect to the corresponding group actions: 
\begin{lemma}
\label{lem:invariances}
The family of $H^2$-metrics $G$ is invariant under the action of the group of reparametrizations $\mathcal D$, the group of rotations $\operatorname{Rot}(\R^3)$ and the group of translations $\R^3$, i.e., for any $q \in \I$, $h,k \in T_q \I$, $R \in \operatorname{Rot}(\R^3)$ and $\tau \in \R^3$ we have
\begin{equation}
G_q(h,k) = G_{R(q\circ\varphi)+\tau} (R(h\circ\varphi),R(k\circ\varphi)) .  
\end{equation}
It follows that  geodesic distance is also preserved by these transformations.
\end{lemma}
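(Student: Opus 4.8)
\emph{Proof strategy.} The plan is to establish the three invariances (under $\D$, $\operatorname{Rot}(\R^3)$ and $\R^3$) separately and then compose them, tracking in each case how the elementary building blocks transform: $dq$, the pullback metric $g_q$, the volume form $\vol_q$, the Laplacian $\Delta_q$, and a tangent vector. Throughout, the natural identification of $T_q\I$ with the tangent space at the transformed surface sends $h\mapsto h$ under a translation, $h\mapsto Rh$ under a rotation $R$, and $h\mapsto h\circ\varphi$ under a reparametrization $\varphi$ --- which is exactly the pattern appearing in the statement. It then suffices to check each of the six summands in \eqref{eq:H2metric}.

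\emph{Translations and rotations.} For $\tilde q=q+\tau$ with $\tau\in\R^3$ constant one has $d\tilde q=dq$, hence $g_{\tilde q}=g_q$, $\vol_{\tilde q}=\vol_q$, $\Delta_{\tilde q}=\Delta_q$; since every summand of \eqref{eq:H2metric} depends on $q$ only through $dq$ and its derivatives, never through $q$ itself, invariance is immediate. For $\tilde q=Rq$ with $R\in\operatorname{Rot}(\R^3)$ one has $d\tilde q=R\,dq$ and therefore $g_{\tilde q}=dq^{T}R^{T}R\,dq=g_q$ because $R^{T}R=\mathrm{Id}$; consequently $\vol_{\tilde q}=\vol_q$ and, as $\Delta_q$ depends only on $g_q$ and acts componentwise, $\Delta_{\tilde q}=\Delta_q$. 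Orthogonality of $R$ gives $\langle Rh,Rk\rangle=\langle h,k\rangle$, and since $R$ is a constant matrix, $\langle\Delta_{\tilde q}(Rh),\Delta_{\tilde q}(Rk)\rangle=\langle R\Delta_q h,R\Delta_q k\rangle=\langle\Delta_q h,\Delta_q k\rangle$. For the four first-order terms, a direct substitution into the defining formulas shows the splitting is equivariant, $d(Rh)_m=R\,dh_m$ and likewise for $dh_+$, $dh_\perp$, $dh_0$ (every factor $R^{T}R$ collapses); then, writing $g_q^{-1}(A,B)=\operatorname{tr}(A\,g_q^{-1}B^{T})$ for matrix-valued one-forms $A,B$, the cyclicity of the trace together with $R^{T}R=\mathrm{Id}$ yields $g_{\tilde q}^{-1}(RA,RB)=g_q^{-1}(A,B)$. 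Summing the six terms gives rotation invariance.

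\emph{Reparametrizations.} Write each term of $G_q(h,k)$ as $\int_M\mathcal L_q(h,k)\,\vol_q$ for a pointwise Lagrangian $\mathcal L_q(h,k)\in C^{\infty}(M)$. Since $\vol_{q\circ\varphi}=\varphi^{*}\vol_q$ as densities on $M$, the change-of-variables formula $\int_M F\,\varphi^{*}\vol_q=\int_M (F\circ\varphi^{-1})\vol_q$ reduces the claim to the pointwise identities $\mathcal L_{q\circ\varphi}(h\circ\varphi,k\circ\varphi)=\mathcal L_q(h,k)\circ\varphi$ for each of the six Lagrangians. This is clear for $\langle h,k\rangle$, and for $\langle\Delta_q h,\Delta_q k\rangle$ it follows from the naturality of the Laplace--Beltrami operator, $\Delta_{\varphi^{*}g_q}(h\circ\varphi)=(\Delta_{g_q}h)\circ\varphi$, applied componentwise. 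For the first-order terms the chain rule gives $d(q\circ\varphi)=(dq\circ\varphi)D\varphi$, $d(h\circ\varphi)=(dh\circ\varphi)D\varphi$, hence $g_{q\circ\varphi}=D\varphi^{T}(g_q\circ\varphi)D\varphi$; plugging these into the defining formulas for $dh_m,dh_+,dh_\perp,dh_0$ one checks that all interior factors $D\varphi\,D\varphi^{-1}$ cancel, leaving $d(h\circ\varphi)_m=(dh_m\circ\varphi)D\varphi$ and similarly for the other three pieces, after which $D\varphi\,g_{q\circ\varphi}^{-1}D\varphi^{T}=g_q^{-1}\circ\varphi$ with cyclicity of the trace gives $g_{q\circ\varphi}^{-1}(d(h\circ\varphi)_m,d(k\circ\varphi)_m)=g_q^{-1}(dh_m,dk_m)\circ\varphi$. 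A more conceptual route: $\int_M g_q^{-1}(d\cdot,d\cdot)\vol_q$ is the standard diffeomorphism-invariant first-order Dirichlet energy, the four subspaces of the splitting are defined intrinsically from $g_q$ and $dq$, and precomposition by $\varphi$ is an isometry of the inner product $\int_M g_q^{-1}(\cdot,\cdot)\vol_q$ carrying each subspace onto its counterpart for $q\circ\varphi$, so it commutes with the orthogonal projections; this is exactly the setting of the orthogonality statement attributed to Su et al.\ above.

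\emph{Conclusion, the distance statement, and the main obstacle.} Composing the three invariances yields invariance of $G$ under $q\mapsto R(q\circ\varphi)+\tau$. For the geodesic distance, given any path $q(\cdot)\in\mathcal P_{q_0}^{q_1}$ set $\tilde q(t)=R(q(t)\circ\varphi)+\tau$; since $\varphi,R,\tau$ do not depend on $t$ and $R$ is linear, $\partial_t\tilde q(t)=R(\partial_t q(t)\circ\varphi)$, so the metric invariance gives $G_{\tilde q(t)}(\partial_t\tilde q(t),\partial_t\tilde q(t))=G_{q(t)}(\partial_t q(t),\partial_t q(t))$ for all $t$; as $q(\cdot)\mapsto\tilde q(\cdot)$ is a bijection $\mathcal P_{q_0}^{q_1}\to\mathcal P_{\tilde q_0}^{\tilde q_1}$, the length functionals agree pathwise and hence the infima in \eqref{eq:geodesic_dist_parametrized} coincide. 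The only genuinely non-trivial step in all of this is the compatibility of the splitting $dh=dh_m+dh_++dh_\perp+dh_0$ with the reparametrization action (and, to a lesser degree, with rotations); everything else is bookkeeping, and this step is dispatched either by the explicit cancellation of the $D\varphi$-factors or, more cleanly, via the intrinsic characterization of the four subspaces noted above.
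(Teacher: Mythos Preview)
Your proof is correct and follows essentially the same approach as the paper: the paper's proof is a two-sentence sketch stating that rotation and translation invariance holds because every term of the metric is individually invariant, and that reparametrization invariance follows from the substitution formula for integration. You have simply fleshed out these claims in detail, tracking how $dq$, $g_q$, $\vol_q$, $\Delta_q$, and the splitting of $dh$ behave under each action, and then deduced the distance statement; the only point the paper leaves entirely implicit is the compatibility of the four-part decomposition with the group actions, which you address explicitly.
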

\begin{proof}
The invariance to the finite dimensional groups of rotations and translations follows by the fact that all the terms of the metric are invariant under this action. The invariance under the action of the infinite-dimensional group of reparametrization $\mathcal D$ follows from an application of the substitution formula for integration.\qed
\end{proof}

Having defined the class of $H^2$-metrics, we can now formulate the two main building blocks for our framework for the comparison and statistical shape analysis of surfaces: the geodesic boundary problem and the geodesic initial value problem.
\paragraph{Geodesic boundary value problem:} Given two parametrized surfaces $q_0$ and $q_1$, the geodesic boundary value problem consists in finding paths of shortest length that connect the given surfaces $q_0$ and $q_1$, i.e., calculating the geodesic distance between $q_0$ and $q_1$. Here the Riemannian length of a path $q: [0,1]\to \mathcal I$ is defined as 
\begin{equation}
 L(q):=\int_0^1\sqrt{G_{q(t)}(\partial_t q(t),\partial_t q(t))} dt.
\end{equation}
Paths of minimal length are called minimizing geodesics. By a standard result in Riemannian geometry~\cite{lang2012fundamentals}, finding minimizing geodesics is equivalent to minimizing the  Riemannian energy:
\begin{equation}
 E(q):=\frac12 \int_0^1G_{q(t)}(\partial_t q(t),\partial_t q(t)) dt.
\end{equation}
In Section~\ref{sec:numerical_optimization_approach} we will explain how to discretize this functional for discrete meshes, which will in turn allow us to solve the minimization problem using standard finite dimensional optimization methods. Note that the solution of the geodesic boundary value problem gives rise to both optimal (i.e., energy-minimizing) deformations as well as a notion of a distance between the given shapes. Thus, this operation will be the main building block of all our algorithms. 

\paragraph{Geodesic initial value problem:} While the geodesic boundary value problem searches for the shortest path between two given surfaces, the geodesic initial value problem searches for the optimal deformation path of a given surface in a given initial deformation direction. Solving the geodesic initial value problem amounts to solving the geodesic equation, which is the first-order optimality condition of the energy functional defined above. In our situation, the geodesic equation will be a  non-linear partial differential equation that is of second-order in time and fourth-order in the two-dimensional space coordinates. As this equation is rather lengthy and not particularly insightful, we refrain from formulating it here and instead refer the interested reader to the article~\cite{bauer2020fractional}, where the geodesic equation is derived for a general class of Riemanniann metrics on $\mathcal I$ that are induced by abstract pseudo-differential operators and thus include in particular the class of metrics studied in the present work. In addition,~\cite{bauer2020fractional} established local well-posedness (existence and uniqueness) of the corresponding (geodesic) initial value problem. To circumvent dealing directly with the intricacies and difficulty of solving highly non-linear and higher-order partial differential equations, we instead calculate the solution to the initial value problem using the methods of discrete geodesic calculus as developed in~\cite{rumpf2015variational}; see Section~\ref{sec:numerical_optimization_approach} for a detailed description. 

In the context of our statistical shape analysis framework for surfaces, the geodesic initial value problem will be of importance for calculating shape averages, for principal component analysis and in our motion transfer applications.

\subsection{Metrics on unparametrized surfaces}
\label{ssec:unparametrized_surfaces}
In the previous section we introduced a class of Riemannian metrics on the space of parametrized surfaces. Our main goal is, however, to compare surfaces regardless of how they are parametrized. To this end, we introduce the space of \textit{unparametrized surfaces}, which is defined as the quotient space of parametrized immersed surfaces modulo the reparametrization group, i.e., $\Shape=\I/\D$, and refer to it as shape space. This space consists of equivalence classes $[q]=\{q\circ\varphi; \varphi \in \D\}$. 

Since the family of $H^2$-metrics on $\I$ introduced in~\eqref{eq:H2metric} is reparametrization invariant, cf. Lemma~\ref{lem:invariances}, it induces a corresponding family of Riemannian metrics on the quotient space; such a construction is referred to as a Riemannian submersion, see~\cite{bauer2011sobolev} for a detailed explanation in the context of Sobolev metrics on surfaces. Consequently the geodesic distance $\operatorname{dist}^G$ given in~\eqref{eq:geodesic_dist_parametrized} corresponding to an $H^2$-metric $G$ descends to a distance function on the quotient shape space which is given as follows:
\begin{equation}
\label{eq:geodesic_dist_unparametrized}
\operatorname{dist}^{\Shape}([q_0],[q_1])=\inf_{\varphi\in \D} \operatorname{dist}^{G}(q_0,q_1\circ\varphi).
\end{equation}
By expanding the expression above, one notes that for given surfaces $[q_0]$ and $[q_1]$, computing the geodesic distance can be written as the following constrained minimization problem:
\begin{multline}
\label{eq:dist_joint_minimization}
\operatorname{dist}^{\Shape}([q_0],[q_1]) = \\
\inf_{\varphi\in \D} \underset{q(\cdot)\in \mathcal P_{q_0}^{q_1\circ\varphi}}{\operatorname{inf}} \int_0^1 G_{q(t)}(\partial_t q(t),\partial_t q(t)) dt,
\end{multline} 
where the space of paths of immersed surfaces $P_{q_0}^{q_1\circ\varphi}$ is defined in~\eqref{eq:pathspace}. In practice, computing this distance thus requires solving a \textit{matching problem} that consists of finding the optimal reparametrization and optimal path of immersions between the surfaces. We refer to the constrained minimization problem in~\eqref{eq:dist_joint_minimization} as the \textit{geodesic boundary value problem on shape space}. Compared to the matching problem for parametrized surfaces, the main difficulty in terms of numerically solving this problem consists of discretizing the action of the reparametrization group $\mathcal D$. We will circumvent this issue by introducing a relaxed version
of~\eqref{eq:dist_joint_minimization}, which will make use of methods from geometric measure theory, cf. Section~\ref{sec:relaxed_matching_problem}.

While the geodesic boundary value problem on shape space is significantly more challenging than its counterpart on parametrized surfaces, it turns out that the geodesic initial value problem for these two spaces is essentially equivalent: solving the geodesic initial value problem on the space of \textit{parametrized} surfaces for an initial condition that is in the tangent space of shape space, called a horizontal initial condition, gives rise to a solution in the space of \textit{unparametrized} surfaces. This observation follows from powerful results in Riemannian geometry and in particular from the conservation law for the horizontal initial momentum, which stems from the reparametrization invariance of the Riemannian metric~\cite{bauer2011sobolev}. Consequently, this will allow us to use the  same methods for solving the initial value problem on the space of parametrized and unparametrized surfaces, which will be described in Section~\ref{sec:numerical_optimization_approach}.

\begin{remark}
We can also consider the space of unparametrized surfaces modulo
rotations and translation. Since the class of $H^2$-metrics is also
invariant with respect to these finite dimensional group actions, cf. Lemma~\ref{lem:invariances}, it descends to a class of Riemannian metrics on this quotient space. Computing the induced geodesic distance on this quotient space involves an additional minimization over the rotation group $\operatorname{Rot}(\R^3)$ %$SO(3)$ 
and over the translation group $\R^3$ in addition to minimizing over the reparametrization group and over the space of paths of immersed surfaces.
\end{remark}

% ----------------- RELAXED MATCHING PROBLEM -----------------------------
\section{Relaxed matching problem}
\label{sec:relaxed_matching_problem}
We now focus our attention on the actual computation of the geodesic distance on the shape space of surfaces. As outlined in the previous section via equation~\eqref{eq:dist_joint_minimization}, this computation involves a joint optimization over paths of immersed surfaces and reparametrizations.  In practice, the space of parametrized surfaces $\I$, and hence the path of immersions, can be discretized by considering a triangular mesh as the domain $M$ of the function space $\I$, and considering piece-wise linear functions defined on $M$, which gives rise to triangulated surfaces, as outlined in~\cite{bauer2021numerical}. More general discretizations schemes, such as spline discretizations, could be used as well. Discretizing the paths of immersed surfaces implies that the minimization over those paths can be framed quite naturally as a standard finite dimensional optimization problem, as we will outline in Section~\ref{sec:numerical_optimization_approach}. However, dealing with the minimization over the infinite-dimensional reparametrization group is typically more difficult, and discretizing such a group and its action on surfaces is not straightforward. Recently, an alternative approach was proposed in~\cite{bauer2021numerical} where this minimization over reparametrizations of $q_1$ is dealt with indirectly by
instead introducing a relaxation of the end time constraint using a parametrization blind data
attachment term. Broadly speaking, this approach consists in considering the relaxed matching problem:
\begin{equation}
    \label{eq:relaxed_matching}
    \inf \left\{ \int_0^1 G_{q(t)}(\partial_t q(t),\partial_t q(t)) dt + \lambda \Gamma(q(1), q_1) \right\},
\end{equation}
where the minimization occurs over paths of immersed surfaces $q(\cdot) \in C^{\infty}([0,1], \I)$ that satisfy the initial constraint $q(0) = q_0$ only, and where $\Gamma(q(1), q_1)$ is a term that measures the discrepancy between the endpoint of the path $q(1)$ and the true target surface $q_1$, with $\lambda > 0$ being a balancing parameter. If we choose a discrepancy term $\Gamma$ that is independent of the parametrization of either of the two surfaces, then solving the relaxed problem above would yield $\Gamma (q(1),q_1) \approx 0$, which yields $q(1) \approx q_1 \circ \varphi$. Thus, this approach allows us to approximate the end time constraint in~\eqref{eq:dist_joint_minimization} without the need to explicitly model the reparametrization itself. Furthermore, this relaxed matching framework allows for inexact matching when computing the distance, which will turn out to be crucial when extending this framework to surfaces that can exhibit different topologies, such as surfaces with different genuses, and to surfaces with partial correspondences, as we shall outline in Section~\ref{sec:partial_matching}.

\subsection{Varifold representation and distance}
\label{ssec:varifolds}
We now describe how to construct the key ingredient in the relaxed model outlined above, namely, an effective and simple to compute data attachment term $\Gamma$ which gives a notion of discrepancy between unparametrized surfaces. Among different possible approaches, we will rely specifically on methods derived from geometric measure theory which have been used for that particular purpose in several past works on surface registration \cite{vaillant2005surface,charon2013varifold,feydy2017optimal,roussillon2019representation,bauer2021numerical}, see also the recent survey \cite{Charon2020fidelity}. In this paper, we adopt the framework of \textit{oriented varifolds} introduced in \cite{kaltenmark2017general}, following an approach similar to the authors’ previous works~\cite{bauer2018relaxed,bauer2021numerical,sukurdeep2021new}. We point out, however, that the majority of the present work could be adapted without much difficulty to some of the other types of data attachment terms developed in the aforementioned papers.  

Given any parametrized surface $q \in \I$, the varifold $\mu_q$ associated to $q$ is a positive Radon measure on the product space $\R^3 \times S^{2}$, where $S^2$ is the unit sphere. More specifically, $\mu_q$ is the image measure $(q,n_q)_*\vol_q$ where $n_q$ is the unit oriented normal field of $q$, and $\vol_q$ is the area form on $M$ induced by $q$. In other words, for any Borel set $B \subset \R^3 \times S^{2}$, $\mu_q(B)$ is the total area with respect to $\vol_q$ of all $(u,v) \in M$ such that $(q(u,v),n_q(u,v))$ belongs to $B$. A fundamental property is that this varifold representation does not depend on the parametrization of $q$. Namely, for any $\varphi \in \D$, one has $\mu_{q \circ \varphi}=\mu_q$, and thus it induces a well-defined mapping on the quotient space $\Shape$ which can be further shown to be an embedding of $\Shape$ into the space of varifolds. 

Then, any norm $\|\cdot\|$ on the space of varifolds should induce a distance on $\Shape$, given
%one associates to any immersion $f \in \Imm$ the varifold $\mu_f\in W^*$ which satisfies
%\begin{align*}
%\forall w \in W: 
%&& 
%(\mu_f | w )_{W^*,W}
%=
%\int_M w(f(x),n(x)) \vol_q(dx).
%\end{align*}
for any $[q_0],[q_1] \in \Shape$ by $\|\mu_{q_0}-\mu_{q_1}\|$,
% \begin{equation}
% \label{equ:dist_var}
% \Delta(q_0,q_1)
% =
% \|\mu_{q_0}-\mu_{q_1}\|\;.
% \end{equation}
where we again emphasize that this expression does not depend on the choice of parametrizations for $q_0$ and $q_1$ in the respective equivalence classes $[q_0]$ and $[q_1]$. While there are many possible metrics that one can introduce on spaces of measures, norms defined from positive definite kernels on $\R^3\times S^{2}$ have been shown to lead to particularly advantageous expressions for numerical computations. Specifically, following the setting of \cite{kaltenmark2017general}, we consider the class of norms $\|\cdot\|_{V^*}$, where $V$ is a reproducing kernel Hilbert space of functions on $\mathbb R^3\times S^{2}$, whose kernel is of the form
\begin{equation}
\label{eq:kernel_var}
k_V(x_1,n_1,x_2,n_2) = \Psi(|x_1-x_2|) \Phi(n_1 \cdot n_2)\;, 
\end{equation} 
in which $\Psi$ and $\Phi$ are two functions defining a radial kernel on $\mathbb{R}^3$ and a zonal kernel on $S^{2}$, respectively. We discuss specific choices for $\Psi$ and $\Phi$ when presenting our numerical approach for solving the relaxed matching problem in Section~\ref{ssec:discretizing_varifold_norm}.

Following from the particular form of $\mu_{q_0}$ and $\mu_{q_1}$ as well as the reproducing kernel property in $V$, the inner product of the two varifolds in $V^*$ can be explicitly derived as:
\begin{multline}
\label{eq:norm_var}
\langle \mu_{q_0},\mu_{q_1}\rangle_{V^*}=\iint_{M \times M} \Psi\big(|q_0(u_0,v_0)- q_1(u_1,v_1)|\big)\\
\Phi\big(n_{q_0}(u_0,v_0)\cdot n_{q_1}(u_1,v_1)\big) \vol_{q_0}(u_0,v_0) \vol_{q_1}(u_1,v_1).
\end{multline}
Consequently, the squared varifold kernel distance between $\mu_{q_0}$ and $\mu_{q_1}$ is obtained as follows:
\begin{equation}
\label{eq:var_discr}
%\dist_{\Var}(q_0,q_1)=\|\mu_{q_0}\|_{V^*}^2-2\langle \mu_{q_0},\mu_{q_1} \rangle_{V^*} + \|\mu_{q_1}\|_{V^*}^2 \;.
\| \mu_{q_0} - \mu_{q_1} \|_{V^*}^2 = \|\mu_{q_0}\|_{V^*}^2-2\langle \mu_{q_0},\mu_{q_1} \rangle_{V^*} + \|\mu_{q_1}\|_{V^*}^2 \;.
\end{equation} 
It can be shown that, under the right regularity and density assumptions on the kernel $k_V$ (c.f. Proposition 4 in \cite{kaltenmark2017general}), the above leads to a true distance when restricting to embedded unparametrized surfaces. However, note that there is no notion of geodesics in $\Shape$ corresponding to the varifold distance, as the straight path $(1-t) \mu_{q_0} + t \mu_{q_1}$ in $V^*$ is not associated to a corresponding path in the space of surfaces, due to the non-surjectivity of the mapping $q \mapsto \mu_q$. Yet, the squared varifold distance $\| \mu_{q_0} - \mu_{q_1} \|_{V^*}^2$ still provides a valid discrepancy term for the relaxed matching problem in~\eqref{eq:relaxed_matching}, which has the additional important advantage of being simple to discretize and evaluate numerically, as we shall detail in Section \ref{ssec:discretizing_varifold_norm}.     

Lastly, we point out that the above varifold discrepancy metrics are also equivariant to the action of rigid motions. Specifically, for any $q_0,q_1 \in \I$ and any $R\in \operatorname{Rot}(\R^3)$, $\tau\in \R^3$, we have
\begin{equation*}
%\dist_{\Var}(R q_0 + \tau,Rq_1+\tau) = \dist_{\Var}(q_0,q_1), 
\| \mu_{R q_0 + \tau} - \mu_{R q_1 + \tau} \|_{V^*}^2 = \| \mu_{q_0} - \mu_{q_1} \|_{V^*}^2, 
\end{equation*} 
which follows directly from the form of the kernel \eqref{eq:kernel_var}.
% More importantly, they are $(\D\times\D)$-invariant: for any $\varphi_0,\varphi_1\in\D$, 
% \begin{equation*}
% \dist_\Var(q_0\circ\varphi_0,q_1\circ\varphi_1)=\dist_\Var(q_0,q_1)\;.
% \end{equation*}
% There is an important difference between the $\D$-invariance of the geodesic distance of the reparametrization invariant metric $G$ and the $(\D\times\D)$-invariance of varifold distances: a $\D$-invariant distance vanishes at $q_0,q_1\in\Imm$ if and only if $q_0=q_1$, whereas a $(\D\times\D)$-invariant distance vanishes at $q_0,q_1\in\Imm$ if and only if $q_0=q_1\circ\varphi$ for some $\varphi\in\D$. Varifold distances are thus ideally suited for enforcing the latter condition, i.e., membership of the same $\D$-orbit in $\Imm$. This makes them highly useful in shape analysis, as described next. However, they are not tied to a Riemannian metric and thus are not associated to geodesics between shapes.

\subsection{Relaxed surface matching}
\label{ssec:relaxed_surface_matching}
The squared varifold distance is ideally suited for use as the discrepancy term $\Gamma$ in the relaxed matching problem outlined in~\eqref{eq:relaxed_matching} due to its reparametrization invariance, which finally allows us to formulate the \textit{varifold-based relaxed matching problem} for surfaces:
\begin{framed}
Given $q_0, q_1 \in \I$, we consider the variational problem:
\begin{equation}
\label{eq:relaxed_matching_varifold_asymmetric}
%\inf \left\{ \int_0^1 G_{q(t)}(\partial_t q(t),\partial_t q(t)) dt + \lambda~\dist_{\Var}(q(1), q_1) \right\},
\inf \left\{ \int_0^1 G_{q(t)}(\partial_t q(t),\partial_t q(t)) dt + \lambda \| \mu_{q(1)} - \mu_{q_1} \|_{V^*}^2 \right\},
\end{equation}
where the minimization occurs over paths of immersed surfaces $q(\cdot) \in C^{\infty}([0,1], \I)$ that satisfy the initial constraint $q(0) = q_0$, and where $\lambda > 0$ is a balancing parameter.
\end{framed}
Note that the (relaxed) endpoint constraint $q(1)\approx q_1\circ \varphi$ for some $\varphi \in \mathcal D$ is encoded in the varifold attachment term. The interpretation of the two terms in the relaxed energy~\eqref{eq:relaxed_matching_varifold_asymmetric} is as follows: the first term (the energy of the path of immersed surfaces) measures the cost of the optimal deformation, whereas the second term is merely a data attachment term that enforces the endpoint constraint. In this relaxed surface matching framework, we refer to $q_0$ as the \emph{source}, $q_1$ as the \emph{target} and $q(1)$ as the \emph{deformed source}.

We note, however, that the model formulated in~\eqref{eq:relaxed_matching_varifold_asymmetric} is asymmetric in the sense that interchanging $q_0$ and $q_1$ will affect the obtained minimizer. Although this is a common phenomenon for relaxed optimization problems, we next propose a symmetric formulation of the varifold-based relaxed geodesic boundary value problem. To do so, we will lift the constraint of $q(0)$ being $q_0$ and instead add a second varifold-based data attachment term which measures the similarity of $q(0)$ and $q_0$:
\begin{framed}
Given $q_0, q_1 \in \I$, we consider the variational problem:
\begin{equation}
\label{eq:relaxed_matching_varifold_symmetric}
\begin{aligned}
%&\inf \bigg\{ \int_0^1 G_{q(t)}(\partial_t q(t),\partial_t q(t)) dt~+ \\&\qquad\lambda_0~\dist_{\Var}(q(0), q_0)+\lambda_1~\dist_{\Var}(q(1), q_1) \bigg\},
&\inf \bigg\{ \int_0^1 G_{q(t)}(\partial_t q(t),\partial_t q(t)) dt~+ \\&\qquad\lambda_0~\| \mu_{q(0)} - \mu_{q_0} \|_{V^*}^2 + \lambda_1~\|\mu_{q(1)} - \mu_{q_1} \|_{V^*}^2 \bigg\},
\end{aligned}
\end{equation}
where the minimization is performed over paths of immersed surfaces $q(\cdot) \in C^{\infty}([0,1], \I)$, and where $\lambda_0,\lambda_1 > 0$ are balancing parameters.
\end{framed}
This symmetric formulation of the relaxed matching problem has several advantages which we will leverage in the implementation and simulations presented in the next sections. First, for $\lambda_0 = \lambda_1$, the variational problem \eqref{eq:relaxed_matching_varifold_symmetric} is indeed symmetric in the sense that for any path $t \mapsto q(t)$, the time reversed path $t \mapsto q(1-t)$ has the same energy value for the problem of matching $q_1$ onto $q_0$ and thus the value of the infimum is the same for both matching problems. More importantly, the introduction of $q(0)$ allows us to decouple the topological or mesh properties of the immersions in the path $q(\cdot)$ with those of the source shape $q_0$. As we shall explain more in details in Section \ref{sec:numerical_optimization_approach}, this allows us to select the vertex sampling and mesh structure of the surfaces in the geodesic path independently of that of the source $q_0$, which can be used to adapt the efficient multiresolution scheme of \cite{bauer2021numerical} for numerically solving the matching problem.

% ----------------- NUMERICAL OPTIMIZATION APPROACH -----------------------------
\section{Numerical optimization approach}
\label{sec:numerical_optimization_approach}
In this section, we will present a set of numerical approaches for solving the geodesic boundary value problem for parametrized surfaces introduced earlier in Section~\ref{ssec:parametrized_surfaces}, the varifold-based relaxed matching problem for unparametrized surfaces introduced in Section~\ref{ssec:relaxed_surface_matching}, as well as the geodesic initial value problem introduced in Section~\ref{sec:sobolev_metrics_surfaces}. Our source code is openly available at:
\begin{center}
\small {\url{https://github.com/emmanuel-hartman/H2_SurfaceMatch}}
\end{center}

First, we describe how to discretize parametrized surfaces. We will do so by considering oriented triangulated surfaces, which are also called oriented triangular meshes, that are represented by a set of vertices, edges, and faces. We view the vertices $V$ of a mesh as an ordered set of points in $\R^3$, i.e., 
\begin{equation*}
 V:=\{v_i\in\R^3|0\leq i< n\},   
\end{equation*} 
where $n$ is the number of vertices in the mesh. Occasionally we may want to view $V$ equivalently as a single point in $\R^{3n}$. The edges $E$ of a triangular mesh are subset of $\N^2$ where $(i,j)\in E$ if and only if  there is an oriented edge from $v_i$ to $v_j$. Similarly, we view the faces $F$ of a triangular mesh as a subset of $\N^3$ where $(i,j,k)\in F$ if and only if the vertices $v_i$, $v_j$, and $v_k$ make up a face in the triangular mesh such that $(v_j-v_i)\times(v_k-v_i)$ points in the direction of the oriented normal vector. Canonically, we choose to use only the representative $(i,j,k)$ of a face such that $i<j,k$.

In the context of the geodesic boundary value problem for parametrized surfaces, the relaxed matching problem for unparametrized surfaces, and the initial value problem, we are required to solve optimization problems over paths of immersed surfaces. In the discrete setting, we will solve these minimization problems by searching over paths of meshes that each lie in a solution space, $\Sol$, defined as the set of meshes with a fixed combinatorial structure, i.e., the set of meshes with a fixed number of vertices and a fixed set of edges and faces. Thus, each $q \in \Sol$ is determined precisely by the locations of the vertices and it is natural to consider $\Sol \cong \R^{3n}$. However, we can equivalently view $q \in \Sol$ as a piecewise linear surface determined exactly by the vertices. Therefore we view $q$ as the map
\begin{equation}
    \label{eq:meshasmap}
    q:\bigsqcup_{f\in F}\sigma_f^2 \to \R^3,
\end{equation} 
where for each $f\in F$, $\sigma^2_f$ is the simplex given by 
\begin{equation*}
    \sigma^2_f:= \left.\left\{\begin{bmatrix}x\\y\end{bmatrix}\in \R_{+}^2\right| x+y< 1 \right\},
\end{equation*} 
and $q$ restricted to $\sigma^2_f$ for $f=(i,j,k)$ is given by
\begin{equation*}
    q|_{\sigma_f^2}\left(\begin{bmatrix}x\\y\end{bmatrix}\right):=\begin{bmatrix}x\\y\end{bmatrix}\cdot\begin{bmatrix}v_j-v_i\\v_k-v_i\end{bmatrix}+ v_i.
\end{equation*} 
This interpretation of a mesh will prove useful for defining the geometric quantities used in the definition of the $H^2$-metric.  

\subsection{The $H^2$-metric on the space of triangular meshes}
\label{ssec:metrics_triangular_meshes}
To establish a numerical framework based on the class of $H^2$-metrics defined in \eqref{eq:H2metric}, we must first establish a discretization of each of the components that appear in its definition. The field of discrete differential geometry establishes discrete counterparts to smooth geometric quantities such as volume forms, derivatives and the Laplacian. A review of the derivations of these discrete quantities can be found e.g. in \cite{crane2018discrete}. We will either discretize these quantities per face or per vertex of a given triangular mesh depending on the context in which they will be used in our computation.  

Recall that $q\in \Sol$ is entirely  determined by the vertices $V\in\R^{3n}$. Thus, it is natural to discretize tangent vectors on the vertices of the mesh, i.e., a tangent vector $h$ is viewed as a set of vectors in $\R^3$ assigned to each vertex $v \in V$ of the mesh. Therefore, \[h:=\{h_v\in\R^3|v\in V\} \in \mathbb R^{3n}.\]

Next, we will explain how we discretize the terms that appear in the definition of the $H^2$-metric, i.e., the volume form, the pullback metric, the normal vector, and the surface Laplacian. For a graphic explanation of our discretization, we refer to Fig.~\ref{fig:ddg}.

Recall that when we view a mesh as a map $q$, as in \eqref{eq:meshasmap}, it is affine on the simplex corresponding to each face. Therefore, it is natural to discretize the first-order terms on each face. Given a face $f\in F$, where we will assume $f=(0,1,2)$ for simplicity of notation, with vertices $v_0,v_1,v_2 \in \R^3$, we have
%Since $\sigma_f^2$ is an open subset of $\R^2$ then for any $p\in \sigma_f^2$, $T_p\sigma_f^2\cong\R^2$ and we can take the orthonormal basis of $T_p\sigma_f^2$ to be $\left\{\begin{bmatrix}1\\0\end{bmatrix},\begin{bmatrix}0\\1\end{bmatrix}\right\}$. Therfore, we have
\begin{equation*}
    dq_f=\begin{bmatrix}e_{01}\\e_{02}\end{bmatrix} \text{ where } e_{ij}=v_j-v_i.
\end{equation*}
Given a tangent vector $h$, we can compute its differential on the face $f$ as
\begin{equation*}
    dh=\begin{bmatrix}h_{1}-h_{0}\\h_{2}-h_{0}\end{bmatrix}.
\end{equation*}
Consequently a discrete version of the pullback metric $g_q$, the volume density $\vol_q$ and the normal vector $n_q$ are given by: \begin{align*}g_f&=\begin{bmatrix}
|e_{01}|^2&e_{01}\cdot e_{02}\\
e_{01}\cdot e_{02}&|e_{02}|^2
\end{bmatrix},\\
\vol_f&=\frac{1}{2}|e_{01}\times e_{02}|,\\
n_f&=\frac{e_{01}\times e_{02}}{|e_{01}\times e_{02}|}.
\end{align*}
We denote these discrete versions by $g_f$, $\vol_f$ and $n_f$ to emphasize that they are defined on the faces. 

\begin{figure}
    \centering
    \includegraphics[width=.22\textwidth]{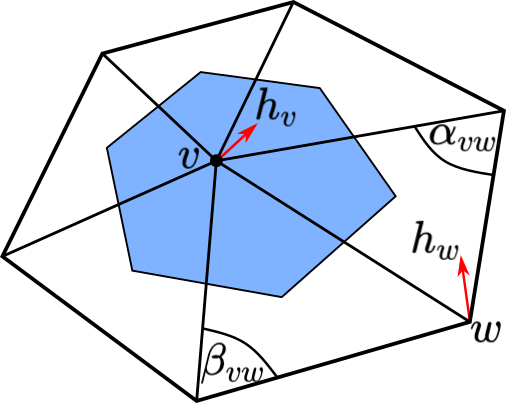}
    \includegraphics[width=.22\textwidth]{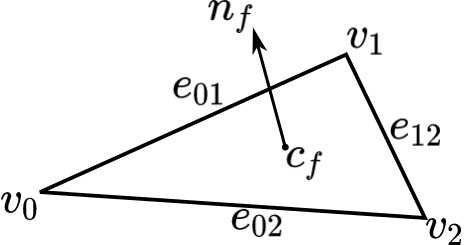}
    \caption{Defining $H^2$-metrics using discrete differential geometry. The cell dual to the vertex $v$ is shown in blue.}
    \label{fig:ddg}
\end{figure}

Given that the faces are affine, it is somewhat ``unnatural'' to discretize the Laplacian, a second-order quantity, on the faces of a mesh. Rather, the natural place to discretize the surface Laplacian is on the dual cells of a mesh. Each such dual cell corresponds to a vertex of the mesh; see Figure~\ref{fig:ddg} for an illustration. Given a tangent vector $h = \{h_v \in \R^3 | v \in V \}$ to a mesh $q = (V, E, F) \in \Sol$, the Laplacian $\Delta_q$ applied to $h$ at a vertex $v\in V$ is given by
\begin{equation*}
    (\Delta_q h)_v=\sum\limits_{\substack{w|(v,w)\in E \\\text{ or }(w,v)\in E}}(\cot(\alpha_{vw})+\cot(\beta_{vw}))(h_v-h_w).
\end{equation*}
where $\alpha_{vw}$ and $\beta_{vw}$ are angles as shown in Figure~\ref{fig:ddg}.
This discretization can be derived using either finite element methods as in \cite{crane2018discrete} or discrete exterior calculus as in \cite{crane2013digital}.

The zeroth and second-order terms of the metric contain also the volume form of our mesh, which was previously defined for each face. In order to assign this volume form to a vertex $v$ (instead of to the faces), we sum up one third of the volume of each face containing $v$. Thus, the volume form at a vertex $v$ is given by
\begin{equation*}
\vol_v=\frac{1}{3}\sum_{f|v\in f}\vol_f.
\end{equation*}
Thus we have derived discrete versions of all terms that appear in the definition of the $H^2$-metric. Therefore, given a mesh $q\in\Sol$ and a pair of tangent vectors $h,k\in T_q\Sol$ we arrive at the following expression for the discrete version of the family of $H^2$-metrics:
\begin{framed}
\begin{equation*}
\begin{split}
&G_q(h,k) = \sum_{v\in V}a_0\langle h,k \rangle \vol_v
    \\&\qquad+\sum_{f\in F}\bigg( a_1 g_f^{-1}(dh_m,dk_m)
     +b_1g_f^{-1}(dh_+,dk_+)\\&\qquad\quad+c_1g_f^{-1}(dh_\bot,dk_\bot) 
    +d_1 g_f^{-1}(dh_0,dk_0)\bigg)\vol_f\\&\qquad
    + \sum_{v\in V}a_2 \langle\Delta_q h,\Delta_q k\rangle \vol_v
\end{split}
\end{equation*}
\end{framed}

% \boxed{
% \begin{equation*}
% \begin{aligned}
%     &G_q(h,k)= \sum_{v\in V}a_0\langle h,k \rangle \vol_v
%     +\sum_{f\in F}\bigg( a_1 g_f^{-1}(dh_m,dk_m)\\& +b_1g_q^{-1}(dh_+,dk_+)+c_1g_q^{-1}(dh_\bot,dk_\bot) \\&+ d_1 g_q^{-1}(dh_0,dk_0)\bigg)\vol_f
% + \sum_{v\in V}a_2 \langle\Delta_q h,\Delta_q k\rangle \vol_v
% \end{aligned}
% \end{equation*}
% }

\subsection{Discretizing the $H^2$ path energy}
\label{ssec:discretizing_h2_path_energy}
Having discussed how to compute the Riemannian metric at a triangular mesh, we now explain how to discretize the Riemannian energy of a path of meshes. Indeed, given a path of triangular meshes in the solution space, denoted by $V: [0,1] \to \Sol$, we compute the path energy of $V(t)$ via
\begin{equation*}
    \int_0^1 G_{V(t)}\left(\dot{V}(t),\dot{V}(t)\right) dt,
\end{equation*}
where $\dot{V}(t)$ denotes the derivative with respect to time of the path. We re-emphasize that each mesh in the path has the same, fixed combinatorial structure, implying that the path is entirely determined by the locations of the vertices of the meshes, hence the notation $V$ above. Furthermore, we note that a further discrete approximation is required to compute the energy of the path, namely we have to discretize the time interval $[0,1]$. To that end, we consider piecewise-linear (PL) approximations for paths of meshes. Given a PL path with $N+1$ evenly spaced breakpoints $0 = t_0 < t_1 < ... <t_{N} = 1$, we can compute the tangent vector for the first $N$ points via finite differences. Thus for $i\in \{0,1,...,N-1\}$, we have
\begin{equation*}
    \dot{V}(t_i)= N(V(t_{i+1})-V(t_i)) \text{ where } t_i= \frac{i}{N}.
\end{equation*}
As a result, the energy of a PL path in $\Sol$ reduces to
\begin{equation}\label{eq:discrete_energy}
    E(V)=\frac{1}{2N}\sum_{i=0}^{N-1} G_{V(t_i)}(\dot{V}(t_i),\dot{V}(t_i)).
\end{equation}

\subsection{Solving the geodesic boundary value problem for parametrized surfaces.}
\label{ssec:numerical_solution_geodesic_bvp_parametrized}
We are now able to formulate our numerical approach for solving the geodesic boundary value problem (BVP) between parametrized surfaces. Given source and target surfaces $q_0, q_1 \in \Imm$ respectively, whose discretized versions are determined by their vertices $V_0$ and $V_1$ respectively, our goal will be to approximate solutions to the geodesic boundary value problem in $\Sol$ by minimizing the energy in~\eqref{eq:discrete_energy} over all PL paths with fixed endpoints, those being $V_0$ and $V_1$ respectively. In doing so, we have reduced the boundary value problem to a finite dimensional, unconstrained minimization problem on $\mathbb R^{3n(N-1)}$; the free variables being the vertices of the interpolating meshes between $V_0$ and $V_1$. We implement the discrete energy functional~\eqref{eq:discrete_energy} using {\tt pytorch}, which allows us to take advantage of the automatic differentiation functionality to calculate the gradient of this energy with respect to the vertices of the interpolating meshes. We then use 
the L-BFGS algorithm, as introduced in \cite{liu1989limited}, to minimize the energy. We describe this process in Algorithm \ref{alg:geodesic_bvp_parametrized} below.

\begin{algorithm}
\caption{Geodesic BVP for Parametrized Surfaces}
\label{alg:geodesic_bvp_parametrized}
\begin{algorithmic} 
\Procedure{Parametrized\_Geodesic\_BVP}{$V_0,V_1,V$} \\
\noindent  $V_0,V_1$ : vertices of the given source and target surfaces\\
\noindent  $V$ : initial guess for vertices of the interpolating meshes of the PL path %\\$T$ : number of desired breakpoints in the output
%\State $V = \Call{LinearResample}{V,T}$ 
\State $\operatorname{cost} (V)= E([V_0,V,V_1])$
\State $V = \Call{L-BFGS}{V,\operatorname{cost}}$
\State \Return $V$
\EndProcedure
\end{algorithmic}
\end{algorithm}
To speed up computations (convergence), we implemented a multi-resolution method in time, i.e., we iteratively refine the temporal discretization of the path and repeat Algorithm \ref{alg:geodesic_bvp_parametrized}, where we initialize at each iteration with an up-sampled version of the previous solution. An example of a solution to the boundary value problem for parametrized surfaces can be seen in Figure~\ref{fig:ivp_test}.

\subsection{Discretizing the varifold norm} 
\label{ssec:discretizing_varifold_norm}
In order to tackle the varifold-based relaxed matching problem for unparametrized surfaces introduced in~\eqref{eq:relaxed_matching_varifold_asymmetric}, we must discuss the discretization of the varifold data attachment term $\| \mu_{q(1)} - \mu_{q_1} \|_{V^*}^2$ introduced in Section \ref{ssec:varifolds}. We specifically need to compute the squared kernel distance between the two varifolds $\mu_{q(1)}$ and $\mu_{q_1}$ associated to the piecewise linear surfaces given by the two triangular meshes $(V(1),E_0,F_0)$ and $(V_1,E_1,F_1)$ respectively. The power of the varifold framework is that it applies equally well to this case and allows us to compare discrete shapes with significantly different mesh structures, including those with different topologies. 

Indeed, we note that an efficient discretization of the kernel inner product of~\eqref{eq:norm_var} consists in approximating the integral of the kernel over each pair of faces from $F_0$ and $F_1$ respectively by using its value at those faces' centers. In other words, we consider the following approximation: 
\begin{align*}
    &\langle\mu_{q(1)},\mu_{q_1}\rangle_{V^*}\\&\qquad\approx\sum_{f_0\in F_0}\sum_{f_1\in F_1}\Psi(|c_{f_0}-c_{f_1}|)
    \Phi(n_{f_0}\cdot n_{f_1})\vol_{f_0}\vol_{f_1},
\end{align*}
where $c_f$ denotes the barycenter of the face $f$ given by $c_f=\frac{1}{3}\sum_{v|v\in f}v$. We emphasize that the quantities $c_{f_0}$, $n_{f_0}$ and $\vol_{f_0}$ are here calculated based on the vertices $V(1)$ of the endpoint of the path of meshes, with the edges and faces $E_0$ and $F_0$ being the same as for the initial mesh in the path. The full discrepancy term $\| \mu_{q(1)} - \mu_{q_1} \|_{V^*}^2$ %$\Gamma(q(1), q_1)$
is then once again calculated as in~\eqref{eq:var_discr}, i.e., through a squared expansion of the norm induced by the kernel inner product~\eqref{eq:norm_var}, where each of the inner products is approximated as above. We emphasize that if the two meshes are exactly aligned, then the discrepancy term $\| \mu_{q(1)} - \mu_{q_1} \|_{V^*}^2$ will be minimized, while its value will be larger if the two meshes are highly misaligned.

Although several choices of kernels are available (cf. \cite{kaltenmark2017general,Charon2020fidelity} for more detailed presentations), in all the numerical simulations of this paper, we specifically chose $\Psi(|c_{f_0}-c_{f_1}|) = \exp({-\frac{|c_{f_0}-c_{f_1}|^2}{\sigma^2}})$, a Gaussian kernel of width $\sigma > 0$, for the radial kernel on $\R^3$. The value of this scale parameter $\sigma$ is typically adapted to the size of the surfaces to be matched. As for the zonal kernel on $S^2$, we take $\Phi(n_{f_0} \cdot n_{f_1}) = (n_{f_0} \cdot n_{f_1})^2$, which is known as the Cauchy-Binet kernel on the sphere. 

Since the calculation of the varifold metric involves a number of kernel evaluations that is quadratic in the number of faces, it typically represents the bulk of the numerical cost of the proposed matching algorithm. For this reason, in our implementation, we rely on the {\tt pykeops} library~\cite{charlier2021kernel}, which provides efficient GPU routines to compute such large sums of kernel functions and enables the automatic differentiation of those expressions.    

\subsection{Solving the geodesic boundary value problem for unparametrized surfaces.}
\label{ssec:numerical_solution_relaxed_matching_varifold}
Using the discretization of the $H^2$-path energy described in Section~\ref{ssec:discretizing_h2_path_energy} and the discretization of the varifold norm described in Section~\ref{ssec:discretizing_varifold_norm}, we can reduce both the non-symmetric~\eqref{eq:relaxed_matching_varifold_asymmetric} and the  symmetric~\eqref{eq:relaxed_matching_varifold_symmetric} relaxed surface matching problem to a finite dimensional, unconstrained minimization problem. Note, that free variables for the non-symmetric problem are the vertices at time $t_i$ for $i\geq 1$, while the free variables in the symmetric version include the vertices at time $t_0$. The main difference between these two algorithms is, however, that the mesh structure in the non-symmetric version is prescribed by the given data, i.e., the mesh structure (topology) in the solution space is given by the mesh structure (topology) of the source $q_0$. In the symmetric version the mesh structure of the solution is a user input and can be different from the mesh structure of both the source and the target. We describe this process below in Algorithm~\ref{alg:relaxed_match}. To speed up convergence, we implemented a multi-resolution method in both time and space, i.e., we iteratively refine the temporal  discretization of the path and the mesh discretization of the surfaces in the path and repeat Algorithm~\ref{alg:relaxed_match}, where we initialize at each iteration with an up-sampled version of the previous solution.
\begin{algorithm}
\caption{Relaxed Matching for Unparametrized Surfaces}
\label{alg:relaxed_match}
\begin{algorithmic} 
\Procedure{Relaxed\_Matching}{$V_0,V_1,V$} \\
\noindent $V_0, V_1$ : triangular meshes for the source and target.
\\$V$ : initial guess for a PL path in $\Sol$. 
\State $\operatorname{cost}(V)=\lambda_0\Call{DistVar}{V(0),V_0} + E(V)$
\State \qquad \qquad \qquad \qquad $+\lambda_1\Call{DistVar}{V(1),V_1}$ 
\State $V = \Call{L-BFGS}{V,\operatorname{cost}}$
\State \Return $V$
\EndProcedure
%\Procedure{Multi-Res Varifold Matching}{$V_0,V_1,V_t$}\\
%\noindent $V_0$ : a triangular mesh from the source shape class. 
%\\$V_1$ : a triangular mesh from the target shape class.
%\\$V_t$ : a triangular mesh with the desired initial connectivity. 
%\State $V:=$ initialized as the constant path at $V_t$
%\State $T=1$
%\For{several iterations}
%\State V=\Call{Varifold\_Path\_Optimization}{$V_0,V_1,V_t$}
%\State $T=T+1$
%\State V=\Call{Param\_Path\_Optimization}{$V(0),V(\operatorname{end}),V(1\colon\operatorname{end}),T$}
%\State V=\Call{Batch\_Mesh\_Upsample}{V}
%\EndFor
%\EndProcedure
\end{algorithmic}
\end{algorithm}

\subsection{Solving the initial value problem}
\label{ssec:numerical_solution_ivp}
We now turn our attention to a numerical approach for solving the geodesic initial value problem (IVP) on the space of parametrized surfaces. We re-emphasize, as noted in Section~\ref{ssec:unparametrized_surfaces}, that the geodesic initial value problem on the spaces of parametrized and unparametrized surfaces are essentially equivalent. As a result, the procedure we describe in this section gives rise to a solution in the space of unparametrized surfaces as well.

To solve the geodesic initial value problem, we follow the variational discrete geodesic calculus approach developed in~\cite{rumpf2015variational}. Given a surface $q\in \Sol$ and a tangent vector $h\in T_q\Sol$ (which is assumed to be horizontal for unparametrized surfaces) our method involves approximating the geodesic in the direction of $h$ with a PL path $V$ having $N+1$ evenly spaced breakpoints. To simplify notation, we will denote surfaces in the PL path at time $t_i = \frac{i}{N}$ for $i = 0,\ldots,N$ by $V(t_i) := V_i$. At the first step, we set $V_0=q$ and $V_1=q+\frac{1}{N}h$, and find $V_2$ such that $V_1$ is the geodesic midpoint of $V_0$ and $V_2$, i.e., we solve for $V_2$ such that 
\begin{equation*}
    %V_1 = \argmin_{} [G_{V_0}(V_1-V_0,V_1-V_0)+G_{V_1}(V_2-V_1,V_2-V_1)].
    V_1 = \underset{\Tilde{V}}{\argmin} [G_{V_0}(\Tilde{V}-V_0,\Tilde{V}-V_0)+G_{\Tilde{V}}(V_2-\Tilde{V},V_2-\Tilde{V})].
\end{equation*} 
Differentiating with respect to $\Tilde{V}$ and evaluating the resulting expression at $V_1$, we obtain the system of equations
\begin{multline}
\label{eq:first_order_condition_ivp}
    2G_{V_0}(V_1-V_0,B_i)-2G_{V_1}(V_2-V_1,B_i)\\+ D_{V_1}G_{\cdot}(V_2-V_1,V_2-V_1)_i=0,
\end{multline}
where $B_i$ is the $i$-th basis vector of $\R^{3n}$. We denote the system of equations in~\eqref{eq:first_order_condition_ivp} by $F(V_2; V_1, V_0) = 0$, where we stress again that $V_0$ and $V_1$ are here fixed. We solve this system of equations for $V_2$ using a nonlinear least squares approach, i.e., by computing
\begin{equation*}
    V_2 = \underset{\Tilde{V}}{\argmin} \| F(\Tilde{V}; V_1, V_0) \|_2^2
\end{equation*}
via the L-BFGS algorithm, where we again take advantage of the automatic differentiation capabilities of {\tt pytorch} in our implementation. We then iterate this process step by step to compute $V_3,V_4,\ldots,V_{N}$. We summarize our approach via the pseudocode in Algorithm~\ref{alg:ivp}. An example of a solution to the initial value problem can be seen in Figure~\ref{fig:ivp_test}. These results show excellent consistency between the solutions of the corresponding boundary and initial value problems.

\begin{algorithm}
\caption{Geodesic Initial Value Problem}
\label{alg:ivp}
\begin{algorithmic} 

\Procedure{Geodesic\_IVP}{$q,h,N$} \\
\noindent $q$ : a surface in $\mathfrak{M}$ \\
\noindent $h$ : a tangent vector in $T_q \mathfrak{M}$ \\
\noindent $N$ : number of time steps
\State Set $V_0 = q$ and $V_1 = q + \frac{1}{N} h$ 
\For{$t = 2,\ldots,N$}
    \State $V_t = \underset{\Tilde{V}}{\argmin} \| F(\Tilde{V}; V_{t-1}, V_{t-2}) \|_2^2$
\EndFor
\State \Return $V = [V_0,V_1,\ldots,V_N]$
\EndProcedure

\end{algorithmic}
\end{algorithm}

\begin{figure}
    \centering
    \includegraphics[width=.49\textwidth]{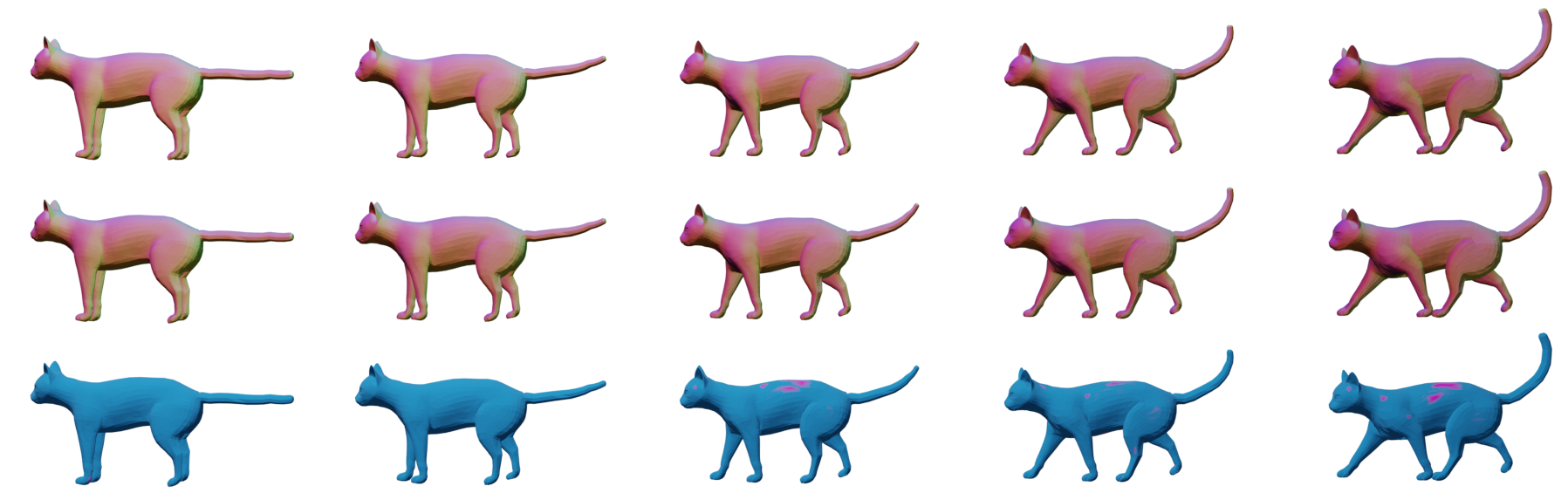}
    \caption{\small Solution to a parametrized BVP (top) and to the corresponding IVP (middle), i.e., after solving the BVP, we calculated the corresponding initial velocity of the solution and used this as the initial condition to solve the IVP. The results are overlaid (bottom) to illustrate the small discrepancy in the solutions.}
    \label{fig:ivp_test}
\end{figure}

\subsection{Influence of the metric coefficients}
\label{ssec:influence_constants}

\begin{figure}
    \centering
    \includegraphics[width=.5\textwidth]{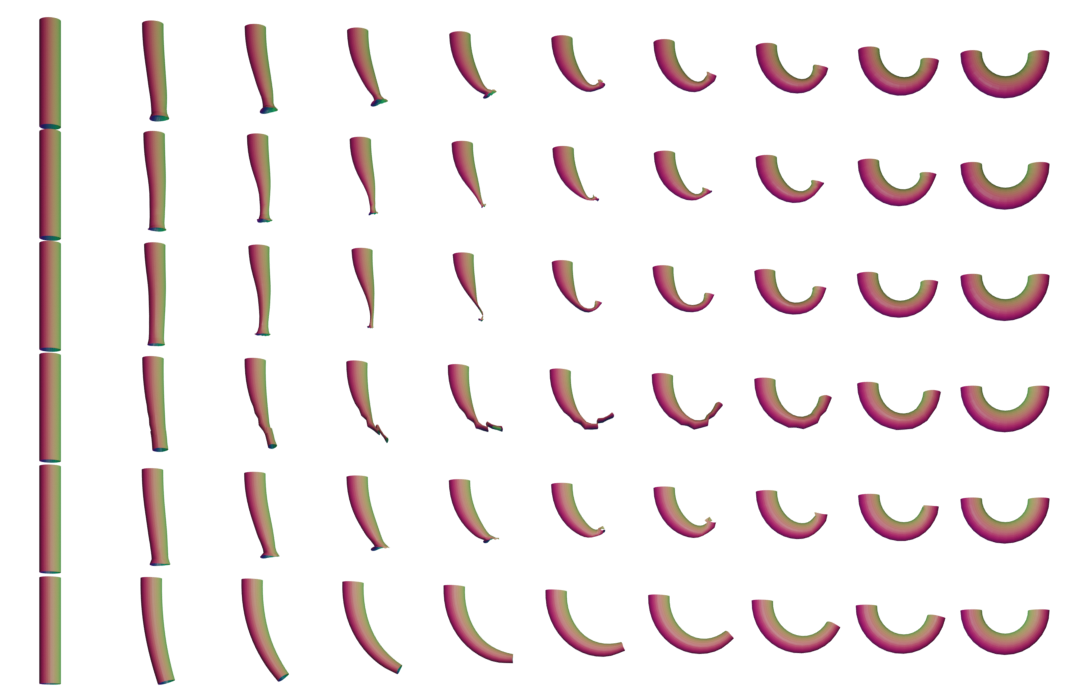}
    \caption{Influence of constants. An example of the same boundary value problem with different choices for the $H^2$-metric coefficients $(a_0,a_1,b_1,c_1,d_1,a_2).$ First row: $(1,1,1,1,1,1)$, Second row: $(10,1,1,1,1,1)$, Third row: $(1,1,1,1,1,0.1)$, Fourth row: $(1,10,10,1,1,0.1)$, Fifth row: $(1,1,10,0,1,10)$, Sixth row: $(1,100,1,1,1,1)$. }
    \label{fig:constants}
\end{figure}

\begin{figure*}
    \centering
        \begin{tabular}{m{.65cm}m{2.275cm}m{2.15cm}m{2.1cm}m{2.15cm}m{2.25cm}m{2.6cm}}
            &\phantom{hh}source&\phantom{hh}$q(0)$&\phantom{h}$q(1/3)$&\phantom{h}$q(2/3)$&\phantom{h} $q(1)$&target
        \end{tabular}
    \includegraphics[width=.9\textwidth]{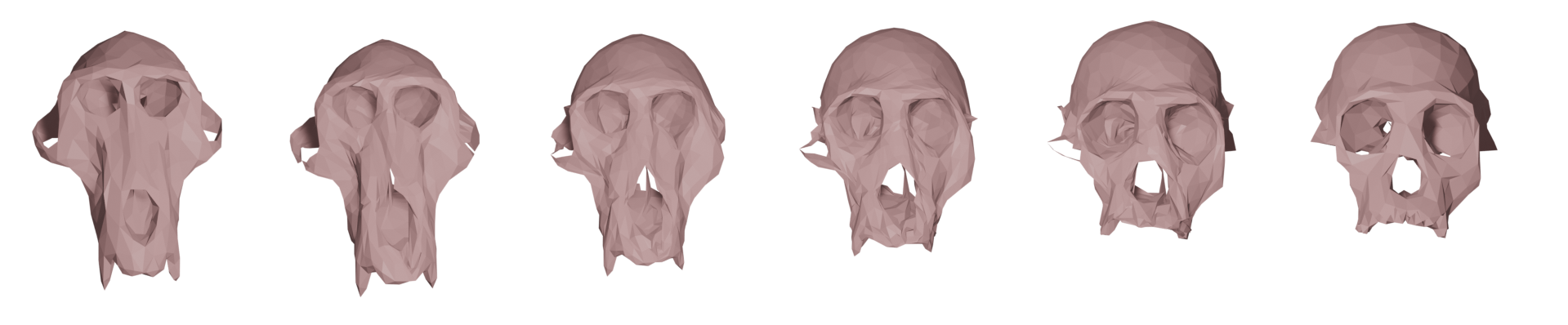}
    \includegraphics[width=.75\textwidth]{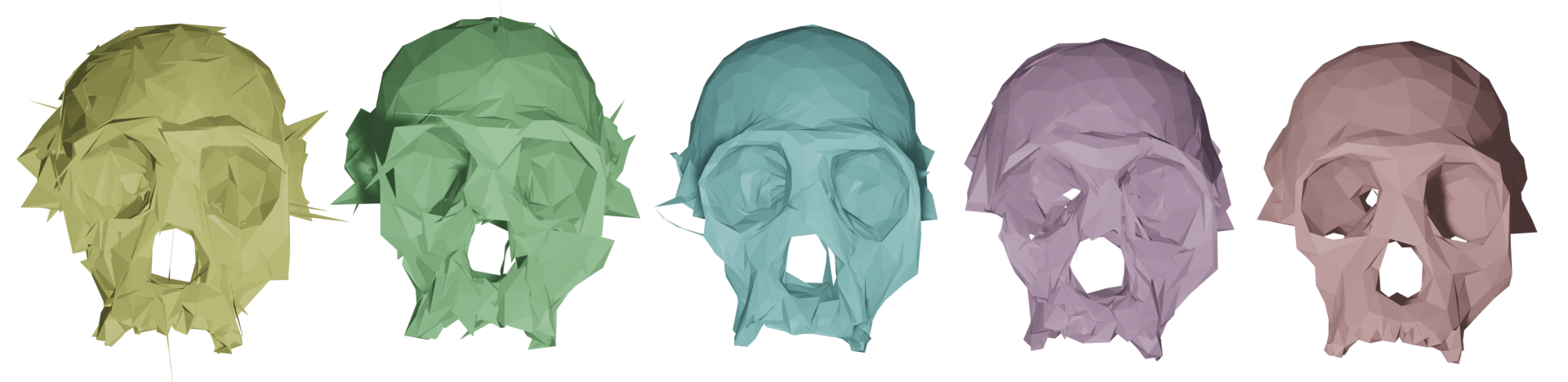}
    \caption{Matching of two skulls with highly incompatible topology. Top row: Geodesic w.r.t. to an $H^2$-metric with coefficients: $(1,1,1,1,1,2)$. Bottom row: the deformed source $q(1)$ for different metrics and methods: the SRNF pseudo distance obtained with the code of~\cite{bauer2021numerical} (yellow), an $H^1$-metric with coefficients: $(1,1,1,1,1,0)$ (green), an $H^2$-metric with coefficients: $(1,1,1,1,1,2)$ (turquoise), an $H^2$-metric with coefficients allowing for partial matching: $(1,1,1,1,1,2)$ (violet). The target is displayed on the right. One can observe the regularizing effect of the second-order terms (turquoise and violet) and, in addition, how topological inconsistencies (such as the thin arc near the left ear) are correctly removed in the partial matching framework (violet) instead of getting shrunk to almost zero volume (turquoise).}
    \label{fig:spikes}
\end{figure*}

In this section we present examples detailing the influence of the choice of constants in the $H^2$-metric on the geodesics obtained after matching parametrized and unparametrized surfaces via Algorithm~\ref{alg:geodesic_bvp_parametrized} and Algorithm~\ref{alg:relaxed_match} respectively. We also report the influence of the constants on the corresponding computation times, see Table~\ref{tab:timing}.
\begin{table*}[h!]
\centering

    \begin{tabular}{|c||c|c|c|c||c|c|c|c|}\hline
    &\multicolumn{4}{|c||}{Unparametrized BVP}&\multicolumn{4}{|c|}{Parametrized BVP}\\
     \# of vertices& $H^2$ & $H^1$ & SRNF &LDDMM&$H^2$&$H^1$&SRNF&LDDMM \\\hline
     50   &0.14s&0.11s&0.08s&0.11s&0.08s&0.07s&0.05s&0.04s\\
     200  &0.15s&0.12s&0.09s&0.12s&0.08s&0.07s&0.06s&0.04s\\
     800  &0.17s&0.13s&0.10s&0.14s&0.09s&0.08s&0.07s&0.05s\\
     3200 &0.23s&0.21s&0.17s&0.27s&0.13s&0.11s&0.08s&0.06s\\
     12800 &1.39s&0.67s&0.55s&1.12s&0.30s&0.28s&0.21s&0.15s\\
     51200&6.99s&3.88s&3.73s&14.70s&0.73s&0.69s&0.59s&1.12s\\\hline
\end{tabular}
\caption{Time per iteration (in seconds) of L-BFGS optimization for solving parametrized and unparametrized boundary value problems with respect to first order ($H^1$) and second order ($H^2)$ Sobolev metrics as well as the LDDMM diffeomorphic model (see next section), using meshes sampled with different numbers of vertices. All experiments are run on an Intel 3.2 GHz CPU with a Gigabyte GeForce GTX 2070 1620 MHz GPU.}
\label{tab:timing}
\end{table*}

A synthetic example of a geodesic boundary value problem for a variety of choices of constants can be seen in Figure~\ref{fig:constants}. We note that the zeroth-order term weighted by $a_0$ corresponds to the invariant $L^2$-metric and penalizes how far the vertices move weighted by their corresponding volume forms. In Figure \ref{fig:constants} on the second row, we see an example where $a_0$ dominates the other coefficients and as a result, the further a vertex moves, the more shrinking we observe for faces incident to these vertices. The second-order term weighted by $a_2$ penalizes paths through meshes with high local curvature. In the third line of Figure \ref{fig:constants}, we present a path where $a_2$ is chosen to be small relative to the other coefficients and as a result the midpoints of the geodesic with respect to this choice of coefficients have points with higher local curvature. As noted in Remark~\ref{rem:firstorderterms}, the terms corresponding to the weighting coefficients $a_1,b_1,$ and $c_1$ measure the shearing of faces, stretching of faces, and the change in the normal vector, respectively. In the fourth row of Figure \ref{fig:constants}, we choose $a_1$ and $b_1$ to be large and $a_2$ to be small. As a result, the geodesic with respect to this choice of coefficients passes through meshes where portions of the pipe are flattened, which produces vertices with higher local curvature without shearing or stretching the faces of the mesh.

In Figure \ref{fig:spikes}, we highlight the importance of the second order term for complex matching problems. In this figure we consider a matching problem between two surfaces undergoing strong deformations, which in addition, have inconsistent topologies. In previous work of two of the authors~\cite{bauer2020srnfmatch}, the same example has been considered for the SRNF pseudo-distance: in this framework the obtained result admitted significant singularities in the form of thin spikes that were appearing in areas of high deformations, cf. the yellow skull in Figure~\ref{fig:spikes}. We repeated this experiment using the metrics implemented in this article; in the second figure (the skull in green) one can see the resulting endpoint of an $H^1$-metric. While the resulting match is slightly superior to the one of the SRNF framework, it still exhibits some of the spike singularities. A theoretical explanation for the appearance of these singularities can be found in the observation that the $H^1$-metric is not strong enough to control the $L^{\infty}$-norm -- by the Sobolev embedding theorem the $H^1$-metric is exactly at the critical threshold. Consequently small areas can move far with a limited cost, which can potentially lead to these spike type singularities. This observation suggests that this behavior should not occur for higher-order metrics and, indeed, this is also reflected in our experiment: the turquoise skull, which was obtained using an $H^2$-metric, does not exhibit any spike singularities and leads to an overall superior matching. Note, that the thin arc in the right ear region of the skull is not a spike, but stems from the inconsistent topology of the shapes, cf. the arc at the right ear of the animal skull. We tackle these topological inconsistencies in Section~\ref{sec:partial_matching}, where we introduce a partial matching framework which would automatically erase such regions.

%The previous examples illustrate the importance of selecting the parameters of the metric on the registration quality and the obtained geodesics. Note that an additional parameter of our proposed approach is the scale of the kernel used to compute the varifold relaxation term, which also needs to be chosen based on the size of the meshes at hand, although this can be made simpler and more robust by introducing an adequate multiscale scheme as we do in practice. Yet parameter selection remains a possible important limitation of this framework if no reasonable priors are available. A currently active research topic is precisely to come up with effective schemes to obtain such parameter estimates in a data-driven way.

\begin{remark}
With our relaxed matching framework, one can obtain an adequate matching even when the meshes under consideration are of low quality, i.e., even if they include a certain level of degradation caused by topological or geometric noise, such as the presence of holes or degenerate triangles for instance. Indeed, our approach avoids the need for an exact matching of the source and target meshes (thanks to the varifold relaxation term in our variational formulation of the matching problem), which helps us avoid instances where enforcing an exact matching would lead to e.g. over-fitting parts of the source to noisy parts in the target, thus leading to inaccurate results; see the experiments in~\cite{sukurdeep2019inexact} for an illustration in the context of planar curves. Moreover, in our framework, the mesh structure for solutions to the geodesic boundary value problem is user-defined, i.e., meshes in the geodesic path can be prescribed to have any desired topology or resolution (independently of the mesh structure and resolution of the boundary shapes (i.e., the source and/or target)). In the case where the mesh quality of the boundary shapes is low, issues might arise as the varifold term is most sensitive to large discrepancies in the match. Yet, with proper initialization, this issue can be mitigated and we still obtain desirable results. This is depicted in Figure~\ref{fig:spikes}, where we are able to match two skulls despite the presence of topological noise in the data.
\end{remark}

\subsection{Comparison with other shape matching frameworks}
\label{ssec:comparison_with_other_shape_matching_frameworks}

While the previous section outlines how our approach for surface matching compares with other intrinsic Riemannian frameworks based on the SRNF and $H^1$-metrics, in this section, we present a (theoretical) comparison of our approach with a wider class of numerical frameworks for surface matching. For an overview on a variety of surface matching frameworks, we refer the interested reader to the survey article~\cite{biasotti2016recent}. This survey includes, amongst others, an overview of methods based on the metric (measure) space approach for shape matching, where one considers geometric objects (e.g. point clouds or meshes) as metric spaces (possibly with a probability distribution defined on them), and in which objects are compared via Gromov-Haussdorf distances, or via its extensions like the Gromov-Wasserstein distance~\cite{memoli2011gromov}. Here, we will focus mainly on extrinsic Riemannian models for shape analysis~\cite{beg2005computing, younes2010shapes}, the functional map framework~\cite{ovsjanikov2012functional, ren2018continuous} and the related evolutionary non-isometric geometric matching (ENIGMA) approach~\cite{edelstein2019enigma} for obtaining shape correspondences, as well as shape interpolation methods based on as-isometric-as-possible or as-rigid-as-possible deformations~\cite{kilian2007geometric}, Hamiltonian dynamics~\cite{eisenberger2020hamiltonian} or thin shell models~\cite{iglesias2018shape}, and finally deep learning based approaches for shape registration~\cite{trappolini2021shape, cosmo2020limp, huang2021arapreg}. %We will highlight the theoretical advantages of our method, as well as its technical and theoretical limitations.

First, we discuss the Large Deformation Diffeomorphic Metric Mapping (LDDMM) framework of~\cite{beg2005computing, younes2010shapes}, which stands as the main alternative Riemannian framework for shape analysis. In contrast to the intrinsic metrics considered in this paper, the LDDMM model consists in building shape metrics extrinsically via right-invariant metrics on a given subgroup of $\Diff(\R^3)$, the group of diffeomorphisms of the ambient space. Then the distance and geodesic between two surfaces is essentially computed by looking for a diffeomorphism of the whole space that warps the source onto the target surface while minimizing the kinetic energy as defined by the metric on $\Diff(\R^3)$. There are several fundamental differences between the intrinsic and extrinsic frameworks, which have already been emphasized in previous publications \cite{bauer2018relaxed,bauer2019metric}. To give a brief summary of those differences, a first important distinction is that the LDDMM approach imposes more constraints on the regularity of the surface transformation as it must be induced by a smooth deformation of the ambient space itself. A direct consequence is that this approach guarantees diffeomorphic evolution of the source shape and thus prevents the formation of singularities or self-intersection along geodesics, which is in general not the case with the intrinsic $H^2$-metric framework (see \cite{bauer2018relaxed} for examples of such phenomenon in the space of curves). On the other hand, geodesics in the diffeomorphic model are only well-defined between surfaces that belong to the same orbit for the action of the specific subgroup of diffeomorphisms and thus relaxing the problem using e.g. varifold distances is a necessity in practice. From a numerical point of view, LDDMM registration is also an optimal control problem, and it is typically solved based on a geodesic shooting scheme \cite{vialard2012diffeomorphic}. The Hamiltonian dynamical equations generally require evaluating kernel functions between all pairs of vertices in the source surface. Thus the complexity for the integration of these systems is quadratic in the number of vertices, which is an important difference with the linear complexity one gets with intrinsic metrics. This implies that for surfaces with a large number of vertices, the complexity of each iteration of the matching optimization scheme is dominated by the computation of the varifold term and its gradient in the intrinsic framework of this paper while it becomes dominated by the integration of the Hamiltonian system in the case of LDDMM. This is illustrated in Table \ref{tab:timing} that shows the time per iteration of the optimization algorithm for the different models.

% 
% Performing surface registration via the LDDMM framework constrains the deformation that maps one surface onto another to be a diffeomorphism, which has the effect of preserving topological properties (e.g. connectedness, genus) and certain geometric properties (e.g. smoothness) of all meshes in the geodesic. While this can be desirable in certain applications, e.g. in computational anatomy where meshes can represent 3D scans of organs or bones, it can become problematic when dealing with data that's corrupted by e.g. geometric and topological noise, or missing parts. 

Another popular method for the analysis of unregistered surfaces is the functional map framework~\cite{ovsjanikov2012functional, ren2018continuous}, which allows one to find optimal maps (optimal point-to-point correspondences) between pairs of surfaces by finding optimal pairings between real-valued functions defined on the surfaces. This is done by using a least squares approach to solve a linear system based on the Laplace-Beltrami operator and Wave (or Heat) Kernel Signature descriptors. These quantities describe the local geometry of surfaces and the method is largely successful at matching regions with similar local geometries. However the global matching of the framework benefits significantly from a good prior selection of landmarks, and extensions of the method, such as the evolutionary non-isometric geometric matching (ENIGMA) approach~\cite{edelstein2019enigma}, have been proposed to obtain point-to-point correspondences in a fully automatic way, even in the context of surfaces with different topologies. Moreover, such methods struggle with topological or geometric noise, such as the presence of holes, degenerate triangles or thin spikes in the meshes, which is a difficulty that our intrinsic $H^2$-metric framework handles well, as demonstrated in Figure~{\ref{fig:spikes}}. Furthermore, methods for shape matching using optimal transport techniques have been proposed. In particular, the Gromov-Wasserstein distance for object matching \cite{memoli2011gromov} treats surfaces as metric-measure spaces and solves for a probabilistic coupling that preserves pairwise distances of points in the metric-measure spaces. From such couplings, one can produce approximate point-to-point correspondences of metric measure spaces. The computation of pairwise distance matrices limits the effectiveness of these methods for high resolution meshes as these computations are quadratic with respect to the number of vertices. Furthermore, approaches like functional maps, Gromov-Wasserstein, and ENIGMA allow us to obtain approximate point-to-point correspondences, but they do not provide an optimal deformation between the registered shapes. Thus the optimal deformations have to be calculated in a second (independent) post processing step, using e.g. the as-isometric-as-possible, as-rigid-as-possible framework of \cite{kilian2007geometric}, the Hamiltonian dynamics method given by \cite{eisenberger2020hamiltonian} or a thin shell model as presented in~\cite{iglesias2018shape}. Consequently, in this setup, the registration, deformation, and statistical shape analysis are performed separately, which has been shown to be less desirable as it can introduce a significant bias in the resulting statistical analysis~\cite{srivastava2016functional}. As discussed in Section~\ref{sec:introduction}, one major advantage of Riemannian frameworks, including intrinsic frameworks like the one presented in this paper or extrinsic frameworks like LDDMM, is that they do not suffer from this shortcoming as the registration, geodesic interpolation and statistical analysis are all performed under the same metric setting.

% For ENIGMA - from their limitations sections: In some cases, the reduced basis (laplace-beltrami) does not contain enough functions to represent correctly the geometry of thin parts of the shape - resulting in thin parts of the shape not being matched correctly.

More recently, several deep learning methods for the registration and analysis of surfaces have emerged~\cite{trappolini2021shape, cosmo2020limp, huang2021arapreg}. Such methods are exciting as they can provide significant computational gains by allowing one to register, or even interpolate between surfaces, via a simple forward pass through a pre-trained network, which can be highly desirable in practice especially when working with very large and high dimensional datasets of surfaces. Nevertheless, the quality of point-to-point correspondences or optimal deformations obtained via these deep learning methods relies on having access to a very large database of ground truths to train the network, which in practice is difficult and costly to obtain. As a result of this lack of good training data, these deep learning methods are thus susceptible to having poor generalization capabilities, resulting in situations where the method performs poorly on data that is significantly different or of significantly worse quality than the training data. One potential future application of our intrinsic $H^2$-metric framework is that it could be used to \textit{generate} high quality training data (in the form of geodesic distances, point-to-point correspondences or optimal deformations) for deep learning methods for the analysis of surfaces. Such ideas have recently been introduced in the case of functional data~\cite{nunez2021srvfnet, chen2021srvfregnet} and in the setting of planar curves~\cite{hartman2021supervised, nunez2020deep}, where early results have been encouraging.

\section{Statistical shape analysis of surfaces}
\label{sec:statistical_shape_analysis_surfaces}
Beyond the comparison of two surfaces, the mathematical and numerical framework developed in the previous sections can be used as building blocks for the development of more general tools for the statistical shape analysis of sets of surfaces. In this section, we discuss in particular how to extend our approach to calculate sample averages, perform principal component analysis, and approximate parallel transport between parametrized and unparametrized surfaces. Our implementation of these statistical shape analysis methods are available in our open source code\footnote{\url{https://github.com/emmanuel-hartman/H2_SurfaceMatch}}.

\subsection{Karcher mean} 
\label{ssec:karcher_mean}
A central tool in any statistical shape analysis toolbox is the notion of a Karcher mean. Let $(\mathcal{M},G)$ be a (possibly) infinite-dimensional Riemannian manifold with corresponding geodesic distance function $\operatorname{dist}^G$. Given data $x_1,\dots,x_K\in \mathcal M$, the Karcher mean $\bar x$ is the minimizer of the sum of squared distances to the given data points, i.e., 
\begin{equation}
    \label{eq:karcher_mean_functional}
    \Bar{x} = \underset{x \in \mathcal{M}}{\argmin} \sum_{k=1}^K \dist^G(x, x_k)^2.
\end{equation}
Note that the existence and uniqueness of the Karcher mean is a priori not guaranteed, but requires that the data is sufficiently concentrated, i.e., belongs to a ball in the geodesic distance whose radius depends on the curvature of the manifold $\mathcal M$. The Karcher mean can be computed by a gradient descent method, as proposed e.g. in~\cite{pennec2006intrinsic}. This method requires the computation of $K$ geodesic boundary value problems at each gradient step, where usually a relatively large number of iterations (gradient steps) is necessary. 
 
For computational efficiency, we instead implemen-ted an alternative algorithm to approximate the Karcher mean based on the iterative geodesic centroid procedure as proposed e.g. in~\cite{ho2013recursive}: Given data points $x_1,\ldots x_K \in \mathcal{M}$ and an initial guess $x_0$, we generate a sequence of estimates for the Karcher mean, namely $\hat x_{i}$ for $i = 0,\ldots,N_{\operatorname{iter}}$ where $N_{\operatorname{iter}} = O(K)$, by setting $\hat x_0 = x_0$, and iteratively defining $\hat x_i = x(1/(i+1))$, with $x(t)$ being the geodesic connecting $\hat x_{i-1}$ to a data point $x_{k}$ which has been uniformly chosen at random (with replacement) from the dataset. Thus one only has to calculate $N_{\operatorname{iter}} = O(K)$ geodesics \emph{in total}, which is linear in the number of data points.

A pseudo-code of this method is presented in Algorithm~\ref{alg:karcher_mean}. For parametrized surfaces, we can initialize this algorithm with the Euclidean mean (average) of the vertices of the surfaces in our sample, assuming of course that they have been centered. We then iteratively solve the geodesic boundary value problem for parametrized surfaces using Algorithm~\ref{alg:geodesic_bvp_parametrized}, whose inputs at the $i^{th}$ iteration are the current Karcher mean estimate $\Bar{V}$ as the source, the randomly chosen surface $V_k$ as the target, and a linearly interpolated path between the source and target as initial guess for the PL path $V$. For unparametrized surfaces, there is the additional difficulty that the data might have inconsistent mesh structures. In order to extend the computation of the Karcher mean to this situation, one needs to initialize the Karcher mean estimate $(\Bar{V}, \Bar{E}, \Bar{F})$ to some user-defined mesh, which will determine the connectivity and topology of the Karcher mean, and then iteratively solve the relaxed matching problem for unparametrized surfaces using Algorithm~\ref{alg:relaxed_match}. Note that, as inputs for the relaxed matching problem at the $i^{th}$ iteration, we can use the current Karcher mean estimate $(\Bar{V}, \Bar{E}, \Bar{F})$ as the source, the randomly chosen surface $(V_{k}, E_{k}, F_{k})$ as the target, and a constant path of the Karcher mean estimate for the initial PL path $V$. An example of a population of unparametrized shapes
can be seen in Figure~\ref{fig:PCA_unparam_faces}, together with their Karcher mean which has been computed via Algorithm~\ref{alg:karcher_mean}.

\begin{algorithm}
\caption{Karcher Mean}
\label{alg:karcher_mean}
\begin{algorithmic} 

\Procedure{Parametrized\_Karcher\_Mean}{$V_1,\ldots,V_K$} \\
\noindent $V_1,\ldots,V_K$ : vertices of parametrized surfaces from sample
\State Initialize $\Bar{V} = \frac{1}{K} \sum_{k=1}^K V_k$ 
\For{$i = 1,\ldots,N_{\operatorname{iter}}$}
    \State $V_k \sim \operatorname{Unif}\{V_1, \ldots, V_K\}$ %Pick a surface $V_k$ uniformly at random (with replacement) from the sample
    \State $V = \Call{Linear\_Interpolation}{\Bar{V}, V_{k}}$
    \State $V = \Call{Parametrized\_Geodesic\_BVP}{\Bar{V}, V_{k}, V}$
    \State $\Bar{V} = V(1/(i+1))$
\EndFor
\State \Return $\Bar{V}$
\EndProcedure

\Procedure{Unparametrized\_Karcher\_Mean}{$V_1,...,V_K$, $\bar V$} \\
\noindent $V_1,\ldots,V_K$ : triangular meshes from the sample \\
$\bar V$ : initial guess for Karcher mean
\For{$i = 1,\ldots,N_{\operatorname{iter}}$}
    \State $V_k \sim \operatorname{Unif}\{V_1, \ldots, V_K\}$ 
    \State $V = \Call{Linear\_Interpolation}{\Bar{V}, \Bar{V}}$
    \State $V= \Call{Relaxed\_Matching}{\Bar{V}, V_{k}, V}$
    \State $\Bar{V} = V(1/(i+1))$ 
\EndFor
\State \Return $\Bar{V}$
\EndProcedure
\end{algorithmic}
\end{algorithm}

\subsection{Dimensionality Reduction}
\label{ssec:dimensionality_reduction}
Dimensionality reduction %, which broadly refers to the process of finding lower dimensional representations of high (or possibly infinite) dimensional data while maintaining meaningful properties of the original data, 
is a key tool in modern statistics and machine learning. We illustrate how to construct two popular dimensionality reduction tools for the statistical shape analysis of surfaces using our framework, namely data visualization through multidimensional scaling, and principal component analysis.

\subsubsection{Visualizing the distance matrix using multidimensional scaling}
\label{ssec:cmds}
Multidimentional scaling (MDS) is a well-known procedure for mapping $K$ points in a high (or infinite) dimensional space into a lower dimensional space, while maintaining information about the pairwise distances between these $K$ points. More specifically, given a (possibly) infinite-dimensional Riemannian manifold $(\mathcal{M},G)$ with corresponding geodesic distance function $\operatorname{dist}^G$, with data $x_1,\dots,x_K\in \mathcal M$, the goal of MDS is to find points $\Hat{x}_1, \ldots, \Hat{x}_K \in \R^d$ for some $d > 0$ such that:
\begin{multline*}
    \Hat{x}_1, \ldots, \Hat{x}_K = \\
    \underset{y_1,\ldots,y_K \in \R^d}{\argmin} \left( \sum_{i \neq j}  (\operatorname{dist}^G(x_i, x_j) - \|y_i - y_j\|)^2 \right)^{\frac{1}{2}}.
\end{multline*}
In the context of statistical shape analysis, one can use MDS to project a dataset of surfaces as points in Euclidean space for data visualization purposes, or as an intermediary step in clustering applications with sets of surfaces, see Figure~\ref{fig:mds}.

\begin{figure}
    \centering
    \includegraphics[trim = 0mm 0mm 0mm 0mm ,clip,width=.3\textwidth]{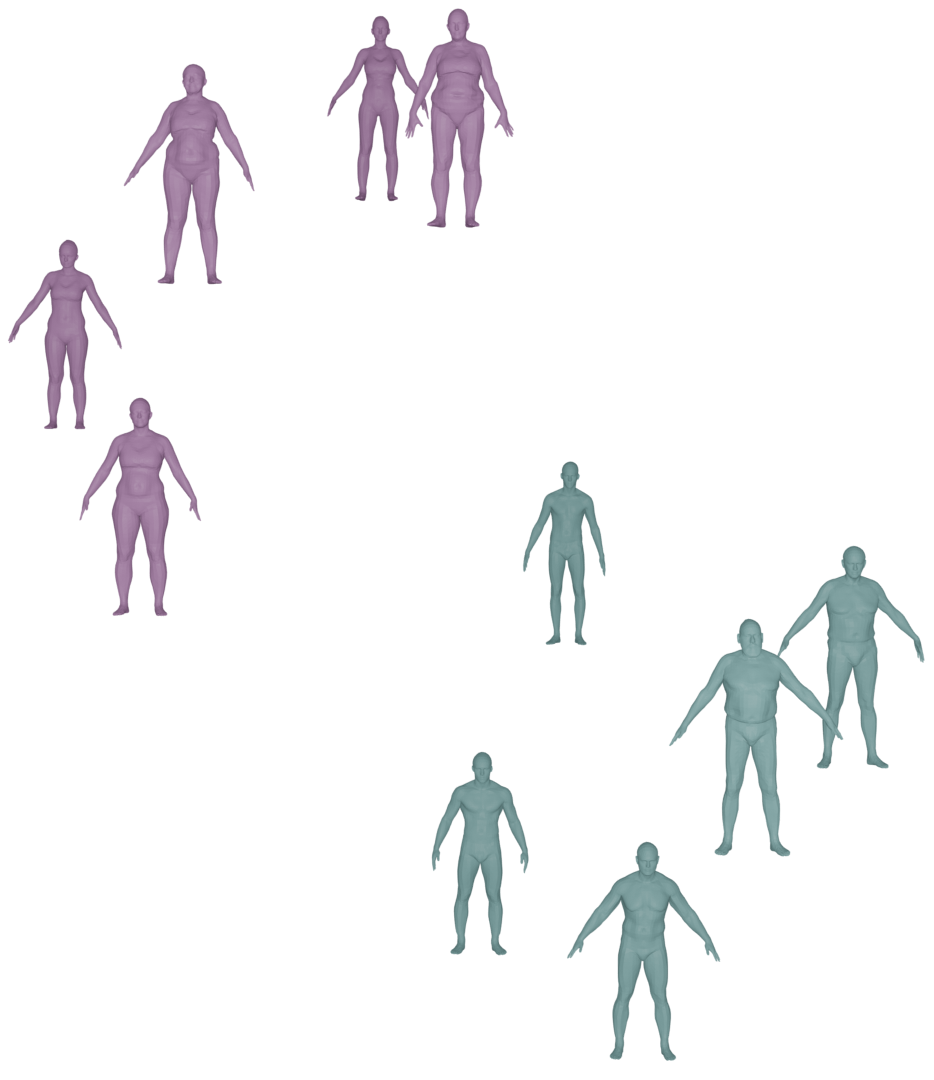}
    \caption{\small Visualizing the distance matrix between ten human body shapes using multidimensional scaling. The geodesic distance naturally clusters the population into male and female shapes.}
    \label{fig:mds}
\end{figure}

\subsubsection{Tangent PCA}
\label{ssec:tangent_space_pca}

\begin{figure}
    \centering
    \includegraphics[width=.48\textwidth]{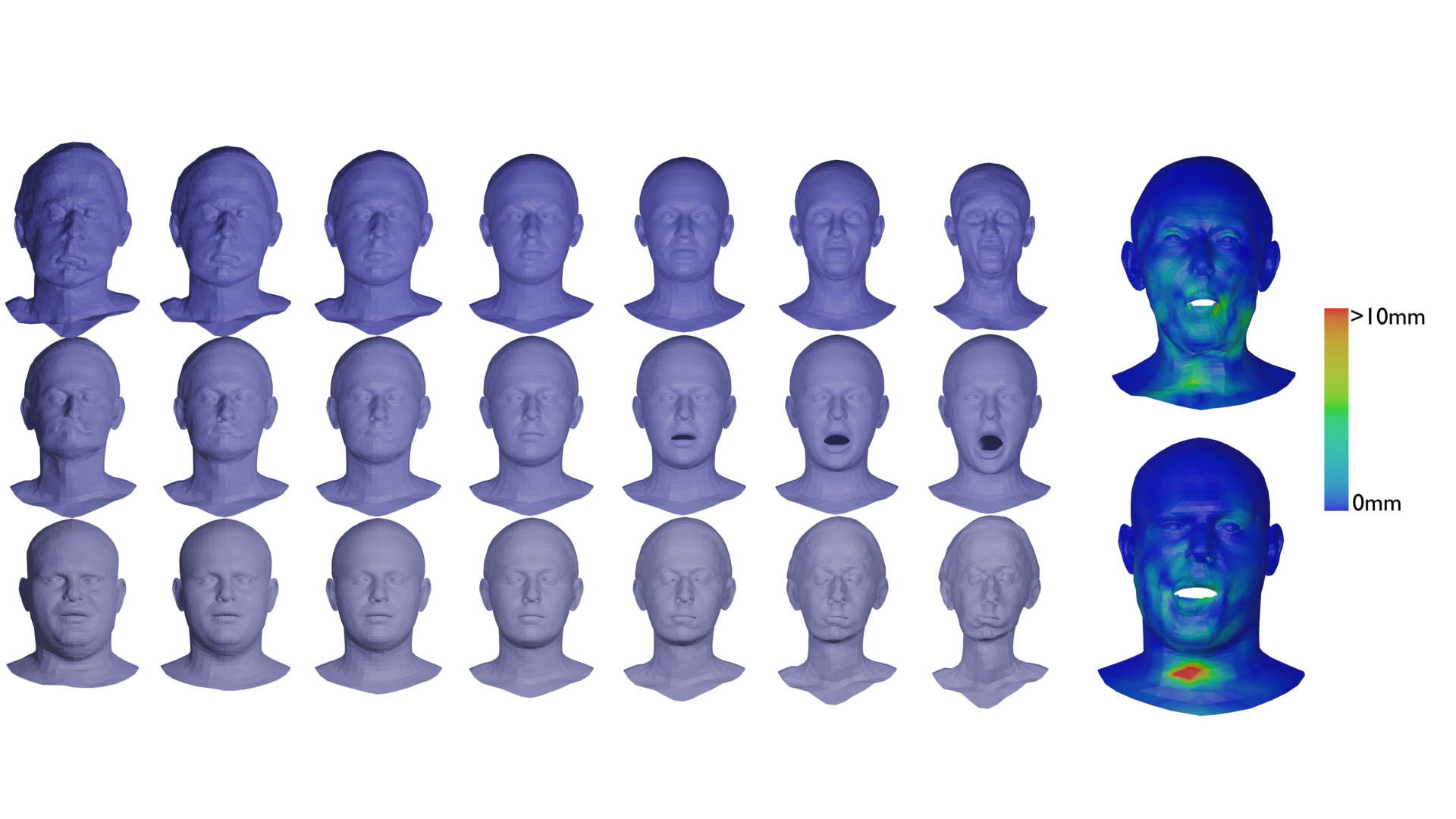}
    \caption{\small Tangent PCA for a set of parametrized surfaces. On the left we display the first three principal component geodesics of a training set. On the right, we display a reconstruction of two elements from a separate testing set, where each vertex is colored based on the Euclidean error of the reconstruction.}
    \label{fig:PCA_CoMA}
\end{figure}

Principal component analysis (PCA) is an important dimensionality reduction technique in statistics for analyzing the variability of data in Euclidean space. More precisely, given data points $x_1,\ldots,x_K \in \R^d$ having zero mean, the goal of PCA is to produce a sequence of linear subspaces $\{W_{\ell}\}_{\ell = 1}^d$ that maximizes the variance of the data when it is projected onto those subspaces~\cite{fletcher2004principal}. This sequence of linear subspaces $W_{\ell} = \operatorname{span}(\{w_1,\ldots,w_{\ell}\})$ for $\ell=1,\ldots,d$ are constructed by finding an orthonormal basis $\{w_1,\ldots,w_d\}$ of $\R^d$ which can be computed as the set of ordered eigenvectors of the sample covariance matrix of the data. Thus, PCA amounts to finding the vectors $\{w_{\ell}\}_{\ell = 1}^d$, which are called the \textit{principal components} of the data.

Extending PCA to manifolds, even in a finite dimensional setting, is not straightforward nor canonical due to the difference in tangent space at each point of the manifold. As a result, several different models and heuristics have been proposed for manifold PCA. Among those, \textit{tangent PCA} \cite{fletcher2004principal} is probably the simplest as it relies on directly linearizing the problem around a single point (the Karcher mean). More specifically, let $(\mathcal{M}, G)$ be a (possibly) infinite-dimensional manifold. Consider data points $x_1,\ldots,x_K \in \mathcal{M}$, and a reference point $\Bar{x} \in \mathcal{M}$, for which a natural choice is e.g. the Karcher mean of the data points. The goal of tangent PCA is to find a set of \textit{principal component geodesics} for the data. By principal component geodesics, we mean a set of geodesics all starting at $\Bar{x}$ whose initial velocities are given by tangent vectors $\{w_{\ell}\}_{\ell = 1}^d \in T_{\Bar{x}}\mathcal{M}$ that are computed as the principal components of the data in the linear space $T_{\Bar{x}}\mathcal{M}$. Thus, tangent PCA amounts to performing standard PCA in $T_{\Bar{x}}\mathcal{M}$, which can be interpreted as finding the ``principal tangent vectors" for the data, i.e., the initial velocities which uniquely determine the geodesics starting at the reference point $\Bar{x}$ along which one has to move on $\mathcal{M}$ in order to maximize the ``variability" of the data.

\begin{figure*}[htbp]
    \centering
    \includegraphics[width=.9\textwidth]{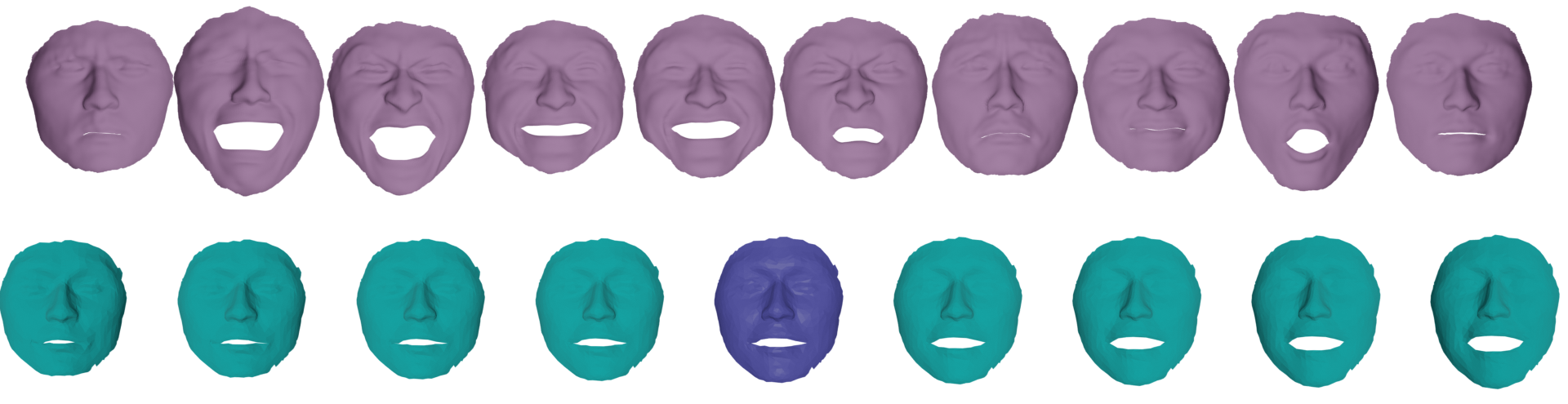}
    \caption{\small First row: a data set of 10 faces with inconsistent mesh structures. Second row: the first principal component geodesic (in the positive and negative directions) from the Karcher mean (purple) of the data set. The principal direction is obtained by tangent PCA.}
    \label{fig:PCA_unparam_faces}
\end{figure*}

We implemented an algorithm for performing tangent PCA when given a set of $K$ surfaces and a reference point, with details given in Algorithm~\ref{alg:tangent_space_pca}. Our method consists of solving $K$ geodesic boundary value problems via Algorithm~\ref{alg:geodesic_bvp_parametrized} (for parametrized surfaces) or Algorithm~\ref{alg:relaxed_match} (for unparametrized surfaces, resp.), using the reference point as the source and each surface in our dataset as respective targets. This produces $K$ geodesics, which we use to estimate $K$ tangent vectors $\{h_k\}$ via finite differences, i.e., by taking the difference between the vertices in the geodesic paths at the first two time points. We then perform PCA on these tangent vectors with respect to the metric $G_{\bar{V}}$ at the reference point. This is specifically done by computing the eigendecomposition $\{\lambda_{\ell}, v_{\ell}\}$ of the $K\times K$ Gram matrix $(G_{\bar{V}}(h_i-\bar{h},h_j-\bar{h}))_{i,j=1,\ldots,K}$, where $\bar{h}=\frac{1}{K}\sum_{k=1}^K h_k$. We then recover the principal component vectors $w_{\ell} = \sum_{k=1}^{K} v_{\ell,k} (h_k-\bar{h})$ and the principal component geodesics by solving initial value problems starting at $\bar{V}$ in the direction of $\lambda_{\ell} w_{\ell}$ using Algorithm~\ref{alg:ivp}. Note that we solve these IVPs in the positive and negative principal directions $\pm \lambda_{\ell} w_{\ell}$ respectively. While we only write the pseudocode for tangent PCA on parametrized surfaces in Algorithm~\ref{alg:tangent_space_pca}, the method works verbatim for the case of unparametrized surfaces, except that the relaxed matching algorithm (Algorithm~\ref{alg:relaxed_match}) is used to solve the $K$ geodesic BVPs. 

To illustrate the effectiveness of tangent PCA, 
we first display the principal component geodesics for an unparametrized dataset of surfaces in Figure~\ref{fig:PCA_unparam_faces}. As a second, more large scale experiment, we analyze the faces of the CoMA dataset \cite{COMA:ECCV18}. As this data comes with known point correspondences, we are able to interpret the data as parametrized surfaces. To evaluate our method we separate the data into a testing set of $\sim2000$ meshes and a training set of $\sim700$ meshes. In Figure~\ref{fig:PCA_CoMA}, we illustrate the principal component geodesics of the training set computed using our method. To reconstruct a target mesh, we then perform an unparametrized geodesic matching from a template to the target with respect to the first 40 tangent PCA basis vectors. In particular, we optimize the relaxed matching energy over all paths where the tangent vectors of the path can be written as a linear combination of the tangent PCA bases. In Figure~\ref{fig:PCA_CoMA}, we also display such a reconstruction of two surfaces from the testing set. When we reconstruct the entire testing set in this way we achieve 75\% of all vertices within a Euclidean error of 1mm. For comparison, the percentage of vertices within 1mm accuracy is 47\% when using traditional PCA and 72\% when using the Mesh Autoencoder methods of \cite{COMA:ECCV18}.

\begin{algorithm}
\caption{Tangent PCA (TPCA)}
\label{alg:tangent_space_pca}
\begin{algorithmic} 

\Procedure{Parametrized\_TPCA}{$V_1,\ldots,V_K$, $\bar V$} \\
\noindent $V_1,\ldots,V_K$ : vertices of parametrized surfaces from sample\\
$\bar V$: vertices of the reference point
\For{$k = 1,\ldots,K$}
    \State $V = \Call{Linear\_Interpolation}{\Bar{V}, V_k}$
    \State $V = \Call{Parametrized\_Geodesic\_BVP}{\Bar{V}, V_{k}, V}$
    \State $h_k = N(V(1/N) - V(0))$ 
\EndFor
\State $\{\lambda_{\ell},w_{\ell}\} = \Call{PCA}{h_1,\ldots, h_K, \bar V}$
\For{${\ell} = 1,\ldots,L$}
    \State $P_{\ell}^{+} = \Call{Geodesic\_IVP}{\Bar{V}, \lambda_{\ell} w_{\ell}}$
    \State $P_{\ell}^{-} = \Call{Geodesic\_IVP}{\Bar{V}, - \lambda_{\ell} w_{\ell}}$
\EndFor
\State \Return $\{P_{\ell}^{+}, P_{\ell}^{-}\}$
\EndProcedure

\Procedure{PCA}{$h_1,\ldots,h_K$,$\Bar{V}$}
\State $\bar{h}=\frac{1}{K}\sum_{k=1}^K h_k$
\State $G_{\Bar{V}}$ = \text{Riemannian $H^2$-metric at $\Bar{V}$}
\State $\Sigma = (G_{\Bar{V}}(h_i-\bar{h},h_j-\bar{h}))_{i,j=1,\ldots,K}$
\State $\{ \lambda_{\ell}, v_{\ell} \} = \text{eigenvalues and eigenvectors of $\Sigma$}$.
\State $w_{\ell} = \sum_{k=1}^K v_{\ell,k} (h_k-\bar{h})$
\State \Return $\{ \lambda_{\ell}, w_{\ell} \}$
\EndProcedure

\end{algorithmic}
\end{algorithm}

\begin{figure*}[htbp]
    \centering
    \includegraphics[width=.9\textwidth]{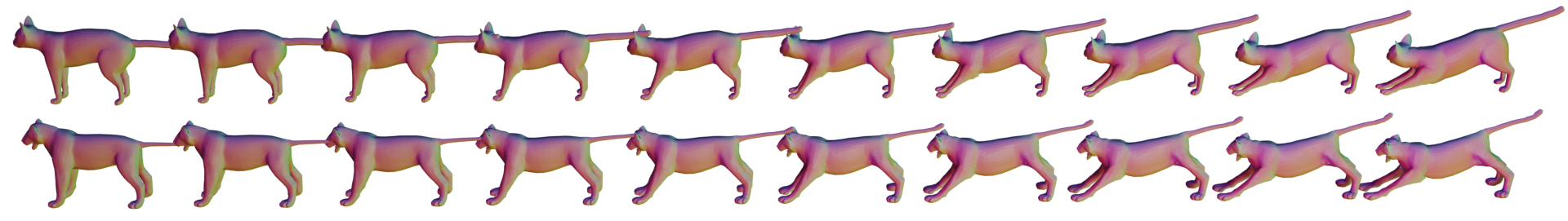}
    \caption{\small  Example of parallel transport using Schild's ladder. We compute the initial tangent vector in the direction of the top geodesic, use Schild's ladder to transport the tangent vector along the geodesic between the leftmost surfaces, and finally compute the geodesic on the the bottom as an IVP. Animations of the obtained motion transfer can be found in the supplementary material and on the github repository.}
    \label{fig:paramMT}
\end{figure*}

\subsection{Parallel transport}
Parallel transport is a method of transporting geometric data (tangent vectors) between different points in a manifold. In our situation this concept has a natural application to motion transfer, as shown in~Figure \ref{fig:paramMT}. Given a geodesic (i.e., a motion) between a source and target surface (e.g. the two cats in Figure \ref{fig:paramMT}), we can transfer the motion to a new source shape (e.g. the lioness in Figure \ref{fig:paramMT}) by parallel transporting the initial velocity from the geodesic motion to the new source shape, and then solving an initial value problem starting at the new source shape with initial velocity given by the parallel transported tangent vector. This procedure requires that we approximate parallel transport of tangent vectors on $\Sol$.  We use an implementation of Schild's ladder to produce a first-order approximation of parallel transport \cite{kheyfets2000schild,guigui2021numerical}. Given a Riemannian manifold $\mathcal{M}$, with $x_0,x_1\in \mathcal{M}$, $h\in T_{x_0} \mathcal{M}$, and letting $V$ be a geodesic such that $V(0)=x_0$ and $V(1)=x_1$, the calculation of parallel transport using Schild's ladder requires one to iteratively compute several small geodesic parallelograms with one side corresponding to a small step along $V$ and the other side being a small step in the direction of $h$. The transport of $h$ for this small step along $V$ is defined to be the log map of the side opposite of $h$. One then repeats the computation of these rungs until reaching $x_1$. An algorithmic explanation of this method is given in Algorithm \ref{alg:parallel_transport} below.
\begin{algorithm}
\caption{Parallel Transport}
\label{alg:parallel_transport}
\begin{algorithmic} 

\Procedure{Parametrized\_Parallel\_Transport}{$V,h,N$} \\
\noindent $V$ : geodesic to transport the tangent vector along\\
\noindent $h$ : tangent vector to be transported\\
\noindent $N$ : number of iterations for Schild's ladder
\For{$i = 1,\ldots,N$}
    \State $W=V(\frac{i-1}{N})+\frac{1}{N}h$
    \State $U=\Call{Linear\_Interpolation}{(V(i/N), W)}$
    \State $M=\Call{Parametrized\_Geodesic\_BVP}{V(i/N),W,U}\left(\frac{1}{2}\right)$
    \State $k=M-V(\frac{i-1}{N})$
    \State $h=\Call{Geodesic\_IVP}{V(\frac{i-1}{N}),2k}(1)-V(i/N)$
\EndFor
\State \Return 
\EndProcedure

\end{algorithmic}
\end{algorithm}

% ------------ RELAXED MATCHING PROBLEM WITH WEIGHTS ---------------------------
\section{Partial Matching}
\label{sec:partial_matching}
In this final section, we further extend the second-order elastic surface analysis framework introduced in the previous sections by augmenting the surface matching model with the estimation of spatially-varying weights on the source shape. As we will show, this approach will enable us to compare and perform statistics on sets of surfaces which may have incompatible topological properties or exhibit partially missing data. 

\subsection{Limitations of the previous framework}
\label{ssec:limitations}
We start by motivating the need for this extended approach. Indeed, the relaxed matching framework presented so far in~\eqref{eq:relaxed_matching_varifold_symmetric} (as well as its non-symmetric version~\eqref{eq:relaxed_matching_varifold_asymmetric}) is primarily designed for the comparison of complete surfaces with consistent topology, as illustrated by the examples in Figure~\ref{fig:partial_finger_bones}.

Although the matching obtained from~\eqref{eq:relaxed_matching_varifold_symmetric} is inexact and may in practice be able to handle small inconsistencies including topological noise, it remains ill-suited for datasets involving surfaces with significant missing parts or important topological differences (either artifactual or not). Attempting to compare two such surfaces based on model~\eqref{eq:relaxed_matching_varifold_symmetric} is likely to lead to highly singular behaviour in the estimated geodesics and distances. This was already emphasized in the case of planar shapes (such as curves and shape graphs) in the authors' previous publication \cite{sukurdeep2021new}, and can be further observed in the case of 3D surfaces, as seen e.g. in Figure~\ref{fig:spikes} with the formation of geometric artifacts such as the thin arc around the ear of the skull, and in Figure~\ref{fig:partial_finger_bones} with phalanges that shrink to almost zero volume.

To address this shortcoming in our model, we propose to incorporate \textit{partial matching} capabilities in our framework. Extending the idea introduced in \cite{sukurdeep2021new}, we do so indirectly by considering surfaces augmented with a weight function defined on their support, leveraging the flexibility of the varifold representation for that purpose. This will lead to a new matching formulation between pairs of \textit{weighted} surfaces, where, in combination to the geometric matching process, one can vary the weights assigned to different components or parts of the source surface. In particular, this allows us, by setting weights to $0$ in specific areas, to remove parts of the source when they have no corresponding parts in the target surface, as shown in Figure~\ref{fig:partial_finger_bones} and Figure~\ref{fig:partial_spheres}.

\subsection{The varifold norm on the space of weighted surfaces}
\label{ssec:weighted_varifold_norm}
We first define a \textit{parametrized weighted surface} as a couple $(q, \rho)$, where $q \in \mathcal{I}$ is a parametrized surface as previously defined and $\rho: M \to [0,1]$ is a function on the parameter space $M$. For each $(u,v) \in M$, one can interpret $\rho(u,v)$ as the weight assigned to the point $q(u,v)$ on the surface. The primary reason to assume the values of $\rho$ in the interval $[0,1]$ is that we are focusing on the issue of partial matching. In such a scenario, it is indeed natural to impose this constraint, with the interpretation being that the weight function to be estimated in the matching problem introduced below should vanish on parts of the transformed surface that need to get erased to adequately match the target, while remaining roughly equal to $1$ on the other parts. Note that in other situations such as shapes with multiplicities, one could consider more general $\R_+$-valued weight functions. 

Any such weighted surface $(q, \rho)$ can still be represented as the varifold that we write $\mu_{q,\rho} := \rho \cdot \mu_q$, which is defined as the image measure $(q, n_q)_* (\rho\vol_{q})$, where $n_q$ is the unit oriented normal field of $q$, as defined earlier in Section~\ref{ssec:parametrized_surfaces}, and $\rho \vol_{q}$ is the area form on $M$ induced by $q$ rescaled by the weight function $\rho$. 
% More specifically we define, for all $(u,v) \in M$,
% \begin{equation*}
%     \vol_{q,\rho}(u,v) := \rho(u,v) \vol_q(u,v) = \rho(u,v) |q_u \times q_v| du dv.
% \end{equation*} 
% Thus, for any Borel set $B \subset \R^3 \times S^2$, $\mu_{q,\rho}(B)$ measures the total area with respect to $\vol_{q,\rho}$ of all $(u,v) \in M$ such that $(q(u,v), n_q(u,v))$ belongs to $B$.
With this definition, the kernel metrics on varifolds outlined earlier in Section~\ref{ssec:varifolds} immediately induce a fidelity metric between weighted surfaces. 
% As a reminder, this is done by considering a reproducing kernel Hilbert space of functions on $\mathbb R^3\times S^{2}$, whose reproducing kernel is of the form
% \begin{equation*}
% k_V(x_1,n_1,x_2,n_2) = \Psi(|x_1-x_2|) \Phi(n_1 \cdot n_2)\;, 
% \end{equation*} 
% in which $\Psi$ and $\Phi$ are two functions defining a radial kernel on $\mathbb{R}^3$ and a zonal kernel on $S^{2}$, respectively, just like in~\eqref{eq:kernel_var}. 
Specifically, the kernel inner product in $V^*$ between two weighted varifolds $\mu_{q_0,\rho_0}$ and $\mu_{q_1,\rho_1}$ is given explicitly by:
\begin{multline}
\label{eq:norm_var_weighted}
% \langle \mu_{q_0,\rho_0},\mu_{q_1,\rho_1}\rangle_{V^*}^{\mathcal{W}}=\iint_{M \times M} \Psi\big(|q_0(u_0,v_0)- q_1(u_1,v_1)|\big)\\
% \Phi\big(n_{q_0}(u_0,v_0)\cdot n_{q_1}(u_1,v_1)\big) \rho_0(u_0,v_0) \rho_1(u_1,v_1) \vol_{q_0}(u_0,v_0) \vol_{q_1}(u_1,v_1).
\langle \mu_{q_0,\rho_0},\mu_{q_1,\rho_1}\rangle_{V^*} := \\
{\iint}_{M \times M} \Psi(|q_0 - q_1|)\Phi(n_0 \cdot n_1) \rho_0\rho_1 \vol_{q_0} \vol_{q_1},
\end{multline}
where we have dropped the coordinates $(u_0,v_0)$ and $(u_1,v_1)$ in the above expression for concision. This simply amounts to a weighted version of \eqref{eq:norm_var}. The squared weighted varifold kernel distance $\| \mu_{q_0,\rho_0} - \mu_{q_1,\rho_1} \|_{V^*}^2$ can again be obtained via a quadratic expansion, exactly as in~\eqref{eq:var_discr}. 
%We shall denote this distance by $\dist_{\Var}^{\mathcal{W}}((q_0,\rho_0),(q_1,\rho_1))$.
% \begin{equation}
% \label{eq:var_discr_weighted}
% \dist_{\Var}^{\mathcal{W}}((q_0,\rho_0),(q_1,\rho_1))=\|\mu_{q_0,\rho_0}\|_{V^*}^{\mathcal{W}}^2-2\langle \mu_{q_0,\rho_0},\mu_{q_1,\rho_1} \rangle_{V^*}^{\mathcal{W}} + \|\mu_{q_1,\rho_1}\|_{V^*}^{\mathcal{W}}^2 \;.
% \end{equation} 

\subsection{Relaxed surface matching with weights}
\label{ssec:relaxed_matching_weights}
We are now able to formulate the extension of the matching problem of Section \ref{ssec:relaxed_surface_matching} to weighted surfaces:

% \todoyashil{With the new notation for the varifold fidelity metric, the notation $\mu_{q, z_{\epsilon}(\rho)}$ seemed weird to me, so I did not introduce the clipping function in this section, and just did it in section 6.4 instead.}
% \todomartin{I am ok with that. We just need to make clear that we really always do that. As it is written in the moment it seems to disappear a bit.}
% \todoyashil{I added a line (in red for now) to highlight this point}

\begin{framed}
Given a pair of weighted surfaces $(q_0, \rho_0)$ and $(q_1, \rho_1)$, we consider the variational problem:

\begin{equation}
\label{eq:relaxed_matching_varifold_symmetric_weighted}
\begin{aligned}
& \inf \bigg\{ \int_0^1 G_{q(t)}(\partial_t q(t),\partial_t q(t)) dt ~+\\
& \qquad \lambda_0~\| \mu_{q(0)} - \mu_{q_0} \|_{V^*}^2 + \lambda_1~\| \mu_{q(1), \rho} - \mu_{q_1, \rho_1} \|_{V^*}^2 \bigg\},
%& \qquad \qquad + \lambda_0~\dist_{\Var}(q(0), q_0) \\
%& \qquad \qquad + \lambda_1~\dist_{\Var}^{\mathcal{W}}((q(1), z_{\epsilon}(\rho)), (q_1, \rho_1)) \bigg\},
\end{aligned}
\end{equation}
where the infimum is taken over paths of immersed surfaces $q(\cdot) \in C^{\infty}([0,1], \I)$, and also over all weight functions $\rho: M \to [0,1]$, with $\lambda_0, \lambda_1 > 0$ being balancing parameters. 
\end{framed}

% \begin{framed}
% Given a pair of weighted surfaces $(q_0, \rho_0)$ and $(q_1, \rho_1)$, we consider the variational problem:
% \begin{equation}
% \label{eq:relaxed_matching_varifold_asymmetric_weighted}
% \begin{aligned}
% &\inf \bigg\{ \int_0^1 G_{q(t)}(\partial_t q(t),\partial_t q(t)) dt \\
% %& \qquad \qquad + \lambda~\dist_{\Var}^{\mathcal{W}}((q(1), z_{\epsilon}(\rho)), (q_1, \rho_1)) \bigg\},
% & \qquad \qquad + \lambda~\| \mu_{q(1), z_{\epsilon}(\rho)} - \mu_{q_1, \rho_1} \|_{V^*}^2 \bigg\},
% \end{aligned}
% \end{equation}
% where the infinimum is taken over paths of immersed surfaces $q(\cdot) \in C^{\infty}([0,1], \I)$ satisfying $q(0) = q_0$, and also over all non-negative weight functions $\rho: M \to \R_+$, with $\lambda > 0$ being a balancing parameter.
% \end{framed}

In this framework, we refer to $(q_0, \rho_0)$ as the \textit{source}, $(q(1), \rho)$ as the \textit{transformed source}, and $(q_1, \rho_1)$ as the \textit{target}. Note that in addition to relaxing the end time constraint, we have also relaxed the initial constraint of $q(0)$ being $q_0$ via a second varifold fidelity term in the model above. Similarly to what was explained in the context of~\eqref{eq:relaxed_matching_varifold_symmetric}, this allows us to choose the topological and mesh properties of the path $q(\cdot)$ independently of those of the source $q_0$, once again paving the way for the use of the efficient multiresolution scheme from~\cite{bauer2021numerical} to numerically solve this matching problem between weighted surfaces. We note that one could also formulate an asymmetric version of problem~\eqref{eq:relaxed_matching_varifold_symmetric_weighted} by instead enforcing the initial constraint $q(0) = q_0$ as in~\eqref{eq:relaxed_matching_varifold_asymmetric}. Furthermore, aside from the constraint of $\rho$ taking its values in $[0,1]$, the variational problem~\eqref{eq:relaxed_matching_varifold_symmetric_weighted} does not involve any cost penalty on the weight function. Yet it would be possible to add regularizers for the weight function to the functional, including for instance the total variation norm of $\rho-\rho_0$ as done in \cite{sukurdeep2021new} for planar shapes so as to promote piecewise constant weight functions. In the context of partial matching, it may also be relevant to enforce $\rho$ to take values close to $0$ or $1$, which can be achieved e.g. by adding a double well pointwise penalty of the form $\int_M (\rho(u,v) (\rho(u,v) - 1))^2 \vol_{q}(u,v)$. A clear downside to including extra regularizers is the added layer of complexity to the matching model due to the presence of extra terms and balancing parameters. For that reason, we decided to focus this work on the above unpenalized formulation.    

\begin{remark}
We emphasize that in~\eqref{eq:relaxed_matching_varifold_symmetric_weighted}, we only allow for weight variations on the transformed source, which lets us model in particular the erasure of parts of $q(1)$ so as to match the target. This is useful e.g. in the context of partial matching problems with missing data in the target shape, see Figure~\ref{fig:partial_finger_bones}. One can easily adapt the model to allow for weight estimation on the target by minimizing over a weight function $\tilde \rho$ defined on $q_1$, with the weights on the transformed source shape being kept fixed. More generally, one could technically model weight variations on both shapes, by jointly optimizing over two weight functions $\rho$ and $\tilde \rho$. However, the latter case requires careful regularization on those functions in order to prevent the trivial solution of setting all weights to 0. We will thus leave the study of this case to future work. 
\end{remark}

\subsection{Numerical optimization with weights}
\label{ssec:numerical_optimization_weights}

\begin{figure*}
    \centering
    \includegraphics[width=.9\textwidth]{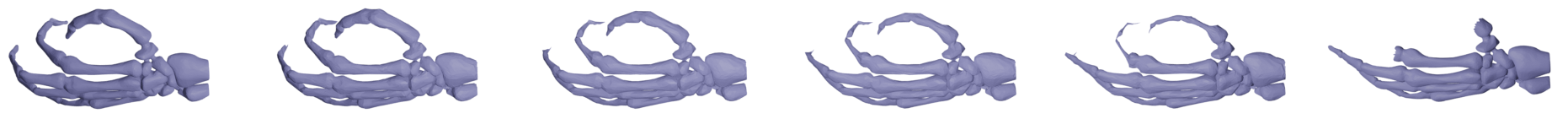}
    \includegraphics[width=.9\textwidth]{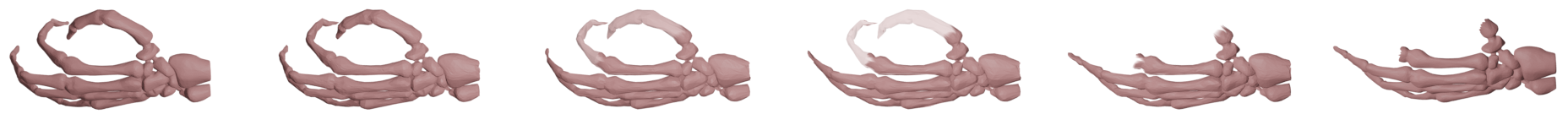}
    \begin{tabular}{m{.55cm}m{2.45cm}m{2.3cm}m{2.25cm}m{2.35cm}m{2.35cm}m{2.4cm}}
    &\phantom{hh}source&\phantom{hh}$q(0)$&\phantom{h}$q(1/3)$&\phantom{h}$q(2/3)$&\phantom{h} $q(1)$&target
    \end{tabular}
    \caption{\small Matching with missing data. We use a complete set of phalanges (i.e., hand bones) as the source, and a different set of phalanges as the target, where some bones on the index finger and thumb were artificially removed. Top row: We matched the surfaces without weight estimation using Algorithm~\ref{alg:relaxed_match}. The parts of the transformed source that are getting matched to the removed bones from the target get shrunk to almost zero volume. The estimated geodesic distance is 117.006. Bottom row: We augment the surfaces with weights and use Algorithm~\ref{alg:relaxed_match_weighted} to match them. Our model correctly ``erases" (i.e., estimates vanishing weights) the appropriate parts of the transformed source to account for the corresponding missing bones on the target. This produces a natural looking geodesic between the source and target, without the production of singularities, with a lower estimated geodesic distance of 114.564.}
    \label{fig:partial_finger_bones}
\end{figure*}

We now discuss our approach for numerically solving the matching problem between weighted surfaces, whose discretization can be performed in similar fashion as previously. A discrete weighted surface $(q, \rho)$ is once again represented as a triangular mesh $(V,E,F)$ as in Section~\ref{sec:numerical_optimization_approach}, while the weight function $\rho$ shall be modelled by its discrete set of values at the center $c_f$ of each face $f \in F$ of the mesh, i.e., by the vector in $[0,1]^{|F|}$ with entries $\rho_f := \rho(c_f)$.

Then, letting $(V, E, F,\rho)$ and $(\Tilde{V}, \Tilde{E}, \Tilde{F},\Tilde{\rho})$ denote the discretizations of two weighted surfaces $(q,\rho)$ and $(\Tilde{q},\Tilde{\rho})$, we can first approximate the varifold inner product: 
\begin{align*}
    &\langle\mu_{q,\rho},\mu_{\Tilde{q},\Tilde{\rho}}\rangle_{V^*} \approx \\
    &\qquad \sum_{f \in F}\sum_{\Tilde{f} \in \Tilde{F}}\Psi(|c_{f}-c_{\Tilde{f}}|)
    \Phi(n_{f}\cdot n_{\Tilde{f}})\rho_f \rho_{\Tilde{f}} \vol_{f}\vol_{\Tilde{f}},
\end{align*}
where $n_f, n_{\Tilde{f}}$ and $\vol_{f}, \vol_{\Tilde{f}}$ are the unit normals and volume forms that have been discretized over the faces $f \in F$ and $\Tilde{f} \in \Tilde{F}$ of the meshes, as outlined in Section~\ref{ssec:metrics_triangular_meshes}. 
%This can be interpreted as approximating the integrand of the continuous formula \eqref{eq:norm_var_weighted} by constants over each pair of faces $(f,\Tilde{f})$. 
The full varifold fidelity term $\| \mu_{q,\rho} - \mu_{\Tilde{q}, \Tilde{\rho}} \|_{V^*}^2$ is then obtained as in~\eqref{eq:var_discr}, via the quadratic expansion of the squared norm.

Equipped with the discretizations of the $H^2$-path energy described in Section~\ref{ssec:discretizing_h2_path_energy}, of the varifold norm described in Section~\ref{ssec:discretizing_varifold_norm}, and of the weighted varifold norm described above, we are led to numerically solve~\eqref{eq:relaxed_matching_varifold_symmetric_weighted} as finite dimensional optimization problem, where the minimization occurs jointly over the vertices of the discretized piece-wise linear path of meshes $V:[0,1] \to \Sol$ and over the discretized weight function $\rho \in [0,1]^{|F|}$. In order to deal with the box constraints on the values of $\rho$, we minimize the discretized matching functional using the bound constrained limited memory BFGS (L-BFGS-B) algorithm~\cite{byrd1995limited}, whose implementation is available through \texttt{scipy}. We summarize the weighted surface matching approach in Algorithm~\ref{alg:relaxed_match_weighted} below. 

%we introduce the following clipping function $z_{\epsilon}: \R \to (-\epsilon, 1+\epsilon)$ defined for all $s \in \R$ as
%\begin{equation*}
%    \label{eq:clipping_function}
%    z_{\epsilon}(s) := \min\{\max\{-\epsilon, s\}, 1+\epsilon \},
%\end{equation*}
%where $\epsilon > 0$ is a small constant used for stability in our numerical implementation. We then write $\rho := z_{\epsilon}(\check{\rho})$ (the clipping function being applied to each coordinate of the vector), where $\check{\rho}\in\R^{|F|}$ is now an unconstrained weight variable. This allows us to reduce the problem to a an unconstrained minimization over $\check{\rho}$. 

\begin{algorithm}
\caption{Relaxed Matching for Weighted Surfaces}
\label{alg:relaxed_match_weighted}
\begin{algorithmic} 
\Procedure{Weighted\_Matching}{$(V_0,\rho_0),(V_1,\rho_1),V, \rho$} \\
\noindent $V_0$ : triangular mesh for the source 
\\$\rho_0$ : weights on the source
\\$V_1$ : triangular mesh for the target
\\$\rho_1$ : weights on the target.
\\$V$ : initial guess for a PL path in $\Sol$. 
\\$\rho$ : initial guess for weights on the transformed source
\State $\operatorname{cost}(V,\rho)=\lambda_0\Call{DistVar}{V(0),V_0} + E(V)$
\State \qquad \qquad \qquad \qquad $+\lambda_1\Call{DistVar}{(V(1),\rho)),(V_1,\rho_1)}$ 
\State $V, \rho = \Call{L-BFGS-B}{V,\rho,\operatorname{cost}}$
\State \Return $V$, $\rho$
\EndProcedure
\end{algorithmic}
\end{algorithm}

% \begin{remark}
% We can approximate the $\{0,1\}$-penalty~\eqref{eq:double_well_penalty} on a mesh $(V,E,F)$ with discretized weights $\rho$ as follows:
% \begin{equation*}
%     \mathcal{P}(V,\rho) \approx \sum_{f \in F} 8 (\rho_f(\rho_f - 1))^2 \vol_f.
% \end{equation*}
% Thus, one can use the approximation above when using this explicit regularizer to match weighted surfaces by numerically solving~\eqref{eq:relaxed_matching_varifold_symmetric_weighted_01pen}.
% \end{remark}

\subsection{Partial matching experiments}
\label{ssec:partial_matching_experiments}
To illustrate the capabilities of the weighted surface framework for partial matching, we performed several numerical experiments. In all the figures, we compute the linear interpolation $(1-t)\rho_0 + t \rho$  between the initial and estimated weight function and show this interpolated weight function along the geodesic through a transparency map in order to highlight in a more visual way the effect of weight variations.

First, we demonstrate the benefits of weight estimation when comparing surfaces with missing parts, see the last row of Figure~\ref{fig:main} where we perform a matching with a partially-observed femur bone, and Figure~\ref{fig:partial_finger_bones} where we use an incomplete set of phalanges. Partially observed or incomplete data is a common occurrence in practice and can be due to several factors, including segmentation issues, inconsistent field of views or occlusions during the data acquisition process. Typically, matching surfaces with missing parts using standard elastic surface matching techniques will result in the transformed source getting bent, stretched or compressed in an attempt to fill in some of those missing parts. This can result in unnatural deformations (see the fairly extreme shrinking of phalanges in Figure~\ref{fig:partial_finger_bones}) and in turn in an overestimation of the geodesic distance between these surfaces, making any subsequent statistical shape analysis spurious for datasets of partially observed surfaces. As evidenced by the second row of Figure~\ref{fig:partial_finger_bones}, the proposed approach overcomes this difficulty through the automatic estimation of vanishing weights at the location on the parts of the source shape corresponding to the missing ones of the target.

Second, we computed the matching of surfaces with completely different topologies, as shown with the example of the skulls in Figure~\ref{fig:spikes} and the synthetic examples from Figure~\ref{fig:partial_spheres} and Figure~\ref{fig:partial_torus}. 
%We note that topological inconsistencies can manifest themselves in several ways, such as topological noise inherited from the data acquisition process, as with the skulls in Figure~\ref{fig:spikes}, or when working with surfaces with different genuses or different numbers of connected components, as with the synthetic examples of Figure~\ref{fig:partial_spheres}. 
The estimation of vanishing weights indirectly allows us to recover several useful types of transformations that are otherwise not achievable in the original model. For instance, it enables the model to erase the thin arc near the left ear of the turquoise skull in Figure~\ref{fig:spikes} as opposed to geometrically shrinking it. It further allows for the splitting of a surface into several connected components, as shown in Figure~\ref{fig:partial_spheres}, as well as the creation of holes when matching surfaces with different genuses, as illustrated by Figure~\ref{fig:partial_torus}. It should be noted, however, that this approach does not directly model topological changes in the mesh of the transformed source (which remains the same along the geodesic), but rather allows us to compare objects with different topologies by erasing parts of the transformed source via the weight function.

\begin{figure}
\centering
    \includegraphics[trim = 20mm 110mm 15mm 110mm ,clip, width=.49\textwidth]{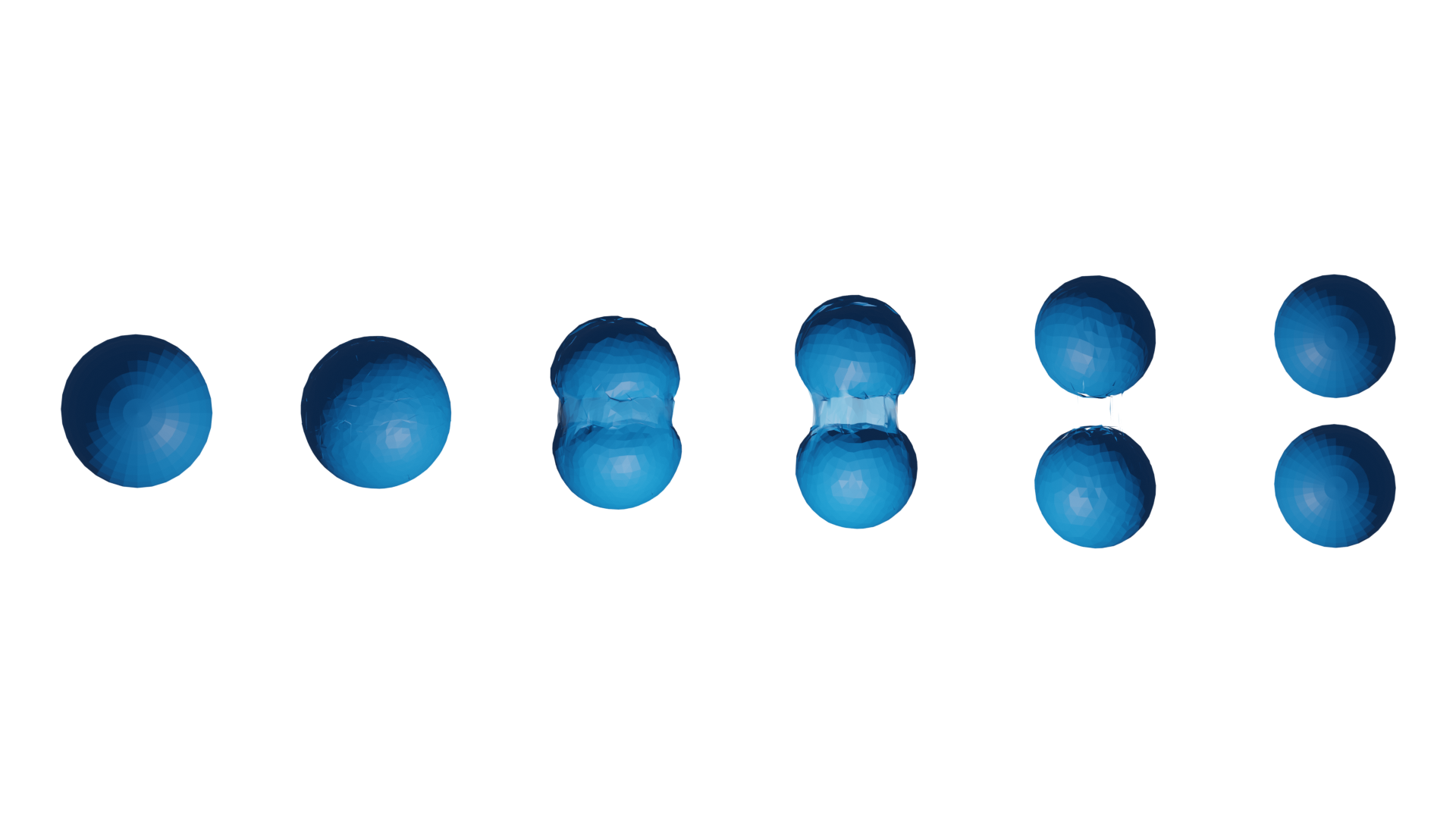}
    \begin{tabular}{m{0.85cm}m{1.1cm}m{1.0cm}m{1.1cm}m{1.1cm}m{1.2cm}}
    source&\phantom{hh}$q(0)$&\phantom{h}$q(1/3)$&\phantom{h}$q(2/3)$&\phantom{h} $q(1)$&target
    \end{tabular}
    \caption{\small Splitting into multiple components. We match a single sphere with two disconnected spheres using Algorithm~\ref{alg:relaxed_match_weighted}. The transformed source $q(1)$ contains a ``bridge" between the two spheres in the target where the algorithm estimates zero weights.}% The estimated geodesic distance is~6.12.}
    \label{fig:partial_spheres}
\end{figure}
% Energy spheres to 2 spheres: weights=6.120235443115234 , no_weights=6.424893379211426

% \todomartin{Is it only the geodesic distance that you present or the sum of geodesic distance and varifold missmatch error? Also what happens if you crank up the discretization further? Do we get rid of the small artifcats between the spheres?}
%     \todoyashil{It's the geodesic distance only (without the varifold error). I've included another geodesic with even higher discretizations (30,000 faces) on row 3 - the small artifacts don't fully disappear. The one I've displayed is the best we've got (it's around 12,000 faces - which is high resolution enough).}\todomartin{Ok, sounds good to me. we can decide later if we want to show the geodesic with both weights and without weights. I think we are essentially done with experiments, so if you have time you and Emmanuel can start with the intro and conclusions}.

\begin{figure}
\centering
    \includegraphics[width=.49\textwidth]{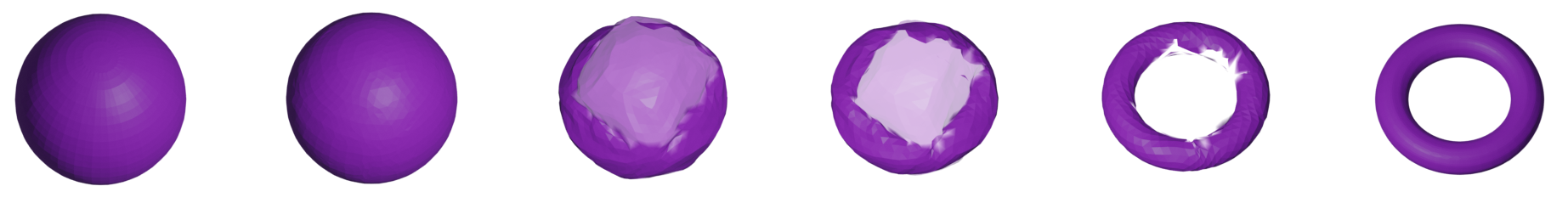}
    \begin{tabular}{m{0.75cm}m{1.1cm}m{1.1cm}m{1.075cm}m{1.1cm}m{1.25cm}}
    source&\phantom{hh}$q(0)$&\phantom{h}$q(1/3)$&\phantom{h}$q(2/3)$&\phantom{h} $q(1)$&target
    \end{tabular}
    \caption{\small Matching with highly inconsistent topological structures. We match a sphere (genus zero surface) and a torus (genus one surface) via Algorithm~\ref{alg:relaxed_match_weighted}. Our model artificially accounts for the creation of a hole, i.e., the change in topology, via the estimation of vanishing weights.}
    \label{fig:partial_torus}
\end{figure}

Lastly, we considered a case of Karcher mean estimation under partial observations. As a proof of concept, we computed the Karcher mean of a set of five distinct hands, each missing a different finger which was artificially removed, see Figure~\ref{fig:kmean_partial}. Following the same principle as the algorithm for Karcher mean estimation presented in Section~\ref{ssec:karcher_mean} (Algorithm~\ref{alg:karcher_mean}), we applied Algorithm~\ref{alg:relaxed_match_weighted} to iteratively solve weighted matching problems from the current Karcher mean estimate to a randomly chosen surface from our dataset. As the initial guess for the Karcher mean, we used a complete hand (i.e. a closed mesh with five fingers) from a different subject. While other choices for the initial guess, e.g. an ellipsoid, are possible, poorly chosen initializations will result in slower convergence to the Karcher mean and  potentially to a lower mesh quality of the estimated Karcher mean. The joint estimation of weights at each successive matching prevents the geometric shrinking of one of the fingers and ultimately results in the realistic looking Karcher mean displayed in Figure~\ref{fig:kmean_partial}. In Figure~\ref{fig:karcher_mean_geodesics}, we also show the computed geodesics from the Karcher mean to each subject. We also report the Riemannian energy of the geodesic path for each of these geodesics in Table~\ref{tab:karcher_mean_distances}. As a point of comparison, we also ran the non-weighted Algorithm~\ref{alg:relaxed_match} between the Karcher mean estimate and each of the corresponding \textit{complete} hands (i.e., without the artificially removed fingers), and report the Riemannian energy of the resulting geodesics in the last column of Table~\ref{tab:karcher_mean_distances}. We observe that the geodesic distance estimates reported in Table~\ref{tab:karcher_mean_distances} are comparable and quite consistent in both scenarios. This point highlights the reliability of the distance estimates obtained with weight estimation and hints at the potential viability of this approach for statistical shape analysis of datasets of partially observed surfaces.
\begin{figure}
    \centering
    \includegraphics[trim = 160mm 10mm 150mm 5mm ,clip,width=.45\textwidth]{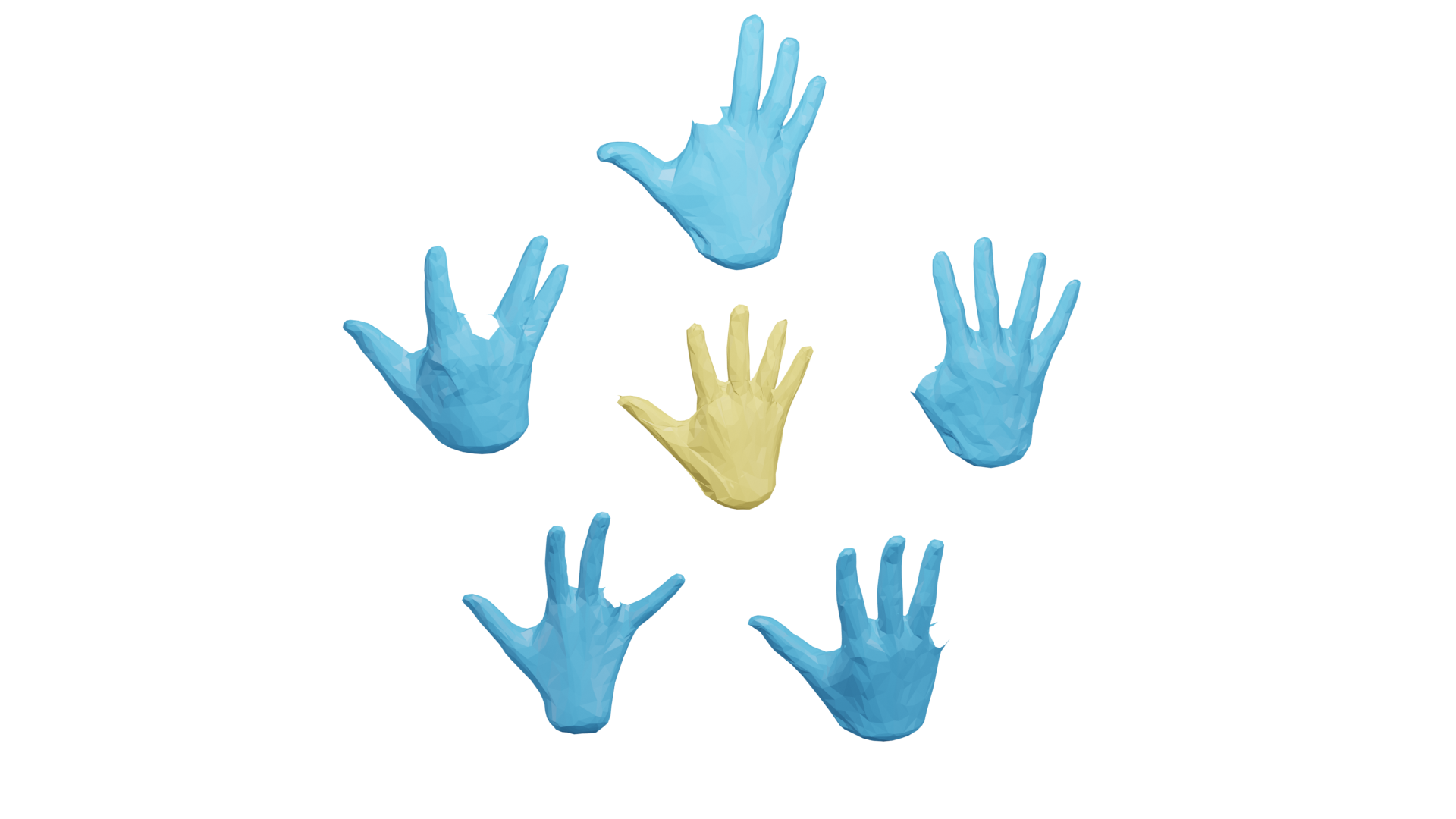}
    \begin{tabular}{m{.75cm}m{2.4cm}m{2.4cm}m{2.4cm}m{2.4cm}m{2.4cm}m{2.4cm}}
    \end{tabular}
    \caption{\small Karcher mean estimation with weights. The data (turquoise) consists of $5$ distinct hands each missing a different finger, and the Karcher mean estimate (yellow) is a complete hand.}
    \label{fig:kmean_partial}
\end{figure}

\begin{figure}
\centering
    \includegraphics[trim = 20mm 0mm 20mm 0mm ,clip, width=.49\textwidth]{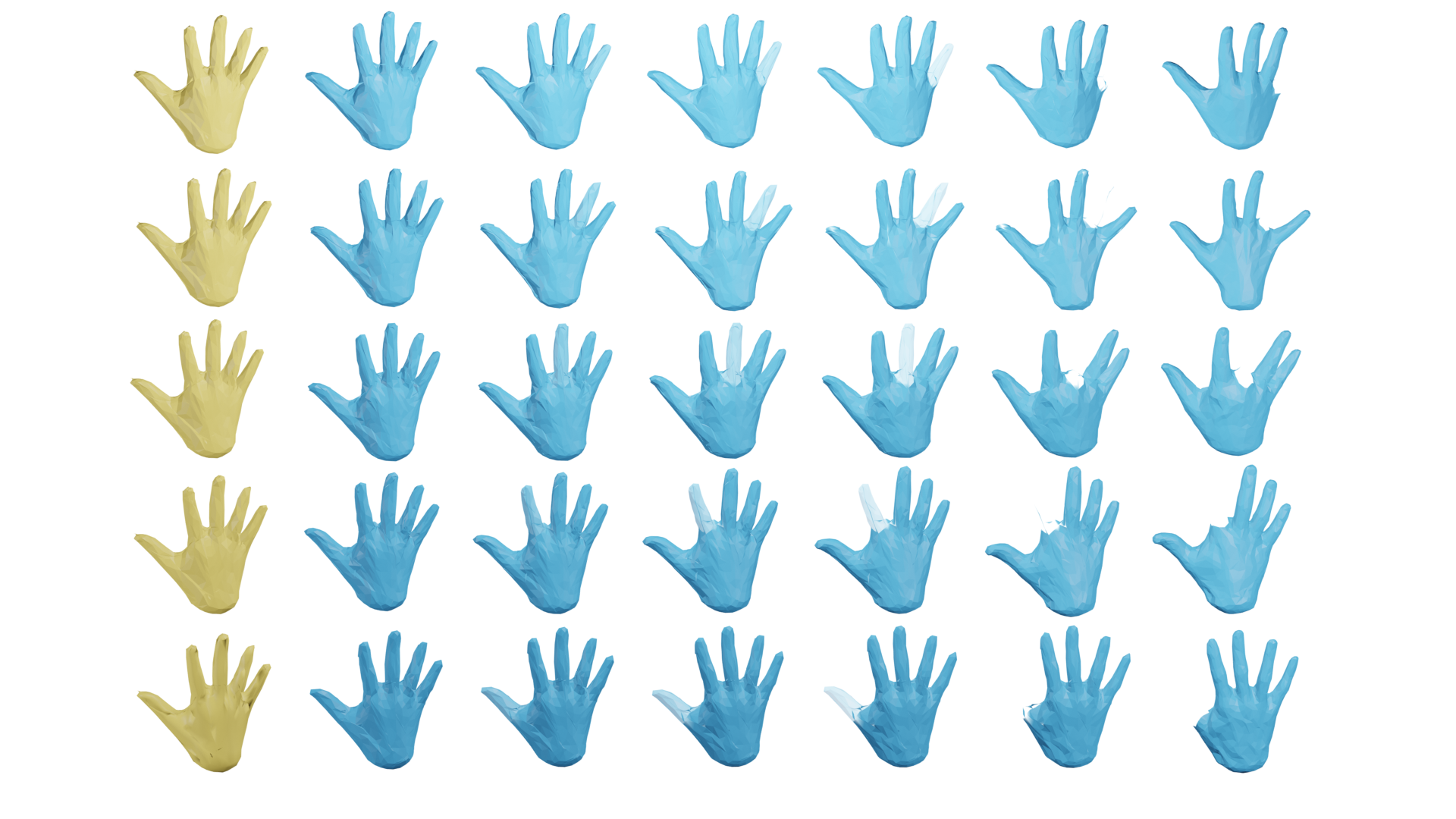}
    % \begin{tabular}{m{.5cm}m{2.45cm}m{2.4cm}m{2.25cm}m{2.35cm}m{2.45cm}m{2.6cm}}
    % &\phantom{hh}source&\phantom{hh}$q(0)$&\phantom{h}$q(1/3)$&\phantom{h}$q(2/3)$&\phantom{h} $q(1)$&target
    % \end{tabular}
    \caption{\small Geodesics between Karcher mean estimate (yellow on the left) and data points of the example in Figure~\ref{fig:kmean_partial} (turquoise on the right).}
    \label{fig:karcher_mean_geodesics}
\end{figure}

\begin{table}[h!]
\centering
    \begin{minipage}{.45\textwidth}
        \centering
        
\begin{tabular}{||c c c ||} 
 \hline 
Missing part & Incomplete hand & Complete hand \\
 & (Algorithm~\ref{alg:relaxed_match_weighted}) & (Algorithm~\ref{alg:relaxed_match}) \\ [0.5ex] 
 \hline
Thumb &  0.610 & 0.653 \\ 
Index & 0.687 & 0.708    \\ 
Middle Finger & 0.996 & 1.007\\ 
Ring Finger & 0.708 & 0.789 \\ 
Pinky & 0.642 & 0.799 \\ 
 \hline 
\end{tabular}
\vspace{.1cm}
    \caption{\footnotesize Geodesic distances between the Karcher mean estimate and data points.  \label{tab:karcher_mean_distances}}
        \centering
    \end{minipage}
\end{table}

% Distances

%           With Weights.    || Without Weights
% thumb: 0.6095199584960938  || 0.653314471244812
% index: 0.6867998838424683 || 0.7082830667495728             
% middle: 0.9959904551506042 || 1.0065170526504517
% ring: 0.7076889276504517 || 0.7890676259994507
% pinky: 0.6421091556549072 || 0.7988393306732178

% ------------------------ DATA SOURCES ---------------------------------------
\section*{Data sources}
\label{sec:data_sources}
The mesh data for our numerical simulations in this paper was obtained from several sources, including the meshes made available by Robert Sumner and Jovan Popovic from the Computer Graphics Group at MIT~\cite{sumner2004deformation}, by Wojtek Zbijewski from the Biomedical Engineering Department at JHU, by Boukhayma et al. from their open source implementation of~\cite{boukhayma20193d}, by the \href{https://www.morphosource.org/?locale=en}{MorphoSource} archive (\url{https://www.morphosource.org}), the TOSCA dataset~\cite{bronstein2008numerical}, the dynamic FAUST dataset~\cite{dfaust:CVPR:2017}, the CoMA dataset~\cite{COMA:ECCV18}, and the dataset of faces from~\cite{vlasic2004multilinear}.

% ------------------------ CONCLUSION ---------------------------------------
\section{Conclusion and future work}
\label{sec:conclusion}

In this paper, we introduced a mathematical framework and several numerical algorithms for the estimation of geodesics and distances induced by second-order elastic Sobolev metrics on the space of parametrized and unparametrized surfaces. We leveraged our surface matching algorithms to develop a comprehensive collection of routines for the statistical shape analysis of sets of $3$D surfaces, which includes algorithms to compute Karcher means, perform dimensionality reduction via multidimensional scaling and tangent PCA, and estimate parallel transport across surfaces. We also proposed to resolve the issue of partial matching constraints in the situation of missing data and inconsistent topologies through the additional estimation of a weight function defined on the source shape. A more in-depth quantitative evaluation and comparison of our method against other approaches for partial shape matching, such as~\cite{cosmo2016shrec, rodola2017partial} is left for future work.

Finally, we want to mention several limitations of the method presented in this article. First, parameter selection may be an important issue in this framework if no reasonable priors are available for the choice of the $H^2$-metric coefficients or the kernel scale used to compute the varifold relaxation term. As we illustrated in the numerical experiments, those can all have significant influence on the quality of registration and on the behavior of geodesics. While we use different practical strategies such as multiscale schemes to mitigate this issue, a subject of active current investigation is precisely to develop effective approaches to obtain such parameter estimates in a data-driven way, see~\cite{bauer2022elastic} for a recent preprint on this topic in the context of elastic metrics on curves.

A second potential limitation is the choice of initialization for the geodesic path, as the variational problems we tackle are non-convex. In this work, we typically initialize algorithms using a time constant path with either the source or target mesh. However, one could expect extra robustness if more adapted initializations are chosen, which could be computed for instance as the output of some other fast surface matching procedure.

Lastly, while the computational cost of our approach is favorable when compared to other Riemannian frameworks for shape analysis, it still involves running an optimization procedure with quadratic complexity at each iteration. As a result, the numerical pipelines of this paper might become somewhat impractical when working with high resolution meshes (e.g. $O(10^6)$ vertices or more) or for populations with a large number of subjects. One way to overcome this issue would be to leverage deep learning architectures, similar to related work in the case of functional data~\cite{nunez2021srvfnet, chen2021srvfregnet} or planar curves~\cite{hartman2021supervised, nunez2020deep}. This would reduce the computation of quantities such as distances and geodesics to a simple forward pass through a neural network trained from supervised data obtained from our intrinsic $H^2$-metric framework. Conversely, many existing neural network methods applied to surface mesh processing, such as the autoencoder models of~\cite{cosmo2020limp,COMA:ECCV18,huang2021arapreg}, attempt to learn a latent space representation of the surface dataset using various metric priors as regularization. We also plan to investigate in the future the effectiveness of second-order Sobolev metrics to regularize latent space representation learning of mesh autoencoders.

\bibliographystyle{plain}

\begin{thebibliography}{10}
\bibitem{audette2000algorithmic}
Michel~A Audette, Frank~P Ferrie, and Terry~M Peters.
\newblock An algorithmic overview of surface registration techniques for
  medical imaging.
\newblock {\em Medical image analysis}, 4(3):201--217, 2000.

\bibitem{bauer2018relaxed}
Martin Bauer, Martins Bruveris, Nicolas Charon, and Jakob M{\o}ller-Andersen.
\newblock A relaxed approach for curve matching with elastic metrics.
\newblock {\em ESAIM: Control, Optimisation and Calculus of Variations}, 25:72,
  2019.

\bibitem{bauer2012vanishing}
Martin Bauer, Martins Bruveris, Philipp Harms, and Peter~W Michor.
\newblock Vanishing geodesic distance for the {Riemannian} metric with geodesic
  equation the kdv-equation.
\newblock {\em Annals of Global Analysis and Geometry}, 41(4):461--472, 2012.

\bibitem{bauer2020srnfmatch}
Martin Bauer, Nicolas Charon, and Philipp Harms.
\newblock Srnfmatch.
\newblock Python package, 2020.

\bibitem{bauer2021numerical}
Martin Bauer, Nicolas Charon, Philipp Harms, and Hsi-Wei Hsieh.
\newblock A numerical framework for elastic surface matching, comparison, and
  interpolation.
\newblock {\em International Journal of Computer Vision}, 129(8):2425--2444,
  2021.

\bibitem{bauer2022elastic}
Martin Bauer, Nicolas Charon, Eric Klassen, Sebastian Kurtek, Tom Needham, and
  Thomas Pierron.
\newblock Elastic metrics on spaces of euclidean curves: Theory and algorithms.
\newblock {\em arXiv preprint arXiv:2209.09862}, 2022.

\bibitem{bauer2019metric}
Martin Bauer, Nicolas Charon, and Laurent Younes.
\newblock Metric registration of curves and surfaces using optimal control.
\newblock In {\em Handbook of Numerical Analysis}, volume~20, pages 613--646.
  Elsevier, 2019.

\bibitem{bauer2011sobolev}
Martin Bauer, Philipp Harms, and Peter~W Michor.
\newblock Sobolev metrics on shape space of surfaces.
\newblock {\em J. Geom. Mech.}, 3(4):389--438, 2011.

\bibitem{bauer2020fractional}
Martin Bauer, Philipp Harms, and Peter~W Michor.
\newblock Fractional sobolev metrics on spaces of immersions.
\newblock {\em Calculus of Variations and Partial Differential Equations},
  59(2):1--27, 2020.

\bibitem{bauer2021square}
Martin Bauer, Emmanuel Hartman, and Eric Klassen.
\newblock The square root normal field distance and unbalanced optimal
  transport, 2021.

\bibitem{beg2005computing}
M~Faisal Beg, Michael~I Miller, Alain Trouv{\'e}, and Laurent Younes.
\newblock Computing large deformation metric mappings via geodesic flows of
  diffeomorphisms.
\newblock {\em International journal of computer vision}, 61(2):139--157, 2005.

\bibitem{biasotti2016recent}
Silvia Biasotti, Andrea Cerri, Alex Bronstein, and Michael Bronstein.
\newblock Recent trends, applications, and perspectives in 3d shape similarity
  assessment.
\newblock In {\em Computer graphics forum}, volume~35, pages 87--119. Wiley
  Online Library, 2016.

\bibitem{dfaust:CVPR:2017}
Federica Bogo, Javier Romero, Gerard Pons-Moll, and Michael~J. Black.
\newblock Dynamic {FAUST}: {R}egistering human bodies in motion.
\newblock In {\em IEEE Conf. on Computer Vision and Pattern Recognition
  (CVPR)}, July 2017.

\bibitem{boukhayma20193d}
Adnane Boukhayma, Rodrigo~de Bem, and Philip~HS Torr.
\newblock 3d hand shape and pose from images in the wild.
\newblock In {\em Proceedings of the IEEE Conference on Computer Vision and
  Pattern Recognition}, pages 10843--10852, 2019.

\bibitem{bronstein2008numerical}
Alexander~M Bronstein, Michael~M Bronstein, and Ron Kimmel.
\newblock {\em Numerical geometry of non-rigid shapes}.
\newblock Springer Science \& Business Media, 2008.

\bibitem{bronstein2021geometric}
Michael~M Bronstein, Joan Bruna, Taco Cohen, and Petar Veli{\v{c}}kovi{\'c}.
\newblock Geometric deep learning: Grids, groups, graphs, geodesics, and
  gauges.
\newblock {\em arXiv preprint arXiv:2104.13478}, 2021.

\bibitem{byrd1995limited}
Richard~H Byrd, Peihuang Lu, Jorge Nocedal, and Ciyou Zhu.
\newblock A limited memory algorithm for bound constrained optimization.
\newblock {\em SIAM Journal on scientific computing}, 16(5):1190--1208, 1995.

\bibitem{carlsson2014topological}
Gunnar Carlsson.
\newblock Topological pattern recognition for point cloud data.
\newblock {\em Acta Numerica}, 23:289--368, 2014.

\bibitem{charlier2021kernel}
Benjamin Charlier, Jean Feydy, Joan Glaun{\`e}s, Fran{\c{c}}ois-David Collin,
  and Ghislain Durif.
\newblock {Kernel operations on the GPU, with autodiff, without memory
  overflows}.
\newblock {\em Journal of Machine Learning Research}, 22(74):1--6, 2021.

\bibitem{Charon2020fidelity}
N.~Charon, B.~Charlier, J.~Glaun\`{e}s, P.~Gori, and P.~Roussillon.
\newblock Fidelity metrics between curves and surfaces: currents, varifolds,
  and normal cycles.
\newblock In {\em Riemannian Geometric Statistics in Medical Image Analysis},
  pages 441 -- 477. Academic Press, 2020.

\bibitem{charon2013varifold}
Nicolas Charon and Alain Trouv{\'e}.
\newblock The varifold representation of nonoriented shapes for diffeomorphic
  registration.
\newblock {\em SIAM J. Imaging Sci.}, 6(4):2547--2580, 2013.

\bibitem{chen2021srvfregnet}
Chao Chen and Anuj Srivastava.
\newblock Srvfregnet: Elastic function registration using deep neural networks.
\newblock In {\em Proceedings of the IEEE/CVF Conference on Computer Vision and
  Pattern Recognition}, pages 4462--4471, 2021.

\bibitem{cosmo2020limp}
Luca Cosmo, Antonio Norelli, Oshri Halimi, Ron Kimmel, and Emanuele Rodola.
\newblock Limp: Learning latent shape representations with metric preservation
  priors.
\newblock In {\em European Conference on Computer Vision}, pages 19--35.
  Springer, 2020.

\bibitem{cosmo2016shrec}
Luca Cosmo, Emanuele Rodola, Michael~M Bronstein, Andrea Torsello, Daniel
  Cremers, and Y~Sahillioglu.
\newblock Shrec’16: Partial matching of deformable shapes.
\newblock {\em Proc. 3DOR}, 2(9):12, 2016.

\bibitem{crane2018discrete}
Keenan Crane.
\newblock Discrete differential geometry: An applied introduction.
\newblock {\em Notices of the AMS, Communication}, pages 1153--1159, 2018.

\bibitem{crane2013digital}
Keenan Crane, Fernando de~Goes, Mathieu Desbrun, and Peter Schr\"{o}der.
\newblock Digital geometry processing with discrete exterior calculus.
\newblock In {\em ACM SIGGRAPH 2013 Courses}, SIGGRAPH '13, New York, NY, USA,
  2013. Association for Computing Machinery.

\bibitem{edelsbrunner2022computational}
Herbert Edelsbrunner and John~L Harer.
\newblock {\em Computational topology: an introduction}.
\newblock American Mathematical Society, 2022.

\bibitem{edelstein2019enigma}
Michal Edelstein, Danielle Ezuz, and Mirela Ben-Chen.
\newblock Enigma: evolutionary non-isometric geometry matching.
\newblock {\em arXiv preprint arXiv:1905.10763}, 2019.

\bibitem{eisenberger2020hamiltonian}
Marvin Eisenberger and Daniel Cremers.
\newblock Hamiltonian dynamics for real-world shape interpolation.
\newblock In {\em European Conference on Computer Vision}, pages 179--196.
  Springer, 2020.

\bibitem{feydy2017optimal}
Jean Feydy, Benjamin Charlier, Fran{\c{c}}ois-Xavier Vialard, and Gabriel
  Peyr{\'e}.
\newblock Optimal transport for diffeomorphic registration.
\newblock In {\em International Conference on Medical Image Computing and
  Computer-Assisted Intervention}, pages 291--299. Springer, 2017.

\bibitem{fletcher2004principal}
P~Thomas Fletcher, Conglin Lu, Stephen~M Pizer, and Sarang Joshi.
\newblock Principal geodesic analysis for the study of nonlinear statistics of
  shape.
\newblock {\em IEEE transactions on medical imaging}, 23(8):995--1005, 2004.

\bibitem{grenander1996elements}
Ulf Grenander.
\newblock {\em Elements of pattern theory}.
\newblock JHU Press, 1996.

\bibitem{guigui2021numerical}
Nicolas Guigui and Xavier Pennec.
\newblock Numerical accuracy of ladder schemes for parallel transport on
  manifolds.
\newblock {\em Foundations of Computational Mathematics}, pages 1--34, 2021.

\bibitem{hartman2021supervised}
Emmanuel Hartman, Yashil Sukurdeep, Nicolas Charon, Eric Klassen, and Martin
  Bauer.
\newblock Supervised deep learning of elastic srv distances on the shape space
  of curves.
\newblock In {\em Proceedings of the IEEE/CVF Conference on Computer Vision and
  Pattern Recognition}, pages 4425--4433, 2021.

\bibitem{ho2013recursive}
Jeffrey Ho, Guang Cheng, Hesamoddin Salehian, and Baba Vemuri.
\newblock Recursive karcher expectation estimators and geometric law of large
  numbers.
\newblock In {\em Artificial Intelligence and Statistics}, pages 325--332.
  PMLR, 2013.

\bibitem{huang2021arapreg}
Qixing Huang, Xiangru Huang, Bo~Sun, Zaiwei Zhang, Junfeng Jiang, and
  Chandrajit Bajaj.
\newblock Arapreg: An as-rigid-as possible regularization loss for learning
  deformable shape generators.
\newblock In {\em Proceedings of the IEEE/CVF International Conference on
  Computer Vision}, pages 5815--5825, 2021.

\bibitem{iglesias2018shape}
Jos{\'e}~A Iglesias, Martin Rumpf, and Otmar Scherzer.
\newblock Shape-aware matching of implicit surfaces based on thin shell
  energies.
\newblock {\em Foundations of Computational Mathematics}, 18(4):891--927, 2018.

\bibitem{jermyn2012elastic}
Ian~H Jermyn, Sebastian Kurtek, Eric Klassen, and Anuj Srivastava.
\newblock Elastic shape matching of parameterized surfaces using square root
  normal fields.
\newblock In {\em European Conference on Computer Vision}, pages 804--817.
  Springer, 2012.

\bibitem{jermyn2017elastic}
Ian~H Jermyn, Sebastian Kurtek, Hamid Laga, and Anuj Srivastava.
\newblock Elastic shape analysis of three-dimensional objects.
\newblock {\em Synthesis Lectures on Computer Vision}, 12(1):1--185, 2017.

\bibitem{joshi2016surface}
Shantanu~H Joshi, Qian Xie, Sebastian Kurtek, Anuj Srivastava, and Hamid Laga.
\newblock Surface shape morphometry for hippocampal modeling in alzheimer's
  disease.
\newblock In {\em 2016 International Conference on Digital Image Computing:
  Techniques and Applications (DICTA)}, pages 1--8. IEEE, 2016.

\bibitem{kaltenmark2017general}
Irene Kaltenmark, Benjamin Charlier, and Nicolas Charon.
\newblock A general framework for curve and surface comparison and registration
  with oriented varifolds.
\newblock In {\em Computer Vision and Pattern Recognition (CVPR)}, 2017.

\bibitem{Kendall1999}
D.~G. Kendall, D.~Barden, T.~K. Carne, and H.~Le.
\newblock {\em {Shape and shape theory}}.
\newblock {Wiley Series in Probability and Statistics}. John Wiley \& Sons
  Ltd., Chichester, 1999.

\bibitem{kheyfets2000schild}
Arkady Kheyfets, Warner~A Miller, and Gregory~A Newton.
\newblock Schild's ladder parallel transport procedure for an arbitrary
  connection.
\newblock {\em International Journal of Theoretical Physics},
  39(12):2891--2898, 2000.

\bibitem{kilian2007geometric}
Martin Kilian, Niloy~J. Mitra, and Helmut Pottmann.
\newblock Geometric modeling in shape space.
\newblock {\em ACM Trans. Graphics}, 26(3), 2007.
\newblock {Proc. SIGGRAPH}.

\bibitem{klassen2019closed}
Eric Klassen and Peter~W Michor.
\newblock Closed surfaces with different shapes that are indistinguishable by
  the srnf.
\newblock {\em Archivum Mathematicum}, 56(2):107--114, 2020.

\bibitem{kurtek2014statistical}
Sebastian Kurtek, Chafik Samir, and Lemlih Ouchchane.
\newblock Statistical shape model for simulation of realistic endometrial
  tissue.
\newblock In {\em ICPRAM}, pages 421--428, 2014.

\bibitem{laga20214d}
Hamid Laga, Marcel Padilla, Ian~H. Jermyn, Sebastian Kurtek, Mohammed
  Bennamoun, and Anuj Srivastava.
\newblock 4d atlas: Statistical analysis of the spatio-temporal variability in
  longitudinal 3d shape data.
\newblock {\em IEEE Transactions on Pattern Analysis and Machine Intelligence},
  pages 1--1, 2022.

\bibitem{laga2017numerical}
Hamid Laga, Qian Xie, Ian~H Jermyn, and Anuj Srivastava.
\newblock {Numerical inversion of SRNF maps for elastic shape analysis of
  genus-zero surfaces}.
\newblock {\em IEEE Trans. Pattern Anal. Mach. Intell.}, 39(12):2451--2464,
  2017.

\bibitem{lang2012fundamentals}
Serge Lang.
\newblock {\em Fundamentals of differential geometry}, volume 191.
\newblock Springer Science \& Business Media, 2012.

\bibitem{liu1989limited}
Dong~C Liu and Jorge Nocedal.
\newblock On the limited memory bfgs method for large scale optimization.
\newblock {\em Mathematical programming}, 45(1-3):503--528, 1989.

\bibitem{matuk2020biomedical}
James Matuk, Shariq Mohammed, Sebastian Kurtek, and Karthik Bharath.
\newblock Biomedical applications of geometric functional data analysis.
\newblock In {\em Handbook of Variational Methods for Nonlinear Geometric
  Data}, pages 675--701. Springer, 2020.

\bibitem{memoli2011gromov}
Facundo M{\'e}moli.
\newblock Gromov--wasserstein distances and the metric approach to object
  matching.
\newblock {\em Foundations of computational mathematics}, 11(4):417--487, 2011.

\bibitem{mennucci2008properties}
ACG Mennucci, A~Yezzi, and G~Sundaramoorthi.
\newblock Properties of {Sobolev}-type metrics in the space of curves.
\newblock {\em Interfaces Free Bound.}, 10(4):423--445, 2008.

\bibitem{michor2005vanishing}
Peter~W Michor and David Mumford.
\newblock Vanishing geodesic distance on spaces of submanifolds and
  diffeomorphisms.
\newblock {\em Doc. Math}, 10:217--245, 2005.

\bibitem{michor2007overview}
Peter~W Michor and David Mumford.
\newblock An overview of the {Riemannian} metrics on spaces of curves using the
  {Hamiltonian} approach.
\newblock {\em Appl. Comput. Harmon. Anal.}, 23(1):74--113, 2007.

\bibitem{nardi2016geodesics}
Giacomo Nardi, Gabriel Peyr{\'e}, and Fran{\c{c}}ois-Xavier Vialard.
\newblock Geodesics on shape spaces with bounded variation and sobolev metrics.
\newblock {\em SIAM Journal on Imaging Sciences}, 9(1):238--274, 2016.

\bibitem{nunez2020deep}
Elvis Nunez and Shantanu~H Joshi.
\newblock Deep learning of warping functions for shape analysis.
\newblock In {\em Proceedings of the IEEE/CVF Conference on Computer Vision and
  Pattern Recognition Workshops}, pages 866--867, 2020.

\bibitem{nunez2021srvfnet}
Elvis Nunez, Andrew Lizarraga, and Shantanu~H Joshi.
\newblock Srvfnet: A generative network for unsupervised multiple diffeomorphic
  functional alignment.
\newblock In {\em Proceedings of the IEEE/CVF Conference on Computer Vision and
  Pattern Recognition}, pages 4481--4489, 2021.

\bibitem{ovsjanikov2012functional}
Maks Ovsjanikov, Mirela Ben-Chen, Justin Solomon, Adrian Butscher, and Leonidas
  Guibas.
\newblock Functional maps: a flexible representation of maps between shapes.
\newblock {\em ACM Transactions on Graphics (TOG)}, 31(4):1--11, 2012.

\bibitem{pennec2006intrinsic}
Xavier Pennec.
\newblock Intrinsic statistics on riemannian manifolds: Basic tools for
  geometric measurements.
\newblock {\em Journal of Mathematical Imaging and Vision}, 25(1):127--154,
  2006.

\bibitem{pennec2019riemannian}
Xavier Pennec, Stefan Sommer, and Tom Fletcher.
\newblock {\em Riemannian Geometric Statistics in Medical Image Analysis}.
\newblock Academic Press, 2019.

\bibitem{pierson2022riemannian}
Emery Pierson, Mohamed Daoudi, and Alice-Barbara Tumpach.
\newblock A riemannian framework for analysis of human body surface.
\newblock In {\em Proceedings of the IEEE/CVF Winter Conference on Applications
  of Computer Vision}, pages 2991--3000, 2022.

\bibitem{COMA:ECCV18}
Anurag Ranjan, Timo Bolkart, Soubhik Sanyal, and Michael~J. Black.
\newblock Generating {3D} faces using convolutional mesh autoencoders.
\newblock In {\em European Conference on Computer Vision (ECCV)}, pages
  725--741, 2018.

\bibitem{ren2018continuous}
Jing Ren, Adrien Poulenard, Peter Wonka, and Maks Ovsjanikov.
\newblock Continuous and orientation-preserving correspondences via functional
  maps.
\newblock {\em ACM Transactions on Graphics (ToG)}, 37(6):1--16, 2018.

\bibitem{rodola2017partial}
Emanuele Rodol{\`a}, Luca Cosmo, Michael~M Bronstein, Andrea Torsello, and
  Daniel Cremers.
\newblock Partial functional correspondence.
\newblock In {\em Computer graphics forum}, volume~36, pages 222--236. Wiley
  Online Library, 2017.

\bibitem{roussillon2019representation}
Pierre Roussillon and Joan~Alexis Glaun{\`e}s.
\newblock Representation of surfaces with normal cycles and application to
  surface registration.
\newblock {\em Journal of Mathematical Imaging and Vision}, 61(8):1069--1095,
  2019.

\bibitem{rumpf2015bvariational}
Martin Rumpf and Benedikt Wirth.
\newblock Variational methods in shape analysis.
\newblock {\em Handbook of Mathematical Methods in Imaging}, 2:1819--1858,
  2015.

\bibitem{rumpf2015variational}
Martin Rumpf and Benedikt Wirth.
\newblock Variational time discretization of geodesic calculus.
\newblock {\em IMA Journal of Numerical Analysis}, 35(3):1011--1046, 2015.

\bibitem{srivastava2011shape}
Anuj Srivastava, Eric Klassen, Shantanu~H Joshi, and Ian~H Jermyn.
\newblock Shape analysis of elastic curves in {Euclidean} spaces.
\newblock {\em IEEE Trans. Pattern Anal. Mach. Intell.}, 33(7):1415--1428,
  2011.

\bibitem{srivastava2016functional}
Anuj Srivastava and Eric~P Klassen.
\newblock {\em Functional and shape data analysis}.
\newblock Springer, 2016.

\bibitem{su2020shape}
Zhe Su, Martin Bauer, Stephen~C Preston, Hamid Laga, and Eric Klassen.
\newblock Shape analysis of surfaces using general elastic metrics.
\newblock {\em Journal of Mathematical Imaging and Vision}, 62:1087--1106,
  2020.

\bibitem{sukurdeep2019inexact}
Yashil Sukurdeep, Martin Bauer, and Nicolas Charon.
\newblock An inexact matching approach for the comparison of plane curves with
  general elastic metrics.
\newblock In {\em 2019 53rd Asilomar Conference on Signals, Systems, and
  Computers}, pages 512--516. IEEE, 2019.

\bibitem{sukurdeep2021new}
Yashil Sukurdeep, Martin Bauer, and Nicolas Charon.
\newblock A new variational model for shape graph registration with partial
  matching constraints.
\newblock {\em SIAM Journal on Imaging Sciences}, 15(1):261--292, 2022.

\bibitem{sumner2004deformation}
Robert~W Sumner and Jovan Popovi{\'c}.
\newblock Deformation transfer for triangle meshes.
\newblock {\em ACM Transactions on graphics (TOG)}, 23(3):399--405, 2004.

\bibitem{trappolini2021shape}
Giovanni Trappolini, Luca Cosmo, Luca Moschella, Riccardo Marin, Simone Melzi,
  and Emanuele Rodol{\`a}.
\newblock Shape registration in the time of transformers.
\newblock {\em Advances in Neural Information Processing Systems},
  34:5731--5744, 2021.

\bibitem{vaillant2005surface}
Marc Vaillant and Joan Glaun{\`e}s.
\newblock Surface matching via currents.
\newblock In {\em Biennial International Conference on Information Processing
  in Medical Imaging}, pages 381--392. Springer, 2005.

\bibitem{vialard2012diffeomorphic}
Fran{\c{c}}ois-Xavier Vialard, Laurent Risser, Daniel Rueckert, and Colin~J
  Cotter.
\newblock Diffeomorphic 3d image registration via geodesic shooting using an
  efficient adjoint calculation.
\newblock {\em International Journal of Computer Vision}, 97(2):229--241, 2012.

\bibitem{vlasic2004multilinear}
Daniel Vlasic, Matthew Brand, Hanspeter Pfister, and Jovan Popovic.
\newblock Multilinear models for face synthesis.
\newblock In {\em ACM SIGGRAPH 2004 Sketches}, page~56. ACM, 2004.

\bibitem{younes2010shapes}
Laurent Younes.
\newblock {\em Shapes and diffeomorphisms}, volume 171.
\newblock Springer Science \& Business Media, 2010.

\end{thebibliography}

\end{document}